\documentclass[twoside,11pt]{article}

%

\usepackage{jmlr2e}

\usepackage[utf8]{inputenc} 
\usepackage[T1]{fontenc}    
\usepackage{hyperref}       
\usepackage{url}            
\usepackage{booktabs}       
\usepackage{amsfonts}       
\usepackage{nicefrac}       
\usepackage{microtype}      


\bibliographystyle{unsrt}
\usepackage{graphicx}
\usepackage{amssymb}
\usepackage{amsmath}
\usepackage{bm}
\usepackage{dsfont}
\usepackage{float}
\usepackage{mathrsfs}
\usepackage{times}
\usepackage{xspace}
\usepackage{algpseudocode}
\usepackage{algorithm}
\usepackage{subcaption}

%

\newtheorem{thrm}{Theorem}

%

%
\newtheorem{lem}{Lemma}

%
\newtheorem{defin}{Definition}

\newtheorem{cor}{Corollary}
%
\newenvironment{mat}[1]{\left[\begin{array}{#1}}{\end{array}\right]}
\newcommand{\bmx}[1]{\begin{mat}{#1}}
\newcommand{\emx}{\end{mat}}

%

\newcommand{\gss}[3]{\mbox{\boldmath $#1$}_{#2}^{#3}}
\newcommand{\g}[1]{\mbox{\boldmath $#1$}}

%
\newcommand{\sbm}[1]{\mbox{\scriptsize $\g{#1}$}}
\newcommand{\sbms}[2]{\mbox{\scriptsize $\gss{#1}{#2}{}$}}

%





\ShortHeadings{}{Katselis, Beck and Srikant}
\firstpageno{3}

\begin{document}

\title{Mixing Times and Structural Inference for Bernoulli Autoregressive Processes}

\author{\name Dimitrios Katselis \email katselis@illinois.edu \\
       \addr Department of Electrical and Computer Engineering\\
       and Coordinated Science Lab\\
       University of Illinois, Urbana-Champaign\\
       Urbana, IL 61801, USA
        \AND
       \name Carolyn L.\ Beck \email beck3@illinois.edu \\
       \addr Department of Industrial and Enterprise Systems Engineering\\
             and Coordinated Science Lab\\
       University of Illinois, Urbana-Champaign\\
       Urbana, IL 61801, USA
        \AND
       \name R. \ Srikant \email rsrikant@illinois.edu \\
       \addr Department of Electrical and Computer Engineering\\
             and Coordinated Science Lab\\
       University of Illinois, Urbana-Champaign\\
       Urbana, IL 61801, USA
}


\editor{}

\maketitle

\begin{abstract}
We introduce a novel multivariate random process producing Bernoulli outputs per dimension, that can possibly formalize binary interactions in various graphical structures and can be used to model opinion dynamics, epidemics, financial and biological time series data, etc.  We call this a \emph{Bernoulli Autoregressive Process} (BAR). A BAR process models a discrete-time vector random sequence of $p$ scalar Bernoulli processes with autoregressive dynamics and corresponds to a particular Markov Chain. The benefit from the autoregressive dynamics is the description of a $2^p\times 2^p$ transition matrix by at most $pd$ effective parameters for some $d\ll p$ or by  two sparse  matrices of dimensions $p\times p^2$ and $p\times p$, respectively, parameterizing the transitions. Additionally, we show that the BAR process mixes rapidly, by proving that the mixing time is $O(\log p)$. The hidden constant in the previous mixing time bound depends explicitly on the values of the chain parameters and implicitly on the maximum allowed in-degree of a node in the corresponding graph.   For a network with $p$ nodes, where each node has in-degree at most $d$ and  corresponds to a scalar Bernoulli process generated by a BAR, we provide a  greedy algorithm that can efficiently learn 
the structure of the underlying directed graph with a sample complexity proportional to the mixing time of the BAR process. The sample complexity of the proposed algorithm is nearly order-optimal as it is only a $\log p$ factor away from an information-theoretic lower bound.  We present simulation results illustrating the performance of our algorithm in various setups, including a model for a biological signaling network.
\end{abstract}

\begin{keywords}
  Autoregressive, Glauber dynamics, mixing time, networked process, Markov chain, structural learning. 
\end{keywords}

\section{Introduction}

Dynamical systems which evolve over networks are ubiquitous: examples include epidemic and opinion dynamics over social
networks, gene regulatory networks, and stock/option price dynamics in financial markets \cite{bbv08, bim10,bvn11, migs13, qkc15, stb15}.  We model the interactions between the nodes in a network of this type using directed edges, where an edge from node $A$ to node $B$ indicates that the state of node $A$ at one time instant affects the state of node $B$ at the next time instant. Our goal is to infer such a directed graph from time series data. To this end, the mixing properties of the corresponding dynamical system become critical in determining sufficient sample complexities for consistent structure estimators. 

Prior to performing inference on dynamical system data, one has to select a model class for the underlying dynamics.   Modeling the dynamics of
a multivariate process is a core subproblem  of \emph{system identification} \cite{gp77,l99}. The work in this paper can be  thought as laying the groundwork for performing \emph{discrete system identification} in the class of \emph{Bernoulli Autoregressive Processes} (BAR) that are defined later and determining their mixing properties. 
Traditionally, the identification literature deals with the problem of estimating the unknown parameters  of a system model directly; however, in the machine learning literature, the problem is broken down into two steps: first, structure learning and second, parameter estimation. Structure learning refers to the problem of estimating whether each parameter is positive, negative, or zero. Here, we focus only on the structure learning problem;  parameter estimation is typically a simpler problem once the structure is known. When the state variables take continuous values, structure learning can be related to Lasso-type sparse inference ideas (see \cite{bim10}) or more traditional system-theoretic filtering ideas \cite{migs13}. However, to the best of our knowledge, these ideas do not directly apply to models where the state variables take on discrete values, as in our BAR model.

A related approach to discrete system identification is what is known as \emph{causal network inference}. Historically, this problem is linked to the notion of \emph{Granger causality} \cite{g69}. The goal in  \cite{g69} was the description of a hypothesis test to determine whether a time series can be used to improve the predictability of another time series. The problem was initially formalized to tackle the case of linear dynamics. Granger causality has been more recently incorporated and extended in information theory through the notion of \emph{directed information}.   
Following the pioneering work of Marko \cite{m73}, the notion of directed information was first introduced by Massey \cite{m90}, as a measure of the information flow from one random sequence to another in a synchronized fashion. Later on, Kramer introduced the notion of causal conditioning in probability distributions. Connecting the latter with directed information, Kramer was able to analyze the capacity of systems with feedback \cite{k98,k03}. Whenever there is feedback between the input and the output of a system, the \emph{directivity} of information flow becomes significant. 
Very recently, directed information has been used to perform causal network inference of networked dynamical processes \cite{qkc15}. The approach is  very generally applicable, but here we are interested in developing algorithms, which have good sample and computational complexities when applied to a specific model class.

If the underlying model does not have dynamics, i.e., the state of the system at one time instant does not affect the state of the system at the next time instant, then the discrete system identification problem reduces to the problem of learning graphical models from independent samples. Graphical models constitute a powerful statistical tool used to represent joint probability distributions, with the underlying graph dictating the way that these distributions factorize.   The associated graphs are most often undirected giving rise to Markov random fields, while the data are often considered to be i.i.d. \cite{athw12,afw12,b15,w15}. The associated graphs can also be directed and acyclic giving rise to Bayesian networks, which possess a different set of properties from Markov random fields \cite{w15}. The graphical structure affects the computational complexity associated with different statistical inference tasks such as computing marginals, posterior probabilities, maximum a posteriori estimates and sampling from the corresponding distributions. The absence of an edge in such graphs represents conditional independence between two nodes or random variables given the rest of the network. Our problem can be thought as that of learning a directed network in which the output at one time instant becomes the input to the next time instant.

\subsection{Main Results} 

In this paper,  
we introduce a \emph{novel} multivariate dynamical process producing Bernoulli outputs at each node in a network, that can possibly formalize binary interactions in various graphical structures. We call this a \emph{Bernoulli Autoregressive Process}. A BAR models a Bernoulli vector process with linear dynamics imposed on the parameters of the associated Bernoulli random variables and corresponds to a special form of  Markov chain with a \emph{directed} underlying graphical structure. Assuming $p$ nodes in the network, we first prove that any BAR process mixes very rapidly by showing that  $t_{mix}(\theta)=O(\log p)$ for any $\theta\in (0,1)$. Motivated by the operation of local Markov chains including the Glauber dynamics, we also introduce in Section \ref{sec:Comparison} two BAR random walks on the hypercube and under a column-substochasticity assumption on the dynamics,  we prove that $t_{mix}(\theta)=O(p\log p)$ in this case. 
If each node has in-degree at most $d$ and corresponds to a scalar Bernoulli process generated by a BAR,
we provide a greedy algorithm that can learn 
the structure of the underlying directed graph with computational complexity of order  $O(np^2)$ for a sufficient number of samples $n$ proportional to the mixing time of the BAR process.  The aforementioned structure estimator is shown to be nearly order-optimal requiring a sample complexity that is only a multiple of $\log p$ away from a lower information-theoretic bound. 

\subsection{Other Related Work}
 
In their influential paper \cite{cl68}, Chow and Liu showed that learning a tree-structured Markov random field can be achieved with time complexity of order $p^2$, when the graph has $p$ nodes. Very recently, Bresler provided a method to learn the structure of Ising models with $p$ nodes of degree at most $d$ in time $c(d) p^2\log p$, where $c(d)$ is a constant depending \emph{doubly-exponentially} on $d$ and the range of interaction strengths in the Ising model \cite{b15}. \emph{Exponential} dependence on $d$ is unavoidable in binary graphical models as it has been shown in \cite{sw12}. \cite{cl68} and \cite{b15} perform structure learning of graphical models based on i.i.d. samples. In the more relevant work \cite{bgs15}, the problem of learning undirected graphical models based on data generated by the Glauber dynamics is examined. The Glauber dynamics is a special form of a reversible Markov chain and it is often used in the context of Markov Chain  Monte Carlo algorithms to sample from a distribution of interest. The key observation in \cite{bgs15} is that the problem of learning graphs by observing the Glauber dynamics is computationally tractable. Our work differs from the assumptions in \cite{bgs15} in that the corresponding graph for a BAR process is \emph{directed} and at any time instant the individual states of ultimately \emph{all} nodes can be updated. The problem of learning the graphical structure of a general Markov time series is examined in \cite{crvs13}. The proposed approach is relevant to the notion of causal entropy, which is very similar in nature to directed information. The corresponding sample complexity depends logarithmically on the dimension of the process and linearly on the mixing time, while the computational complexity is $O(p^4)$. The problem of learning from epidemic cascades has been considered in \cite{ns12,rbs11} and \cite{ml10}, while a number of papers have studied the problem of learning functions or concepts by observing Markov chain sample paths \cite{av90, bfh94,g03}. Finally, a relevant model to the BAR is the so-called ALARM introduced in \cite{al13}. ALARM differs from the BAR in the use of the logistic function in defining the transition probabilities, which allows linear (positive and negative) coefficients in the autoregressive dynamics. Nevertheless, the interpretability of the model parameters is not as straighforward as in the BAR model. In addition, we can easily devise examples of BAR chains in which the transitions cannot be exactly described by the ALARM model.  Moreover, to the best of our knowledge, no mixing time analysis of the ALARM model exists, while the proposed algorithm for recovering the structure of the network is based on a straightforward application of $\ell_1-$regularization, i.e., Lasso and group-Lasso. It is well-known that convex optimization approaches like Lasso and group-Lasso for model selection have very poor performance, when the observed data have temporal dependencies as in our case \cite{wr09,zy06}. Further analysis details would be of interest, e.g., recovery guarantees, sample complexity and mixing properties of the ALARM model. 

\subsection{Organization}

Section \ref{sec:ourContr} informally presents the contributions of this paper. The BAR model is introduced in Section \ref{sec:BAR}.  Section \ref{sec:BARodLearn} formulates the BAR structure learning problem, while the proposed BAR structure observer is formalized. The main results in this paper and their proofs are presented in Sections \ref{sec:MainResults}, \ref{sec:ProofMainThm1} and \ref{sec:ProofMainThm2}. Section \ref{sec:LowerInf} provides a lower information-theoretic bound on the necessary sample complexity for any BAR structure estimator, verifying that the proposed algorithm is nearly order-optimal. Random walk variants of the BAR model along with their mixing properties are provided in Section \ref{sec:Comparison}, completing the palette of BAR processes.  Finally, numerical examples are provided in Section \ref{sec:sims} and  the paper is concluded in Section \ref{sec:concl}.   

\section{Main Contributions: A Conceptual Overview}
\label{sec:ourContr}

In this section, we informally discuss and provide intuitive interpretations of our results. The graphical structure of a BAR process can be visualized as in Fig. \ref{fig:1a}, where for simplicity we assume that $p=5$. Each node corresponds to a scalar Bernoulli random process.  The arrows represent causal relationships between the nodes. The problems that we consider are to evaluate the mixing time of any BAR process and to identify the parental sets of all nodes in the graph, i.e., the set of nodes whose states at one time instant affect the state of any given node at the next time instant. We assume that the in-degree of each node is at most $d$. Our main results are the following:

\begin{figure}[t!]
    \centering
    \begin{subfigure}[t]{0.5\textwidth}
        \centering
        \includegraphics[height=0.8in]{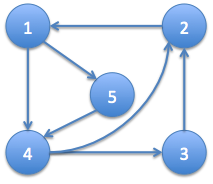}
            \captionsetup{justification=centering}
        \caption{BAR problem setup, $p=5$.}
        \label{fig:1a}
    \end{subfigure}%
    ~ 
    \begin{subfigure}[t]{0.5\textwidth}
        \centering
        \includegraphics[height=0.8in]{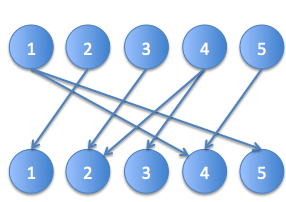}
          \caption{Equivalent BAR problem setup, $p=5$.}
   	 \label{fig:1b}
    \end{subfigure}
    \caption{}
\end{figure}

\begin{thrm}\label{thm:BARMixing}(BAR, Mixing Time-informal)
Consider a BAR process corresponding to an arbitrary graph with $p$ nodes, where each node has in-degree at most $d$. This process corresponds to a special kind of Markov chains with a directed graphical structure. For any $\theta\in (0,1)$, the BAR process mixes rapidly with $t_{mix}(\theta)=O(\log p)$. The hidden terms in $O(\cdot)$ notation depend explicitly on $\theta$ and the parameter values defining the BAR model and implicitly on $d$. Moreover, the BAR random walks corresponding to the same network also mix rapidly with $t_{mix}(\theta)=O(p\log p)$, when the dynamics satisfy a column-substochasticity assumption.  The hidden terms in $O(\cdot)$ notation depend explicitly on $\theta$, the parameter values defining the BAR random walk, the probability of being lazy and implicitly on $d$ 
\end{thrm}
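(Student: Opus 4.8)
The plan is to prove both bounds by coupling. For the synchronous BAR chain I would use the fact established when the model is defined in Section~\ref{sec:BAR}: conditioned on the state $X(t)$, the coordinates $X_1(t+1),\dots,X_p(t+1)$ are independent, with $X_i(t+1)\sim\mathrm{Bernoulli}\big(\pi_i(X(t))\big)$ where $\pi_i$ is an affine (more generally, a low-degree) function of the states of the at most $d$ parents of $i$. Fix two initial distributions $\mu,\nu$ and run the Markovian coupling $(X(t),Y(t))$ in which every coordinate of both copies is drawn from a common source of randomness and the per-coordinate rule is the maximal coupling of $\mathrm{Bernoulli}(\pi_i(X(t)))$ and $\mathrm{Bernoulli}(\pi_i(Y(t)))$; this is a legitimate coupling because of the conditional independence across $i$, it is absorbing at $S_t=\emptyset$, and it satisfies $\Pr[X_i(t+1)\ne Y_i(t+1)\mid\mathcal F_t]=|\pi_i(X(t))-\pi_i(Y(t))|\le\sum_j A_{ij}\,\mathbf 1\{j\in S_t\}$, where $S_t=\{i:X_i(t)\ne Y_i(t)\}$ is the disagreement set and $A_{ij}$ is the absolute influence strength of $j$ on $i$ (zero off the edge set, and the discrete Lipschitz constant of $\pi_i$ in the pairwise-interaction variant).

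Writing $u_t\in\mathbb R^p$ for the vector $u_t(i)=\Pr[i\in S_t]$, taking expectations gives the entrywise recursion $u_{t+1}\le A u_t$, hence $u_t\le A^t u_0\le A^t\mathbf 1$ and $\mathbb E|S_t|=\mathbf 1^\top u_t\le\mathbf 1^\top A^t\mathbf 1$. The crux is to show $\mathbf 1^\top A^t\mathbf 1\le p\,\rho^t$ for some $\rho<1$ depending only on the BAR parameters and (implicitly) on $d$. This follows from the defining constraints of the model: since each $\pi_i$ must take values in $[0,1]$ on all of $\{0,1\}^p$, one gets $\sum_j A_{ij}\le 1$ automatically, with strict slack $\rho<1$ exactly when $\pi_i$ does not attain both endpoints $0$ and $1$ --- the slack being the ``fresh-sample'' probability of node $i$, and the influence budget $\sum_j A_{ij}$ being shared among its $\le d$ parents, which is how $d$ enters the constant. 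Then $\|A^t\|_\infty\le\rho^t$, so $\mathbf 1^\top A^t\mathbf 1\le p\,\rho^t$, and since $d_{TV}(\mu P^t,\pi)\le\Pr[S_t\ne\emptyset]\le\mathbb E|S_t|\le p\,\rho^t$ (a contracting coupling from arbitrary starts also yields existence and uniqueness of $\pi$), choosing $t\ge\log(p/\theta)/\log(1/\rho)$ gives $t_{mix}(\theta)=O(\log p)$, with the hidden factor $1/\log(1/\rho)$ explicit in $\theta$ and the parameters and implicit in $d$ through $\rho$.

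For the BAR random walks of Section~\ref{sec:Comparison} I would instead invoke path coupling (Bubley--Dyer) on the hypercube with the Hamming metric, whose diameter is $p$. It suffices to bound the expected Hamming distance after one lazy single-site step applied to two states $X,Y$ differing only in coordinate $v$. Choosing the update site uniformly: with probability $\approx 1/p$ the site is $v$ itself, and the maximal coupling of that site's Bernoulli update makes the two copies coalesce there except with a probability bounded away from $1$ by the laziness/slack; with probability $\approx 1/p$ the site is a child $i$ of $v$, creating a new disagreement with probability at most $A_{iv}$; every other choice leaves the distance unchanged. Hence $\mathbb E[d(X_1,Y_1)]\le 1+\tfrac1p\big(\sum_i A_{iv}-c\big)$ for a constant $c>0$ coming from laziness, and the \emph{column}-substochasticity hypothesis $\sum_i A_{iv}\le 1-\varepsilon$ forces the bracket to be a fixed negative number, so $\mathbb E[d(X_1,Y_1)]\le 1-\Theta(1/p)$. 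Path coupling then yields $t_{mix}(\theta)=O\!\big(p\log(p/\theta)\big)$, the constant depending on $\theta$, the parameters, the laziness probability, and (through the per-step coalescence constant $c$) implicitly on $d$.

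I expect the main obstacle to be the one-step estimate for the random-walk variants rather than the synchronous case: for the synchronous chain the contraction factor $\rho<1$ is essentially forced by the probability-simplex constraints and is automatically uniform in $p$ (it is a per-node quantity), whereas for the single-site chains it is \emph{column}-substochasticity --- the total out-influence $\sum_i A_{iv}$, a more delicate lever than the automatic row bound --- that must be damped, and one has to verify carefully that laziness supplies a margin large enough that the contraction coefficient is genuinely $1-\Theta(1/p)$ and not merely $1-o(1/p)$, since otherwise the path-coupling conclusion degrades. A secondary point to check is that the coordinatewise maximal coupling indeed gives the claimed per-site disagreement probability uniformly over the current state, for both the affine and the pairwise-interaction versions of the model.
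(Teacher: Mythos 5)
Your argument for the synchronous chain is the paper's argument: the same monotone/maximal coupling driven by shared per-coordinate uniforms, the same per-site bound $\Pr[X_i^{(t+1)}\neq Y_i^{(t+1)}\mid\mathcal F_t]\le\sum_j a_{ij}\mathbf 1\{j\in S_t\}$, and the same observation that the row-sum slack $b_{ii}>0$ forces $\rho=\max_i\sum_j a_{ij}<1$. The only divergence is your finish: you pass from $u_{t+1}\le A u_t$ to $\Pr[S_t\neq\emptyset]\le\mathbf 1^\top A^t\mathbf 1\le p\rho^t$ by a union bound (first-moment method), whereas the paper instead bounds $1-\prod_i(1-\cdot)$ via a dedicated product-expectation lemma (Lemma~\ref{lem:CoreLemma}) plus Bernoulli's inequality, ending with $\frac{p}{1-\rho}\rho^n$. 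Your route is shorter and gives a marginally better constant; both yield $t_{mix}(\theta)=O(\log p)$ with the stated dependence on the parameters.

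For the random walks you also land on the paper's method (Bubley--Dyer path coupling in the Hamming metric, diameter $p$), and your one-step inequality $\mathbb E[d(X_1,Y_1)]\le 1+\frac1p\bigl(\sum_i A_{iv}-c\bigr)$ is correct with $c=1$. But you misattribute the source of $c$: it does \emph{not} come from laziness. In the non-lazy walk, conditioning on the update site being the disagreement coordinate $v$ (probability $1/p$), the shared-uniform coupling coalesces that coordinate except with probability $|\pi_v(X)-\pi_v(Y)|\le a_{vv}$, so the expected change is $a_{vv}-1$; summing with the $+a_{iv}$ contributions from $i\neq v$ gives $\frac1p\bigl(\sum_i a_{iv}-1\bigr)$, and column-substochasticity makes this a fixed negative multiple of $1/p$. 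Laziness only \emph{dilutes} this contraction by the factor $1-\varphi$ (the paper's bound is $p\log p/\bigl((1-\varphi)(1-\max_j\sum_i a_{ij})\bigr)$); it supplies no margin, and trying to extract the margin from $\varphi$ --- your self-identified ``main obstacle'' --- would fail for the non-lazy walk, which has $\varphi=0$ and still mixes in $O(p\log p)$. With that correction your outline matches the paper's proof of Theorem~\ref{thm:BARrwMxing1}.
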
 

\begin{thrm}\label{thm:BARInf}(BAR, Algorithm Performance-informal)
Consider the BAR model on an arbitrary graph with $p$ nodes, where each node has in-degree at most $d$.  Given $n=h(d)t_{mix}(\theta)\log p=\tilde{h}(d)\log^2 p$ samples from the model for some $\theta\leq 1/8$,  it is possible to learn the underlying graph using  $O(np^2)$  computations. Here, $h(d)$ is a function of $d$ and the parameter values defining the BAR model. Moreover, since $t_{mix}(\theta)=O(\log p)$, $\tilde{h}(d)$ corresponds to $h(d)$ having absorbed the mixing bound constant. 
\end{thrm}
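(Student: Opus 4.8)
The plan is to split the argument into a deterministic combinatorial core and a probabilistic core. The deterministic core will show that the greedy estimator of Section~\ref{sec:BARodLearn} recovers the parent set $\partial u$ of every node $u$ exactly, provided that every empirical quantity it inspects is accurate to within a tolerance $\delta/4$, where $\delta=\delta(d,\mathrm{params})>0$ is a model‑dependent influence gap. The probabilistic core will show that, with $n=h(d)\,t_{mix}(\theta)\log p$ samples drawn from a single trajectory, all these quantities are \emph{simultaneously} that accurate with probability at least $1-1/p$, using the rapid mixing from Theorem~\ref{thm:BARMixing}. Recall the key structural fact: conditioned on $X(t)$, the coordinates $X_1(t+1),\dots,X_p(t+1)$ are independent and $\Pr[X_u(t+1)=1\mid X(t)=x]$ depends on $x$ only through $x_{\partial u}$; so learning $\partial u$ is exactly the problem of identifying the relevant coordinates of this function from observed transitions. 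The estimator (in the greedy/prune spirit of the Ising and Glauber‑dynamics structure‑learning literature, \cite{b15,bgs15}, adapted to the directed, parallel‑update setting) builds, for each $u$, a candidate set $S$: each round it adds the coordinate $v\notin S$ whose estimated conditional influence on $X_u(\cdot+1)$ given $X_S(\cdot)$ is largest, halting when no influence exceeds a threshold $\tau=\delta/2$, and then pruning each $v\in S$ whose influence given $S\setminus\{v\}$ falls below $\tau$.

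For the deterministic core I would first establish two structural facts. (i) \emph{Detectability:} whenever $S\not\supseteq\partial u$, some missing parent $v^\star\in\partial u\setminus S$ has conditional influence at least $\delta$ under some configuration of $X_S$; quantifying $\delta$ in terms of the smallest edge weight of the BAR and of $\min_{x,u}\Pr[X_u(t+1)=1\mid X(t)=x]$ (bounded away from $0$ and $1$ by the model's non‑degeneracy, which also lower‑bounds the relevant conditional probabilities $\pi(X_S=x_S)$) is where $h(d)$ enters. (ii) \emph{Redundancy carries no influence:} if $S\setminus\{v\}\supseteq\partial u$ then $v$ has exactly zero conditional influence given $S\setminus\{v\}$, by the conditional‑independence structure above. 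A potential‑function argument — e.g.\ tracking $I\big(X_u(t+1);X_S(t)\big)$, which rises by at least a fixed $c(d)>0$ each time a coordinate of influence $>\tau$ is added and is capped by $\log 2$ — bounds $|S|$ at termination by some $L(d)$. Combining (i), (ii) and this bound: if every estimate is within $\delta/4$ of the truth, the add‑phase cannot stop before $S\supseteq\partial u$ and ends with $|S|\le L(d)$, after which the prune‑phase removes precisely the non‑parents, so $\widehat{\partial u}=\partial u$.

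The probabilistic core reduces to one uniform concentration statement. Every quantity the estimator inspects is an empirical average $\frac1n\sum_t g(X(t),X(t+1))$ of a bounded function $g$ of one transition of the pair process $\big(X(t),X(t+1)\big)_t$, which is itself a Markov chain inheriting the mixing bound of Theorem~\ref{thm:BARMixing} up to constants. A concentration inequality for (not necessarily reversible) Markov chains, of Chernoff/Bernstein type with exponent $\Omega(n\varepsilon^2/t_{mix})$, then gives $\Pr\big[\,|\widehat g-\mathbb{E}_\pi[g]|>\delta/4\,\big]\le 2\exp\!\big(-\Omega(n\delta^2/t_{mix}(\theta))\big)$, and a union bound over all tuples $(u,v,S,x_S)$ with $|S|\le L(d)$ — at most $p^{O(1)}2^{L(d)}$ of them, so the log of their number is $O_d(\log p)$ — forces $n=\Omega\big(h(d)\,t_{mix}(\theta)\log p\big)$ for the good event to hold with probability $\ge 1-1/p$; substituting $t_{mix}(\theta)=O(\log p)$ from Theorem~\ref{thm:BARMixing} yields $n=\tilde h(d)\log^2 p$, and the restriction $\theta\le 1/8$ is what keeps the non‑stationarity bias of a single trajectory small relative to the tolerance $\delta/4$ in that bound. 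The computational cost is then immediate: for each of the $p$ nodes the add‑phase runs $\le L(d)$ rounds, each a single $O(np)$ scan of the samples updating counters for the $\le p$ candidate coordinates (and their $O_d(1)$ configurations), and the prune‑phase is cheaper, for a total of $O(L(d)\,np^2)=O(np^2)$. The step I expect to be the main obstacle is the quantitative version of fact (i): proving that an as‑yet‑unidentified parent is always detectable under \emph{some} conditioning on the current set $S$ — not merely on the empty set — and tracking how $\delta(d,\mathrm{params})$, hence $h(d)$, degrades as the in‑degree bound $d$ (and thus the correlations that conditioning can induce through $\pi$) grows; closely tied to this is verifying the non‑degeneracy of the stationary distribution needed to make conditional influences well defined and empirically estimable.
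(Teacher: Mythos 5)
Your probabilistic core matches the paper's: both reduce everything to uniform concentration of empirical transition statistics via a Chernoff-type bound for Markov chains with exponent $\Omega(n\varepsilon^2/t_{mix}(\theta))$ (the paper invokes such a bound for $\theta\le 1/8$, applies it to both $\{\gss{X}{n}{}\}$ and the pair chain $\{(\gss{X}{n}{},\gss{X}{n-1}{})\}$, and union-bounds over $O(p^2)$ pairs plus $2^{d+1}p\binom{p}{d}$ set-conditioned events), and your computational accounting is consistent. But your algorithm and deterministic core take a genuinely different route: you propose a Bresler-style greedy add/prune loop that conditions on a \emph{growing} set $S$, whereas the paper's observer first ranks the purely pairwise influences $\nu_{m|l}=P(X_m^{+1}=1|X_l=1)-P(X_m^{+1}=1|X_l=0)$ and keeps the top $d$ in one shot, then trims by examining all $2^d$ conditional probabilities $P(X_m^{+1}=1|\gss{X}{\hat{\mathcal{S}}(m)}{}=\gss{x}{\hat{\mathcal{S}}(m)}{})$ and exploiting the fact that these take values separated by at least $a_{min}$ and that non-parents appear with both polarities among the maximizers (which also yields the edge signs, a part of the output your sketch does not address).

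The genuine gap is your detectability fact (i), which you correctly flag as the main obstacle but leave unresolved --- and it cannot be resolved unconditionally. Because $P(X_u^{+1}=1\mid \g{X}=\g{x})=\sum_{j\in\mathcal{S}(u)}a_{uj}f_u(x_j)+b_u\rho_w$ is linear, the conditional influence of a missing parent $v$ given $X_S=x_S$ equals $\pm a_{uv}$ plus cross terms of the form $a_{uj}\bigl(E[f_u(X_j)\mid X_v=1,X_S=x_S]-E[f_u(X_j)\mid X_v=0,X_S=x_S]\bigr)$ over the other unconditioned parents $j$, and adversarial stationary correlations can cancel the $a_{uv}$ term (conversely, non-parents can carry nonzero influence). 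The paper does not prove detectability; it \emph{assumes} it, via the BAR Identifiability Condition, which bounds the ``harmful'' pairwise correlations so that $|\nu_{m|l}|\ge\chi_m+|\nu_{m|l'}|$ for every parent $l$ and non-parent $l'$, and the formal Theorem~\ref{thm:ExtendedBARObs} is stated conditionally on it. Your plan needs an analogous separation assumption for set-conditioned influences together with a proof that it is preserved as $S$ grows (the Bresler-style mutual-information potential argument controls $|S|$ but not this), so as written the deterministic core does not go through. Your redundancy fact (ii) is fine --- it is exactly the Markov-property observation the paper makes when motivating $\nu_{i|j}$.
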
 

The \emph{parental set} or \emph{parental neighborhood} of each node in the graph is separately estimated. To obtain the necessary accuracy in the required statistics for the proposed algorithm, the order of the sample complexity can be proved to be $h(d)t_{mix}(\theta)\log p$. To decide the edge set of the graph, i.e., the existence of arrows and their orientations, the algorithm has to initially examine all possible pairs of nodes separately, the number of which is $p^2$. Given the required computations per sample, it turns out that the complexity of the algorithm is $O(np^2)$ due to a constant upper bound imposed on $d$ by the definition of the BAR model, as we will see shortly. Further, the proposed algorithm for learning the structure of a BAR process can determine if a node affects another node \emph{positively} or \emph{negatively}.

\section{Bernoulli Autoregressive Processes}
\label{sec:BAR}    

We consider a directed graph $\mathcal{G}=(\mathcal{V},\mathcal{E})$ with $p$ nodes (Fig. \ref{fig:1a}). We can associate with this graph a bipartite graph, where the two parts have the same vertex set $\mathcal{V}$ (Fig. \ref{fig:1b}). For each node $u$ in the upper part we have directed links to the lower part only to those nodes that $u$ points to or \emph{causally affects}. Although Fig. \ref{fig:1b} is illuminating, we will focus on Fig. \ref{fig:1a} from now on and we will consider a directed graph $\mathcal{G}=(\mathcal{V},\mathcal{E})$, the topology of which we would like to infer based on a particular form of Bernoulli time series data to be specified shortly.   

The state of each node in $\mathcal{V}$ is assumed to be a Bernoulli random variable (r.v.) $X_i, i=1,2,\ldots, p$. The state vector $\g{X}=[X_1, X_2, \ldots, X_p]^T$ is considered to be observable. We will denote a particular realization of $\g{X}$ by $\g{x}$. Assuming that the state vector of $\mathcal{V}$ at time instant $k$ is $\gss{X}{k}{}$, the BAR updates the state vector as follows: 
\begin{align}\label{eq:BAR}
\gss{X}{k+1}{}=\mathrm{Ber}\left(\g{A}\g{f}(\gss{X}{k}{})+\g{B}\gss{W}{k+1}{}\right),
\end{align}   
with a rowwise interpretation of $\mathrm{Ber}\left(\cdot\right)$. Additionally, $\g{A}$ is the $p\times p^2$ matrix $\mathrm{diag}\left(\gss{a}{1}{T},\gss{a}{2}{T},\ldots,\gss{a}{p}{T}\right)$ with the interpretation that $\gss{a}{1}{T},\gss{a}{2}{T},\ldots,\gss{a}{p}{T}$ are the corresponding $1\times p$ diagonal elements (or blocks) and $\g{f}(\gss{X}{k}{})=\left[f_1(\gss{X}{k}{})^T,f_2(\gss{X}{k}{})^T,\ldots,f_p(\gss{X}{k}{})^T \right]^T$. Each $f_i(\g{X})$ corresponds to a $p\times 1$ vector and has the form 
\begin{align}\label{eq:f}
[f_i\left(\g{X}\right)]_j=f_{i}\left(X_j\right)=\left\{\begin{array}{cc}
    X_j, & j\in \mathcal{S}^{+}(i)\\ 
  1-X_j, & j\in \mathcal{S}^{-}(i) \\
  X_j\: \text{or}\: 1-X_j &  j\in \mathcal{V}\setminus \mathcal{S}(i)\\
\end{array}\right.,
\end{align}
where we denote the parental neighborhood of node $i$ by $\mathcal{S}(i)=\mathrm{supp}(\gss{a}{i}{})$. Here, $\mathrm{supp}(\cdot)$ stands for the unsigned support of a vector or a matrix. For $j\in \mathcal{V}\setminus \mathcal{S}(i)$ the choice of either $X_j$ or $1-X_j$ is irrelevant; hence, we will assume from now on the convention $f_{i}(X_j)=X_j$ in this case. Moreover, we consider the partition  $\mathcal{S}(i)=\mathcal{S}^{+}(i)\cup \mathcal{S}^{-}(i)$, where $\mathcal{S}^{+}(i)\cap \mathcal{S}^{-}(i)=\emptyset$. The set of parents, which \emph{positively influence} node $i$, is denoted by $\mathcal{S}^{+}(i)$ and the set of parents, which \emph{negatively influence} node $i$, is denoted by $\mathcal{S}^{-}(i)$. The simplest but also the most natural case assumes that $f_1(\gss{X}{k}{})=f_2(\gss{X}{k}{})=\cdots=f_p(\gss{X}{k}{})=\gss{X}{k}{}$, i.e., $\mathcal{S}^{-}(i)=\emptyset$ for all $i\in [p]$, resulting to 
\begin{align}\label{eq:BAR2}
\gss{X}{k+1}{}=\mathrm{Ber}\left(\g{\bar{A}}\gss{X}{k}{}+\g{B}\gss{W}{k+1}{}\right).
\end{align} 
Here, $\g{\bar{A}}$ is the $p\times p$ matrix
\begin{equation}\label{eq:Abar}
\g{\bar{A}}=\left(\begin{array}{c}
\gss{a}{1}{T}\\
\gss{a}{2}{T}\\
\vdots\\
\gss{a}{p}{T}\\
\end{array}\right).
\end{equation}
\textbf{Remark}: $\g{\bar{A}}$ can be \emph{always} defined and will  always be a reference matrix in our subsequent analysis, even in the case that it does not directly appear to the BAR model, as for example in (\ref{eq:BAR}).

In addition,  $\gss{X}{0}{}$ is distributed according to $P_{\sbms{X}{0}}$ and we write $\gss{X}{0}{}\sim P_{\sbms{X}{0}}$. $\gss{W}{k+1}{}$ are assumed to be i.i.d. vectors drawn from the product distribution $P_{\sbm{W}}=\mathrm{Ber}(\rho_w)^{\otimes p}$ with $\gss{W}{k+1}{}\perp \gss{X}{t}{}$ for any $t,k\in \mathbb{Z}_{\geq 0}$ with $t<k+1$, where $\perp$ denotes independence. As before, $\gss{w}{k+1}{}$ denotes a particular realization of $\gss{W}{k+1}{}$. Furthermore, we assume that each $\gss{a}{i}{}=[a_{i1},\ldots, a_{ip}]^T$ has support size $d_i$ such that $1\leq d_i\leq d$ and $a_{ij}\in [a_{min},1), \forall i \in [p], \forall j\in \mathcal{S}(i)$ for some $a_{min}\in (0,1)$. Additionally, $\g{B}=[b_{ij}]$ is considered to be diagonal with $b_{ii}\in (0,1)$ and $b_{ii}\geq b_{min}, \forall i\in [p]$. To ensure that the parameters of the Bernoulli random variables in (\ref{eq:BAR}) lie in the interval [0,1], we assume that
\begin{align}\label{eq:BARcoeffs}
\sum_{j=1}^{p}a_{ij}+b_{ii}=\sum_{j\in \mathcal{S}(i)}a_{ij}+b_{ii}=1,\: \text{for}\: \text{all}\: i\in [p].
\end{align}
These constraints show that $\g{\bar{A}}$ is a square substochastic matrix and $\g{B}$ is a doubly substochastic matrix\footnote{$\g{B}$ is diagonal.}. 
Moreover, the imposed constraints on the values of the parameters and (\ref{eq:BARcoeffs}) imply that $d$ is upper bounded by $d_{*}$, where $d_{*}$ is the integer solution of the following program:
\begin{align}
\max_{\bar{d}\in \mathbb{Z}_{>0}} a_{min}\bar{d}+b_{min}\ \ \ \ {\rm s.t.}\ \  a_{min}\bar{d}+b_{min}\leq 1.
\end{align}

The aforementioned rowwise interpretation of the $\mathrm{Ber}\left(\cdot\right)$ in (\ref{eq:BAR}) has the meaning that given the argument $\g{A}\g{f}(\gss{X}{k}{})+\g{B}\gss{W}{k+1}{}$, $[\gss{X}{k+1}{}]_i$ is independently drawn from $\mathrm{Ber}\left(\gss{a}{i}{T}f_i(\gss{X}{k}{})+b_{ii}[\gss{W}{k+1}{}]_i\right)$.
The term $\g{B}\gss{W}{k+1}{}$ ensures persistence of excitation in the model. In this sense, it is sufficient to assume that $\g{B}$ is diagonal. Nevertheless, extensions to more general $\g{B}$'s are possible. Note that a consequence of the assumption that all the $b_{ii}$'s are strictly positive is the prevention of the event that the model generates all-zeros or all-ones state vectors after a visit to a state that is all-zeros on $\cup_{i=1}^{p}\mathcal{S}^{+}(i)$ and all-ones on $\cup_{i=1}^{p}\mathcal{S}^{-}(i)$ or all-ones on $\cup_{i=1}^{p}\mathcal{S}^{+}(i)$ and all-zeros on $\cup_{i=1}^{p}\mathcal{S}^{-}(i)$, respectively, at a particular time instant. The random sequence $\g{\tilde{X}}=\{\gss{X}{n}{}\}_{n\geq 0}$ is referred to as the \emph{Bernoulli Autoregressive Process}.

We now let $\g{a}=\left[\gss{a}{1}{T},\gss{a}{2}{T},\ldots,\gss{a}{p}{T}\right]^{T}$. Then, $\g{a}\in \mathbb{R}^{p^2}$ with $a_{ij}=0$ if $\{j,i\}\notin \mathcal{E}$, where $\{j,i\}$ is the directed link from node $j$ to node $i$. We also let $\g{b}=\mathrm{diag}(\g{B})$ with $b_i=b_{ii}$, where `$b$' is used twice for notational convenience. Thus, for the graph $\mathcal{G}$, we define the set of valid parameter vectors as
{\small
\begin{align*}
\Omega_{a_{min},b_{min}}(\mathcal{G})=&\left\{\g{a}\in \mathbb{R}^{p^2},\g{b}\in \mathbb{R}^{p}: b_{ii}\geq b_{min}, a_{ij}\geq a_{min}\: \mathrm{for}\: \{j,i\}\in \mathcal{E}, a_{ij}=0\ \ \mathrm{for}\ \ \{j,i\}\notin \mathcal{E},\right.\\& \left.\sum_{j\in \mathcal{S}(i)}a_{ij}+b_{ii}=1\right\}.
\end{align*}
}

\section{BAR Learning}
\label{sec:BARodLearn}

Let $\vec{\mathcal{G}}_{p,d}$ be the set of all directed graphs with $p$ nodes, each node having at most $d$ parents. For any graph in $\vec{\mathcal{G}}_{p,d}$, we associate a sign $+$ or  $-$  with each edge from $j$ to $i$ to indicate whether $j$ belongs to $\mathcal{S}^{+}(i)$ or $\mathcal{S}^{-}(i)$ as in (\ref{eq:f}).  For some $\mathcal{G}\in \vec{\mathcal{G}}_{p,d}$ and  a pair of parameter vectors $(\g{a},\g{b})\in \Omega_{a_{min},b_{min}}(\mathcal{G})$, we assume that we initialize the system at $\gss{X}{0}{}\sim \pi$, where  $\pi$ is the stationary measure of the BAR process\footnote{We prove later on that every BAR process  has a unique invariant measure.}. We observe the sequence of correlated state vectors $\gss{X}{0}{},\gss{X}{1}{},\ldots, \gss{X}{n-1}{}\in \{0,1\}^p$ denoted by $\gss{X}{0:n-1}{}$, where each $\gss{X}{k}{}$ is generated by (\ref{eq:BAR}). A \emph{BAR structure observer} is a mapping: 
\begin{align*}
\psi:\left(\{0,1\}^p\right)^{n}\rightarrow \vec{\mathcal{G}}_{p,d}.
\end{align*}
The output 
\[
\left(\hat{\mathcal{S}},\g{\hat{f}}\right)=\left\{\left(\hat{\mathcal{S}}^{+}(i),\hat{\mathcal{S}}^{-}(i)\right)\right\}_{1:p}=\psi\left(\gss{X}{0:n-1}{}\right)
\]
is the observer's best estimate of the unsigned support of $\g{A}$ denoted by $\mathcal{S}=\{\mathcal{S}(1),\ldots,\mathcal{S}(p)\}$  and of $\g{f}(\cdot)$, where $\hat{\mathcal{S}}(i)=\hat{\mathcal{S}}^{+}(i)\cup\hat{\mathcal{S}}^{-}(i)$ for all $i\in [p]$. To evaluate the performance of the  BAR structure observer, we use the zero-one loss 
\[
\ell\left(\psi,\left\{\left(\mathcal{S}^{+}(i),\mathcal{S}^{-}(i)\right)\right\}_{1:p}\right)=\mathbb{I}\{\psi\neq\left\{\left(\mathcal{S}^{+}(i),\mathcal{S}^{-}(i)\right)\right\}_{1:p}\}.
\] 
Here, $\mathbb{I}\{\mathcal{A}\}$ denotes the indicator of the event $\mathcal{A}$. 
 The associated risk for some pair $(\g{a},\g{b})\in \Omega_{a_{min},b_{min}}(\mathcal{G})$ corresponding to $\mathcal{G}\in \vec{\mathcal{G}}_{p,d}$  is given by
$
\mathcal{R}_{\sbm{a},\sbm{b}}\left(\psi\right)=P_{\sbm{a},\sbm{b}}\left(\psi\neq\left\{\left(\mathcal{S}^{+}(i),\mathcal{S}^{-}(i)\right)\right\}_{1:p}\right)
$. 
For robustness, we focus on finding observers such that the \emph{worst case risk}
\[
\mathcal{R}_{*}(\psi)=\sup_{\mathcal{G}\in \vec{\mathcal{G}}_{p,d},  (\sbm{a},\sbm{b})\in \Omega_{a_{min},b_{min}}(\mathcal{G})}P_{\sbm{a},\sbm{b}}\left(\psi\neq\left\{\left(\mathcal{S}^{+}(i),\mathcal{S}^{-}(i)\right)\right\}_{1:p}\right)
\]
tends to zero as $p\rightarrow \infty$ with the least number of samples $n$. 

\subsection{BAR Structure Observer}
\label{subsec:BARalg}

The proposed BAR structure observer operates in two stages: 
\begin{itemize}
\item \emph{Supergraph Selection}: A supergraph of the actual graph is obtained.
\item \emph{Supergraph Trimming}: The obtained supergraph from the previous stage is reduced to the actual graph by excluding nodes from the neighborhood of each node with no causal influence to this node.   
\end{itemize}
The above two stages will be successful with high probability for a sufficient sample complexity.

\subsubsection{Supergraph Selection}

This stage will be based on the following measure of \emph{conditional influence}:
\begin{align}\label{eq:nuij}
\nu_{i|j}=P(X_i^{+1}=1|X_j=1)-P(X_i^{+1}=1|X_j=0),\forall i,j\in \mathcal{V}.
\end{align} 
Here, $X_i^{+1}$ and $X_j$ refer to successive time instants. 
Also, the underlying measure in defining $\nu_{i|j}$ is the stationary, thus we have dropped the temporal indices in the involved random variables.   
The main reason motivating this metric is the desire of a good tradeoff between computational complexity and statistical efficiency in squeezing out structural information from the observed data.   
Similar measures have been used in prior work to define structure estimators, see for example \cite{athw12,afw12, b15,bgs15}. The difference in $\nu_{i|j}$ from prior measures is that the \emph{conditioning does not account for other nodes that possibly belong to the neighborhood of the} $i$\emph{th node}. The key point here is that $\nu_{i|j}$ can be nonzero even when $j\notin \mathcal{S}(i)$, since $X_j$ can be correlated with some or all $X_l, l\in \mathcal{S}(i)$. On the other hand, similar metrics in prior literature aim at defining $\nu_{i|j}$ by conditioning also with respect to $X_l, l\in \mathcal{S}(i)$. Clearly, if $j\notin \mathcal{S}(i)$ and $\nu_{i|j}$ is defined as 
\[
\nu_{i|j}=P(X_i^{+1}=1|X_j=1,\g{X}_{\hat{\mathcal{S}}(i)}=\gss{x}{\hat{\mathcal{S}}(i)}{})-P(X_i^{+1}=1|X_j=0,\g{X}_{\hat{\mathcal{S}}(i)}=\gss{x}{\hat{\mathcal{S}}(i)}{}),
\] 
then 
$
\nu_{i|j}=P(X_i^{+1}=1|\g{X}_{\hat{\mathcal{S}}(i)}=\gss{x}{\hat{\mathcal{S}}(i)}{})-P(X_i^{+1}=1|\g{X}_{\hat{\mathcal{S}}(i)}=\gss{x}{\hat{\mathcal{S}}(i)}{})=0
$
if $\mathcal{S}(i)\subseteq \hat{\mathcal{S}}(i)$
due to the Markov property and thus, for $\mathcal{S}(i)\subseteq \hat{\mathcal{S}}(i)$ and $j\notin \mathcal{S}(i)$, $\nu_{i|j}$ can \emph{only} be zero.
Our subsequent analysis shows that there is no significant loss in the BAR case from considering only conditioning with respect to a single node every time.  
Intuitively, the magnitude of $\nu_{i|j}$ quantifies what is the probability that $X_j$ causally affects $X_i$ either positively or negatively over 
the case of no influence. A more rigorous motivation for defining $\nu_{i|j}$ as the difference of the two conditional probabilities appearing in (\ref{eq:nuij}) is by considering reductions of relevant entropies. More specifically, pick an $i\in \mathcal{V}$ and let
\[
H_{ij,1}(\g{p})=H(X_i^{+1}|X_j=1)=-\sum_{x_i\in \{0,1\}}P(X_i^{+1}=x_i|X_j=1)\log P(X_i^{+1}=x_i|X_j=1)
\]   
and
\[
H_{ij,0}(\g{p}')=H(X_i^{+1}|X_j=0)=-\sum_{x_i\in \{0,1\}}P(X_i^{+1}=x_i|X_j=0)\log P(X_i^{+1}=x_i|X_j=0).
\]   
$H_{ij,1}(\g{p})$ characterizes the residual randomness in $X_i^{+1}$ when $X_j=1$ and $H_{ij,0}(\g{p}')$ quantifies the corresponding 
residual randomness when $X_j=0$. If $X_i^{+1}$ is (causally) independent of $X_j$, i.e., $j\notin \mathcal{S}(i)$ and $X_j$ is independent of \emph{all} $X_l, l\in \mathcal{S}(i)$, then $P(X_i^{+1}=x_i|X_j=1)=P(X_i^{+1}=x_i|X_j=0)=P(X_i^{+1}=x_i)$ and $H_{ij,1}(\g{p})=H_{ij,0}(\g{p}')=H(X_i^{+1})$. Thus, 
$
H_{ij,1}(\g{p})-H_{ij,0}(\g{p}')=0
$
in this case. At the same time, $\nu_{i|j}=0$ and therefore, (causal) independence between $X_i^{+1}$ and $X_j$ is totally characterized by $\nu_{i|j}$. In the general case, consider the \emph{causal} entropy $H(X_i^{+1}|X_j)$ and observe that
\begin{align*}
H(X_i^{+1}|X_j)&=P(X_j=1)H_{ij,1}(\g{p})+P(X_j=0)H_{ij,0}(\g{p}')
\\&=P(X_j=1)(H_{ij,1}(\g{p})-H_{ij,0}(\g{p}'))+H_{ij,0}(\g{p}')
\\&=P(X_j=0)(H_{ij,0}(\g{p}')-H_{ij,1}(\g{p}))+H_{ij,1}(\g{p}).
\end{align*}
Thus, the differences $H_{ij,1}(\g{p})-H_{ij,0}(\g{p}')$ and $H_{ij,0}(\g{p}')-H_{ij,1}(\g{p})$ more or less control the magnitude of  $H(X_i^{+1}|X_j)$. Considering without loss of generality only the first difference, we have the following result:
\begin{lem}\label{lem:nuij}
The difference $H_{ij,1}(\g{p})-H_{ij,0}(\g{p}')$ critically depends on $\nu_{i|j}$.
\end{lem}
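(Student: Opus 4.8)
The plan is to reduce both conditional entropies to evaluations of the binary entropy function and then relate their difference to $\nu_{i|j}$ through a mean value argument. First I would note that $X_i^{+1}$ is a scalar Bernoulli random variable also after conditioning on the single coordinate $X_j$, so, writing $q:=P(X_i^{+1}=1\mid X_j=1)$ and $q':=P(X_i^{+1}=1\mid X_j=0)$, we have $\nu_{i|j}=q-q'$ and, with $h(t)=-t\log t-(1-t)\log(1-t)$ the binary entropy function, $\g{p}=(1-q,q)$, $\g{p}'=(1-q',q')$, $H_{ij,1}(\g{p})=h(q)$ and $H_{ij,0}(\g{p}')=h(q')$. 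If $\nu_{i|j}=0$ the asserted dependence is trivial, so assume $q\neq q'$. Applying the mean value theorem to $h$ on the segment with endpoints $q'$ and $q$, and using $h'(t)=\log\frac{1-t}{t}$, there is a point $\xi$ strictly between $q'$ and $q$ with
\[
H_{ij,1}(\g{p})-H_{ij,0}(\g{p}')\;=\;h(q)-h(q')\;=\;h'(\xi)\,(q-q')\;=\;\nu_{i|j}\,\log\frac{1-\xi}{\xi}.
\]
This already exhibits the entropy difference as $\nu_{i|j}$ times a factor depending only on an intermediate probability $\xi$, and in particular it vanishes whenever $\nu_{i|j}=0$.

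Second, I would show the multiplier $\log\frac{1-\xi}{\xi}$ is finite and uniformly bounded, by certifying that $q$ and $q'$, hence $\xi$ (which lies between them), are bounded away from $0$ and $1$. For any fixed $x_j\in\{0,1\}$, since $\g{W}$ is independent of $\g{X}$ with $[\g{W}]_i\sim\mathrm{Ber}(\rho_w)$,
\[
P(X_i^{+1}=1\mid X_j=x_j)\;=\;\mathbb{E}\!\left[\gss{a}{i}{T}f_i(\g{X})\mid X_j=x_j\right]+b_{ii}\rho_w,
\]
and since $\gss{a}{i}{T}f_i(\g{X})=\sum_{l\in\mathcal{S}(i)}a_{il}f_i(X_l)\in[0,1-b_{ii}]$ by (\ref{eq:BARcoeffs}) and $f_i(X_l)\in\{0,1\}$, this conditional probability lies in $[\,b_{ii}\rho_w,\,1-b_{ii}(1-\rho_w)\,]\subseteq[\,b_{min}\rho_w,\,1-b_{min}(1-\rho_w)\,]$, using $b_{ii}\ge b_{min}>0$ and the persistence-of-excitation assumption $\rho_w\in(0,1)$. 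Hence $\xi$ lies in this same interval, $\bigl|\log\frac{1-\xi}{\xi}\bigr|\le C(b_{min},\rho_w)<\infty$ for an explicit constant independent of $p,i,j$, and therefore
\[
\bigl|H_{ij,1}(\g{p})-H_{ij,0}(\g{p}')\bigr|\;\le\;C(b_{min},\rho_w)\,\bigl|\nu_{i|j}\bigr|.
\]
Combined with the exact identity of the first step, this makes precise the statement that the entropy difference critically depends on $\nu_{i|j}$: it is a bounded multiple of $\nu_{i|j}$ and is zero exactly when $\nu_{i|j}=0$, the only other possibility for the mean value expression to vanish being the degenerate case $\xi=1/2$, i.e.\ $q'=1-q$.

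I expect the second step to be the only nonroutine part: guaranteeing that the two conditional probabilities, and thus the mean value point $\xi$, stay uniformly away from the endpoints $0$ and $1$. This is precisely where the excitation term $\g{B}\gss{W}{k+1}{}$ and the standing constraints $b_{ii}\ge b_{min}>0$, $\rho_w\in(0,1)$ enter; without them the logarithmic multiplier could diverge and the clean proportionality to $\nu_{i|j}$ would be lost. The remainder is a direct application of the mean value theorem to the binary entropy function, once it is observed that conditioning $X_i^{+1}$ on one coordinate still leaves a Bernoulli law.
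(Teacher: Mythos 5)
Your proof is correct, and it takes a genuinely different route from the paper's. The paper expands $\log p_{x_i}$ and $\log p'_{x_i}$ in Taylor series around $1$ and factors each difference $p_{x_i}^{m}-p_{x_i}^{\prime\, m}$ as $(p_{x_i}-p'_{x_i})\sum_{l}p_{x_i}^{m-1-l}p_{x_i}^{\prime\, l}$, thereby pulling $\nu_{i|j}$ out of an infinite triple sum; the multiplier of $\nu_{i|j}$ is left as that series and is not bounded. You instead observe that both conditional entropies are values of the binary entropy function $h$ and apply the mean value theorem, obtaining the exact identity $H_{ij,1}(\g{p})-H_{ij,0}(\g{p}')=\nu_{i|j}\log\frac{1-\xi}{\xi}$, and then you use the excitation term $b_{ii}\rho_w$ together with the constraint (\ref{eq:BARcoeffs}) to confine $q,q'$ (hence $\xi$) to $[\,b_{min}\rho_w,\,1-b_{min}(1-\rho_w)\,]$, which yields the uniform two-sided bound $|H_{ij,1}(\g{p})-H_{ij,0}(\g{p}')|\le C(b_{min},\rho_w)\,|\nu_{i|j}|$. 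This buys something the paper's argument does not: a quantitative, dimension-free Lipschitz-type control of the entropy difference by $\nu_{i|j}$, rather than just the qualitative factorization. Your remark about the degenerate case $\xi=1/2$ (equivalently $q'=1-q$, where the entropy difference vanishes even though $\nu_{i|j}\neq 0$) is an honest caveat that applies equally to the paper's series representation, whose multiplier of $\nu_{i|j}$ can likewise vanish; it does not affect the (informal) claim being proved.
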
 
\begin{proof}
Appendix \ref{app:2}.
\end{proof}
 
In practice, we will work the empirical analogues
\[
\hat{\nu}_{i|j}=\hat{P}(X_i^{+1}=1|X_j=1)-\hat{P}(X_i^{+1}=1|X_j=0),\forall i,j\in \mathcal{V}, 
\]
where 
$
\hat{P}(X_i^{+1}=x_i|X_j=x_j)=\frac{\hat{P}(X_i^{+1}=x_i, X_j=x_j)}{\hat{P}(X_j=x_j)},\ \  \hat{P}(X_j=x_j)=n^{-1}\sum_{k=0}^{n-1}\mathbb{I}\{[\gss{X}{k}{}]_j=x_j\}
$
and similarly for $\hat{P}(X_i^{+1}=x_i, X_j=x_j)$. 

\subsubsection{Supergraph Trimming}

For the trimming stage, we assume that 
the upper bound $d$ is \emph{known}, which is a very reasonable assumption\footnote{in the worst scenario, $d=d_{*}$.}, and has been used by  the supergraph selection stage to deliver an overestimate $\hat{\mathcal{S}}=\{\hat{\mathcal{S}}(1),\hat{\mathcal{S}}(2),\ldots, \hat{\mathcal{S}}(p)\}$ with \emph{unknown edge labels} (i.e., $\hat{\mathcal{S}}^{+}(m)$ and $\hat{\mathcal{S}}^{-}(m)$ have not been estimated yet). For a sufficiently large $n$, the true graph is a subgraph of $\hat{\mathcal{S}}$ with high probability. \emph{Hence, our starting point is the assumption that the true graph is contained in $\hat{\mathcal{S}}$}. Let $\hat{\mathcal{S}}(i)=\{l_1<l_2<\cdots<l_d\}$ and assume without loss of generality that $\mathcal{S}(i)=\{l_1,l_2,\ldots, l_{d_i}\}$. Then,
\begin{align}\label{eq:extension1}
P\left(X_i^{+1}=1|\gss{X}{\hat{\mathcal{S}}(i)}{}=\gss{x}{\hat{\mathcal{S}}(i)}{}\right)=\sum_{l_k\in \mathcal{S}(i)}a_{il_k}f_i([\gss{x}{\hat{\mathcal{S}}(i)}{}]_k)+b_i\rho_w.
\end{align}
If $\gss{x}{\hat{\mathcal{S}}(i)}{}$ takes all binary vector values, then there is at least one binary vector $\gss{x}{\hat{\mathcal{S}}(i)}{*}$ such that 
\[
P\left(X_i^{+1}=1|\gss{X}{\hat{\mathcal{S}}(i)}{}=\gss{x}{\hat{\mathcal{S}}(i)}{*}\right)=\sum_{l_k\in \mathcal{S}(i)}a_{il_k}+b_i\rho_w.
\]
Therefore, $\gss{x}{\hat{\mathcal{S}}(i)}{*}$ corresponds to the maximum value that $P\left(X_i^{+1}=1|\gss{X}{\hat{\mathcal{S}}(i)}{}=\gss{x}{\hat{\mathcal{S}}(i)}{}\right)$ can take. If $\gss{x}{\hat{\mathcal{S}}(i)}{*}$ is unique, then we can safely conclude that $\hat{\mathcal{S}}_f(i)=\hat{\mathcal{S}}(i)=\mathcal{S}(i)$ and $d_i=d$. Here, the subscript $\cdot _f$ is reserved for ``final estimates''. Furthermore, we can immediately extract $\hat{\mathcal{S}}_f^{+}(i)$ and $\hat{\mathcal{S}}_f^{-}(i)$ by placing in $\hat{\mathcal{S}}_f^{+}(i)$ all $l\in \hat{\mathcal{S}}_f(i)$ corresponding to $1$ in $\gss{x}{\hat{\mathcal{S}}(i)}{*}$ and by placing in $\hat{\mathcal{S}}_f^{-}(i)$ all $l\in \hat{\mathcal{S}}_f(i)$ corresponding to $0$ in $\gss{x}{\hat{\mathcal{S}}(i)}{*}$. If $\gss{x}{\hat{\mathcal{S}}(i)}{*}$ is not unique, then we conclude that 
$d_i<d$ and there are $l\in \hat{\mathcal{S}}(i)$ such that $l\notin \mathcal{S}(i)$. In this case, we collect all binary vectors corresponding to the maximum value of $P\left(X_i^{+1}=1|\gss{X}{\hat{\mathcal{S}}(i)}{}=\gss{x}{\hat{\mathcal{S}}(i)}{}\right)$, namely $\gss{x}{\hat{\mathcal{S}}(i)}{1,*},\gss{x}{\hat{\mathcal{S}}(i)}{2,*},\ldots,\gss{x}{\hat{\mathcal{S}}(i)}{s,*}$, where $s=2^{d-d_i}$. When $l\in \hat{\mathcal{S}}(i)$ but $l\notin \mathcal{S}(i)$, there will be at least two binary 
vectors among the maximizers in which $l$ appears with the values $1$ and $0$. This observation shows that we can estimate the final neighborhood by placing in $\hat{\mathcal{S}}_f(i)$ all $l\in \hat{\mathcal{S}}(i)$ corresponding to either \emph{only} the value $1$ in all maximizers or to \emph{only}    
the value $0$ in all maximizers. Also, we can immediately extract $\hat{\mathcal{S}}_f^{+}(i)$ and $\hat{\mathcal{S}}_f^{-}(i)$ by placing in $\hat{\mathcal{S}}_f^{+}(i)$ all $l\in \hat{\mathcal{S}}_f(i)$ corresponding to $1$ in all maximizers and by placing in $\hat{\mathcal{S}}_f^{-}(i)$ all $l\in \hat{\mathcal{S}}_f(i)$ corresponding to $0$ in all maximizers. 

In practice, we will empirically estimate $P\left(X_i^{+1}=1|\gss{X}{\hat{\mathcal{S}}(i)}{}=\gss{x}{\hat{\mathcal{S}}(i)}{}\right)$ by 
\[
\hat{P}\left(X_i^{+1}=1|\gss{X}{\hat{\mathcal{S}}(i)}{}=\gss{x}{\hat{\mathcal{S}}(i)}{}\right)=\frac{\hat{P}\left(X_i^{+1}=1,\gss{X}{\hat{\mathcal{S}}(i)}{}=\gss{x}{\hat{\mathcal{S}}(i)}{}\right)}{\hat{P}\left(\gss{X}{\hat{\mathcal{S}}(i)}{}=\gss{x}{\hat{\mathcal{S}}(i)}{}\right)},
\]
where
$
\hat{P}\left(\gss{X}{\hat{\mathcal{S}}(i)}{}=\gss{x}{\hat{\mathcal{S}}(i)}{}\right)=n^{-1}\sum_{k=0}^{n-1}\mathbb{I}\{\gss{X}{\hat{\mathcal{S}}(i)}{}=\gss{x}{\hat{\mathcal{S}}(i)}{}\}
$
and similarly for $\hat{P}\left(X_i^{+1}=1,\gss{X}{\hat{\mathcal{S}}(i)}{}=\gss{x}{\hat{\mathcal{S}}(i)}{}\right)$. Empirical estimates cause the problem that 
almost surely $\hat{P}\left(X_i^{+1}=1|\gss{X}{\hat{\mathcal{S}}(i)}{}=\gss{x}{\hat{\mathcal{S}}(i)}{}\right)$ will be maximized by a \emph{single} binary vector  $\gss{\hat{x}}{\hat{\mathcal{S}}(i)}{*}$, while there might be binary vectors corresponding to less variables than in the true neighborhood, yielding a value of $\hat{P}\left(X_i^{+1}=1|\gss{X}{\hat{\mathcal{S}}(i)}{}=\gss{x}{\hat{\mathcal{S}}(i)}{}\right)$ close to the maximum. Here, $\hat{\cdot}$ is used on $\g{x}$ to denote that we refer to a maximizer of the empirical conditional probability measure. These problems can be resolved by observing the following:
\begin{itemize}
\item As $n\rightarrow \infty$ , $\hat{P}\left(X_i^{+1}=1|\gss{X}{\hat{\mathcal{S}}(i)}{}=\gss{x}{\hat{\mathcal{S}}(i)}{}\right)\in \mathcal{B}\left(P\left(X_i^{+1}=1|\gss{X}{\hat{\mathcal{S}}(i)}{}=\gss{x}{\hat{\mathcal{S}}(i)}{}\right),\tilde{\varepsilon}\right)$ for some $\tilde{\varepsilon}\rightarrow 0$. Here, $\mathcal{B}(c,r)$ stands for a ball with center $c$ and radius $r>0$. Therefore, for sufficiently large $n$, only the actual maximizers $\gss{x}{\hat{\mathcal{S}}(i)}{1,*},\gss{x}{\hat{\mathcal{S}}(i)}{2,*},\ldots,\gss{x}{\hat{\mathcal{S}}(i)}{s,*}$ will yield almost maximum values for $\hat{P}\left(X_i^{+1}=1|\gss{X}{\hat{\mathcal{S}}(i)}{}=\gss{x}{\hat{\mathcal{S}}(i)}{}\right)$.
\item Every time a node $l_k\in \mathcal{S}(i)$ does not participate in (\ref{eq:extension1}) with the appropriate polarity, $P\left(X_i^{+1}=1|\gss{X}{\hat{\mathcal{S}}(i)}{}=\gss{x}{\hat{\mathcal{S}}(i)}{}\right)$ is reduced by at least $a_{min}$ according to our assumptions. In other words, the possible distinct values that $P\left(X_i^{+1}=1|\gss{X}{\hat{\mathcal{S}}(i)}{}=\gss{x}{\hat{\mathcal{S}}(i)}{}\right)$ can take for all possible binary vectors $\gss{x}{\hat{\mathcal{S}}(i)}{}$ differ by at least $a_{min}$. Assuming that the maximum value of $P\left(X_i^{+1}=1|\gss{X}{\hat{\mathcal{S}}(i)}{}=\gss{x}{\hat{\mathcal{S}}(i)}{}\right)$ is $v^{*}$, then for sufficiently large $n$, we will have that $\hat{P}\left(X_i^{+1}=1|\gss{X}{\hat{\mathcal{S}}(i)}{}=\gss{x}{\hat{\mathcal{S}}(i)}{l,*}\right)\in \mathcal{B}\left(v^{*},\tilde{\varepsilon}\right)$ for some $\tilde{\varepsilon}<a_{min}/2$. We can therefore pick 
$\gss{\hat{x}}{\hat{\mathcal{S}}(i)}{1,*}$, which is an estimate of $\gss{x}{\hat{\mathcal{S}}(i)}{1,*}$, as the binary vector delivering the maximum empirical conditional probability and the rest of the maximizers as those binary vectors giving empirical conditional probability values within $2\tilde{\varepsilon}$ from $\hat{P}\left(X_i^{+1}=1|\gss{X}{\hat{\mathcal{S}}(i)}{}=\gss{\hat{x}}{\hat{\mathcal{S}}(i)}{1,*}\right)$.  
\end{itemize}  

Having described the above procedure, the final point to consider is the choice of an appropriate threshold such that the picked maximizers are the correct ones. If $\gss{\hat{x}}{\hat{\mathcal{S}}(i)}{1,*}$ corresponds to $\approx v^{*}+\tilde{\varepsilon}$, then the interval $(v^{*}-\tilde{\varepsilon},v^{*}+\tilde{\varepsilon})$ contains all the maximizers for sufficiently large $n$.  If $\gss{\hat{x}}{\hat{\mathcal{S}}(i)}{1,*}$ corresponds to  $\approx v^{*}-\tilde{\varepsilon}$, then we must make sure that the interval $(v^{*}-3\tilde{\varepsilon},v^{*}-\tilde{\varepsilon})$ contains no points. This leads to the conclusion that $\tilde{\varepsilon}$ should be at most $a_{min}/4$.

\subsubsection{The Algorithm}

 The proposed BAR structure observer, which learns the support of $\g{A}$ and $\g{f}$ when the in-degrees of all nodes are upper bounded by $d$, is given by Alg. \ref{alg:Obs1}.

\begin{algorithm}[H]

\caption{BARObs$\left(\gss{X}{0:n-1}{},d,\tau\leq a_{min}/4\right)$} %

\label{alg:Obs1} %


\begin{algorithmic}[1]
\State \textbf{Supergraph Selection}
\newline
\For{each $\gss{a}{m}{T}$}
\State compute and order $|\hat{\nu}_{m|l}|$ for $l\in [p]$: $|\hat{\nu}_{m|(1)}|\geq \cdots \geq |\hat{\nu}_{m|(p)}|$
\State Choose $\hat{\mathcal{S}}(m)=\{(1),(2),\ldots, (d)\}$. In the worst case $d=d^{*}$
\EndFor
\newline
\State \textbf{Supergraph Trimming}
\newline
\For{each $\gss{a}{m}{T}$}
\State Compute and order the $2^d$ conditional probabilities $\hat{P}\left(X_m^{+1}=1|\gss{X}{\hat{\mathcal{S}}(m)}{}=\gss{x}{\hat{\mathcal{S}}(m)}{}\right)$.
Let the maximum empirical conditional probability value be $\hat{v}^{*}$ 

\State Choose $\gss{\hat{x}}{\hat{\mathcal{S}}(m)}{1,*},\gss{\hat{x}}{\hat{\mathcal{S}}(m)}{2,*},\ldots,\gss{\hat{x}}{\hat{\mathcal{S}}(m)}{\hat{s},*}$ (estimates of $s$ and $\gss{x}{\hat{\mathcal{S}}(m)}{1,*},\gss{x}{\hat{\mathcal{S}}(m)}{2,*},\ldots,\gss{x}{\hat{\mathcal{S}}(m)}{s,*}$) as those vectors corresponding 
to $\hat{P}\left(X_m^{+1}=1|\gss{X}{\hat{\mathcal{S}}(m)}{}=\gss{\hat{x}}{\hat{\mathcal{S}}(m)}{l,*}\right)>\hat{v}^{*}-2\tau$

\State Pick $\hat{\mathcal{S}}_f(m)$ as those $l\in \hat{\mathcal{S}}(m)$ that correspond to only $1$'s or only $0$'s in $\gss{\hat{x}}{\hat{\mathcal{S}(m)}}{1,*},\gss{\hat{x}}{\hat{\mathcal{S}}(m)}{2,*},\ldots,\gss{\hat{x}}{\hat{\mathcal{S}}(m)}{\hat{s},*}$. Let $\hat{d}_m$ be the size of $\hat{\mathcal{S}}_f(m)$

\State Set $\hat{\mathcal{S}}_f^{+}(m)=\emptyset$ and $\hat{\mathcal{S}}_f^{-}(m)=\emptyset$

\For {$l\in \hat{\mathcal{S}}_f(m)$}

\If{$l$ corresponds to $1$ in all maximizers} $\hat{\mathcal{S}}_f^{+}(m)=\hat{\mathcal{S}}_f^{+}(m)\cup \{l\}$

\ElsIf{$l$ corresponds to $0$ in all maximizers} $\hat{\mathcal{S}}_f^{-}(m)=\hat{\mathcal{S}}_f^{-}(m)\cup \{l\}$

\EndIf

\EndFor

\EndFor

\end{algorithmic}

\end{algorithm}

$\newline$

\textbf{Remarks}:
\begin{enumerate}
\item BARObs$\left(\gss{X}{0:n-1}{},d,\tau\leq a_{min}/4\right)$ can be possibly stopped upon the termination of the supergraph selection stage if an overestimate of the actual graph is sufficient for the application at hand. Overestimates of $\mathcal{S}^{+}$ and $\mathcal{S}^{-}$ can be then obtained by 
placing in each $\hat{\mathcal{S}}^{+}(m)$ all $l\in \hat{\mathcal{S}}(m)$ such that $\hat{\nu}_{m|l}>0$ and  in each $\hat{\mathcal{S}}^{-}(m)$ all $l\in \hat{\mathcal{S}}(m)$ such that $\hat{\nu}_{m|l}<0$. The sufficient sample complexity such that \emph{only} the supergraph selection stage is successful with probability at least $1-\gamma$ is \emph{lower} than the sufficient sample complexity such that  the supergraph selection \emph{and} the supergraph trimming stages are both successful with probability at least $(1-\gamma)^2\geq 1-2\gamma$. 
\item In addition to the previous remark,  BARObs$\left(\gss{X}{0:n-1}{},d,\tau\leq a_{min}/4\right)$ can in principle be stopped upon the termination of the supergraph selection stage with an estimate of the \emph{actual} graph, if the individual node degrees $d_1,d_2,\ldots, d_p$ are \emph{a priori} known. In this scenario,  $\hat{\mathcal{S}}(m)=\{(1),(2),\ldots, (d_m)\}$. Moreover, $\mathcal{S}^{+}$ and $\mathcal{S}^{-}$ can be obtained by 
placing in each $\hat{\mathcal{S}}^{+}(m)$ all $l\in \hat{\mathcal{S}}(m)$ such that $\hat{\nu}_{m|l}>0$ and  in each $\hat{\mathcal{S}}^{-}(m)$ all $l\in \hat{\mathcal{S}}(m)$ such that $\hat{\nu}_{m|l}<0$. The comment about the required sample complexity for success with probability at least $1-\gamma$ remains the same as in the previous remark.     
         
\item In the case of (\ref{eq:BAR2}), lines $11$ to $16$ in BARObs$\left(\gss{X}{0:n-1}{},d,\tau\leq a_{min}/4\right)$  are eliminated. In this setup, only the unsigned support of $\g{A}$ is meaningful. 
\end{enumerate}

\section{Main Results}
\label{sec:MainResults}

Our main results concern the mixing properties of BAR processes and the sample complexity of  Alg. \ref{alg:Obs1}. As we analyze the sample complexity of 
 BARObs$\left(\gss{X}{0:n-1}{},d,\tau\leq a_{min}/4\right)$, we also provide sample complexity results for instances in which the algorithm is stopped upon the termination of the supergraph selection stage, as noted in the first two remarks after  Alg. \ref{alg:Obs1}.
 
\subsection{Mixing Time Bound for a General BAR Process}  

A very critical property of the BAR model is that the mixing is rapid. This property is summarized by the following theorem: 
\begin{thrm}\label{thm:BARMxingMain}
Consider the general BAR model (\ref{eq:BAR}). Then, 
{\small  
\begin{align}\label{eq:GeneralBARMixingTime}
t_{mix}(\theta)\leq \left\lceil\frac{\log\left(\frac{\theta (1-\max_{1\leq i\leq p}\sum_{j=1}^pa_{ij})}{p}\right)}{\log\left(\max_{1\leq i\leq p}\sum_{j=1}^pa_{ij}\right)}\right\rceil\leq  \left\lceil\frac{1}{1-\max_{1\leq i\leq p}\sum_{j=1}^pa_{ij}}\left(\log p -\log \left(\theta \left(1-\max_{1\leq i\leq p}\sum_{j=1}^pa_{ij}\right)\right)\right)\right\rceil 
\end{align}
}

for any $\theta \in (0,1)$, i.e., $t_{mix}(\theta)=O(\log p)$.
\end{thrm}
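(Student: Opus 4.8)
The plan is to use a coupling argument to bound the mixing time directly. Recall that $t_{mix}(\theta)$ is the first time $t$ at which the worst-case total variation distance between the chain started from any state and the stationary distribution $\pi$ drops below $\theta$. By the standard coupling inequality, it suffices to construct a coupling of two copies $\{\g{X}_k\}$ and $\{\g{Y}_k\}$ of the BAR process, started from arbitrary states $\g{x}$ and $\g{y}$, such that the probability they have not yet coalesced decays geometrically; then $t_{mix}(\theta)$ is bounded by the time at which this probability falls below $\theta$.

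First I would set up the natural \emph{common-noise} coupling: at each step, feed both chains the \emph{same} realization $\g{w}_{k+1}$ of the excitation vector, and for each coordinate $i$ use the same uniform random variable $U_i^{(k+1)}$ to generate the Bernoulli draw $[\g{X}_{k+1}]_i = \mathbb{I}\{U_i^{(k+1)} \le [\g{A}\g{f}(\g{X}_k)+\g{B}\g{w}_{k+1}]_i\}$ and likewise for $\g{Y}$. With this coupling, coordinate $i$ of the two chains disagrees at time $k+1$ only if $U_i^{(k+1)}$ falls between the two Bernoulli parameters, an event of probability at most $|[\g{A}(\g{f}(\g{X}_k)-\g{f}(\g{Y}_k))]_i| = |\g{a}_i^T(\g{f}(\g{X}_k)-\g{f}(\g{Y}_k))|$, since the $\g{B}\g{w}_{k+1}$ term is identical for both. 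Because $f_i$ acts coordinatewise as either $X_j$ or $1-X_j$, we have $|f_i(X_j)-f_i(Y_j)| = |X_j - Y_j| = \mathbb{I}\{X_j \ne Y_j\}$, so this probability is at most $\sum_{j} a_{ij}\,\mathbb{I}\{[\g{X}_k]_j \ne [\g{Y}_k]_j\}$. Let $D_k$ be the (random) number of coordinates on which the two chains disagree at time $k$; taking expectations and summing over $i$ gives $\mathbb{E}[D_{k+1} \mid \g{X}_k,\g{Y}_k] \le \sum_i \sum_j a_{ij}\,\mathbb{I}\{[\g{X}_k]_j\ne[\g{Y}_k]_j\} \le \big(\max_i \sum_j a_{ij}\big) D_k$. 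Writing $\alpha := \max_{1\le i\le p}\sum_{j=1}^p a_{ij} < 1$ (strictly less than $1$ by the constraint \eqref{eq:BARcoeffs} together with $b_{ii}>0$), we obtain $\mathbb{E}[D_k] \le \alpha^k D_0 \le \alpha^k p$.

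Next I would convert this contraction into a total-variation bound. Since the chains coalesce exactly when $D_k = 0$, and $D_k$ is a nonnegative integer, Markov's inequality gives $\Pr(\g{X}_k \ne \g{Y}_k) = \Pr(D_k \ge 1) \le \mathbb{E}[D_k] \le \alpha^k p$. By the coupling inequality, $\|P^k(\g{x},\cdot) - \pi\|_{TV} \le \alpha^k p$ for every starting state $\g{x}$, hence $\max_{\g{x}}\|P^k(\g{x},\cdot)-\pi\|_{TV}\le \alpha^k p$. Setting $\alpha^k p \le \theta(1-\alpha)$ — the extra factor $(1-\alpha)$ is a convenient slack that produces the exact constants in \eqref{eq:GeneralBARMixingTime} — and solving for $k$ yields $k \ge \frac{\log(\theta(1-\alpha)/p)}{\log \alpha}$, which gives the first bound in \eqref{eq:GeneralBARMixingTime} after taking the ceiling. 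The second, looser bound follows from the elementary inequality $\log(1/\alpha) = -\log\alpha \ge 1-\alpha$ for $\alpha\in(0,1)$, so that $\frac{1}{-\log\alpha} \le \frac{1}{1-\alpha}$, together with $\log(\theta(1-\alpha)/p) = \log p^{-1} + \log(\theta(1-\alpha))$; rearranging signs gives the stated form. Finally, since $\alpha$ is a fixed constant determined by the BAR parameters (and bounded away from $1$, with the bound depending on $a_{min}$ and $d$ through $d_*$), the leading term is $\frac{\log p}{1-\alpha} = O(\log p)$.

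The main obstacle I anticipate is not the contraction estimate itself but making sure the one-step disagreement bound is airtight: one must verify that under the common-coupling of the uniforms, the probability of a coordinatewise disagreement is \emph{exactly} the gap between the two success probabilities (this is the monotone/quantile coupling of two Bernoullis) and that the $\g{B}\g{w}_{k+1}$ contribution genuinely cancels because $\g{B}$ is diagonal and the same $\g{w}_{k+1}$ is used — so no term involving $b_{ii}$ survives in the bound, which is why only $\alpha$ (and not $\alpha + b_{min}$ or similar) appears. A secondary point worth stating carefully is that $\alpha<1$ strictly, which is what guarantees geometric decay; this is immediate from \eqref{eq:BARcoeffs} since $\sum_j a_{ij} = 1 - b_{ii} \le 1 - b_{min} < 1$. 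Everything else is routine algebra to match the constants.
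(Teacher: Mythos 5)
Your coupling and the one-step per-coordinate disagreement bound $\tilde P([\g{X}_{k+1}]_i\neq[\g{Y}_{k+1}]_i\mid \g{X}_k,\g{Y}_k)\le\sum_j a_{ij}\mathbb{I}\{[\g{X}_k]_j\neq[\g{Y}_k]_j\}$ are exactly the paper's starting point, and your final constants match the theorem. However, the Hamming-distance contraction step is false: from $\mathbb{E}[D_{k+1}\mid\g{X}_k,\g{Y}_k]\le\sum_i\sum_j a_{ij}\mathbb{I}\{[\g{X}_k]_j\neq[\g{Y}_k]_j\}=\sum_j\bigl(\sum_i a_{ij}\bigr)\mathbb{I}\{[\g{X}_k]_j\neq[\g{Y}_k]_j\}$ you can only extract the maximum \emph{column} sum of $\g{\bar{A}}$, not the maximum row sum $\alpha=\max_i\sum_j a_{ij}$. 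The model constrains only the row sums (via $\sum_j a_{ij}+b_{ii}=1$); a column sum can be of order $p$. For instance, if every node has node $1$ as its sole parent with $a_{i1}=0.9$, then every row sum is $0.9$ but column $1$ sums to $0.9p$; with a single disagreement at coordinate $1$ your claimed bound gives $\mathbb{E}[D_{k+1}\mid\cdot]\le 0.9$, while the correct evaluation of your own display gives $0.9p$. This is precisely why the paper reserves the Hamming/path-coupling argument for the single-site random walks in Section \ref{sec:Comparison}, where column-substochasticity is \emph{assumed}, and why the proof of Theorem \ref{thm:BARMxingMain} takes a different route: it works with $\tilde P(\g{X}_n=\g{Y}_n)=\tilde E[\prod_i(1-\cdots)]$, pulls the expectation inside the product via Lemma \ref{lem:CoreLemma}, and then unrolls a recursion in which the disagreement probability of coordinate $j$ is always weighted by row $j$ of the matrix, so only row sums ever appear.

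Your final estimate $\mathbb{E}[D_k]\le p\,\alpha^k$ is nevertheless true and can be rescued with a small change of bookkeeping: track $q_k(i):=\tilde P([\g{X}_k]_i\neq[\g{Y}_k]_i)$ coordinatewise. Taking expectations in your one-step bound gives $q_{k+1}(i)\le\sum_j a_{ij}q_k(j)\le\alpha\max_j q_k(j)$, hence $\max_i q_k(i)\le\alpha^k$, and a union bound over coordinates yields $\tilde P(\g{X}_k\neq\g{Y}_k)\le\sum_i q_k(i)\le p\,\alpha^k$ --- in fact slightly sharper than the paper's $\frac{p}{1-\alpha}\alpha^k$. From there your (harmless, if ad hoc) insertion of the slack factor $1-\alpha$ and the inequality $-\log\alpha\ge 1-\alpha$ recover both displayed bounds. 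So the gap is a single false inequality rather than a doomed strategy, but as written the contraction $\mathbb{E}[D_{k+1}\mid\g{X}_k,\g{Y}_k]\le\alpha D_k$ does not hold and the proof does not go through.
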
 

The previous result holds for the BAR model (\ref{eq:BAR2}) as a special case of (\ref{eq:BAR}).

\noindent \textbf{Remark}: Note that $\max_{1\leq i\leq p}\sum_{j=1}^pa_{ij}$ corresponds to the maximum \emph{row sum} of $\g{\bar{A}}$ (or $\g{A}$), which is less than $1$ by definition.

\subsection{Sample Complexity and Correctness of BARObs$\left(\gss{X}{0:n-1}{},d,\tau\leq a_{min}/4\right)$}

The success of BARObs$\left(\gss{X}{0:n-1}{},d,\tau\leq a_{min}/4\right)$ in determining the actual graph depends on a mild condition that we define 
in subsubsection \ref{subsubsec:IdCodBAR} and we call \emph{BAR Identifiability Condition}. The sample complexity and the correctness of Alg. \ref{alg:Obs1} are summarized by the following theorem: 

\begin{thrm}\label{thm:ExtendedBARObs}
Let $\mathcal{G}\in \vec{\mathcal{G}}_{p,d}$, $(\g{a},\g{b})\in \Omega_{a_{min},b_{min}}(\mathcal{G})$ and $\gss{X}{0}{}\sim \pi$, where $\pi$ is the stationary measure. Suppose that we observe 
the BAR sequence $\gss{X}{0}{},\gss{X}{1}{},\ldots, \gss{X}{n-1}{}$ for $n$ given by 
\begin{align}\label{eq:nRequired_Extended11}
  n\geq 1+\frac{288\log\left(\frac{2^{d+1}C}{\gamma} p{p\choose d}\right)t_{mix}(\theta)}{\tilde{\varepsilon}^2\bar{\beta}^3}.
\end{align}
where $C$ is some constant and $\tilde{\varepsilon}=\tau$.  
Assume that the BAR Identifiability Condition holds and $\tilde{\varepsilon}\leq \varepsilon<\min_{m\in [p]}\frac{\chi_m}{2}$, where $\varepsilon$ is a parameter characterizing with high probability the accuracy of $\hat{\nu}_{i|j}$ used by the supergraph selection stage via $|\hat{\nu}_{i|j}-\nu_{i|j}|<\varepsilon$ and $\{\chi_m\}_{m\in [p]}$ is a set of parameters associated with the BAR Identifiability Condition. Then, BARObs$\left(\gss{X}{0:n-1}{},d,\tau\leq a_{min}/4\right)$ correctly identifies the true graph with probability at least $(1-\gamma)^2\geq 1-2\gamma$.
\end{thrm}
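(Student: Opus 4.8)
The plan is to decompose the failure event of BARObs into the failure of the supergraph selection stage and the failure of the supergraph trimming stage, bound each by $\gamma$, and combine by a union bound so that both stages succeed with probability at least $(1-\gamma)^2 \geq 1 - 2\gamma$. The common engine for both bounds will be a concentration inequality for the empirical frequencies $\hat P(\cdot)$ computed from the correlated sample path $\gss{X}{0:n-1}{}$. Since the chain starts from stationarity ($\gss{X}{0}{}\sim\pi$) and mixes in time $t_{mix}(\theta)$, I would invoke a Markov-chain Chernoff/Hoeffding bound (of the type available for chains with a given mixing time) to show that for any fixed event $\mathcal{A}$ on one or two consecutive coordinates, $|\hat P(\mathcal{A}) - P(\mathcal{A})|$ is small with high probability once $n$ is roughly $t_{mix}(\theta)\log(1/\delta)/\delta^2$; the factor $\bar\beta$ in \eqref{eq:nRequired_Extended11} is a lower bound on the stationary probabilities $P(\gss{X}{\hat{\mathcal S}(i)}{}=\gss{x}{\hat{\mathcal S}(i)}{})$ that appear in the denominators of the conditional-probability estimates, and the cube $\bar\beta^3$ arises from propagating the error through a ratio of empirical quantities (one factor from the joint, the denominator $\bar\beta$, and an extra $\bar\beta$ from converting an additive error on the joint into an additive error on the conditional).

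First I would make the per-statistic error explicit: conditioning on the event that all the relevant empirical marginals and joints are within an additive $\eta$ of their true values, I would show $|\hat\nu_{i|j} - \nu_{i|j}| < \varepsilon$ for all $i,j$ and $|\hat P(X_m^{+1}=1\mid \gss{X}{\hat{\mathcal S}(m)}{}=\gss{x}{\hat{\mathcal S}(m)}{}) - P(\cdots)| < \tilde\varepsilon = \tau$ for all $m$ and all $2^d$ configurations, choosing $\eta \asymp \tilde\varepsilon\bar\beta^{3/2}$. The number of events to control is $O(p^2)$ for the $\nu$-statistics and $O(p\binom{p}{d}2^d)$ for the conditional probabilities in the trimming stage (one block of $2^d$ configurations for each of the $p$ candidate neighborhoods $\hat{\mathcal S}(m)$, but since $\hat{\mathcal S}(m)$ is random one must union-bound over all $\binom{p}{d}$ possible size-$d$ subsets); a union bound over these $O\!\left(2^{d+1}p\binom{p}{d}\right)$ events, each failing with probability at most $\gamma / (\text{that count})$, is what forces the logarithmic term $\log\!\left(\tfrac{2^{d+1}C}{\gamma}p\binom{p}{d}\right)$ in \eqref{eq:nRequired_Extended11}. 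Plugging the required $\delta$ into the Markov-chain concentration bound yields the stated $n$.

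Next, conditioned on the good event, I would verify correctness of each stage. For supergraph selection: the BAR Identifiability Condition guarantees that for every $m$ there is a gap, so that the true parents of $m$ have $|\nu_{m|l}|$ bounded away from the $|\nu_{m|l'}|$ of the non-parents by at least $\chi_m$; since $\varepsilon < \min_m \chi_m/2$, the ordering of the empirical $|\hat\nu_{m|l}|$ preserves the separation and the top-$d$ selection captures all true parents, i.e., $\mathcal S(m)\subseteq\hat{\mathcal S}(m)$ for all $m$. For supergraph trimming: using \eqref{eq:extension1}, distinct values of $P(X_m^{+1}=1\mid\gss{X}{\hat{\mathcal S}(m)}{})$ differ by at least $a_{min}$, and a node $l\in\hat{\mathcal S}(m)$ is a spurious parent iff it takes both values $0$ and $1$ among the maximizers; since $\tilde\varepsilon = \tau \le a_{min}/4$, the thresholding rule ``keep vectors with $\hat P > \hat v^* - 2\tau$'' selects exactly the true maximizer set $\{\gss{x}{\hat{\mathcal S}(m)}{l,*}\}$, from which $\hat{\mathcal S}_f(m)=\mathcal S(m)$ and the $\pm$ labels $\hat{\mathcal S}_f^{\pm}(m)=\mathcal S^{\pm}(m)$ are read off correctly, exactly as argued in the text preceding Alg.~\ref{alg:Obs1}.

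The main obstacle I anticipate is the concentration step for the correlated chain: one needs a Markov-chain tail bound whose dependence on the mixing time matches the $t_{mix}(\theta)$ factor in \eqref{eq:nRequired_Extended11}, and applying it to ratios $\hat P(\mathcal A\cap\mathcal B)/\hat P(\mathcal B)$ rather than to a single additive functional requires care — in particular keeping track of how the lower bound $\bar\beta$ on denominators enters (hence the $\bar\beta^3$), and ensuring the error bound is uniform over all $2^d$ configurations and all $\binom{p}{d}$ possible realized neighborhoods simultaneously. Everything after that good event is deterministic bookkeeping driven by the two gap conditions ($\chi_m$ for selection, $a_{min}$ for trimming), so the probabilistic content is entirely concentrated in that one step.
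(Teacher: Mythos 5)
Your proposal follows essentially the same route as the paper: the paper's proof is exactly the modular combination of a mixing-time-based Chernoff bound (Theorem 3 of the cited concentration reference) applied with union bounds over the $O(p^2)$ conditional-influence statistics and the $2^{d+1}p\binom{p}{d}$ trimming statistics, the $\chi_m$-gap argument for supergraph selection, the $a_{min}$-gap argument for trimming, and the product $(1-\gamma)^2$ of the two stage success probabilities. The only minor discrepancy is your accounting of the $\bar\beta^3$: in the paper one power of $\bar\beta$ enters through the multiplicative form of the concentration bound (the exponent contains $\tilde\delta^2\kappa$ with $\kappa\geq\bar\beta$) and the other two through setting the additive tolerance $\tilde\delta=\tilde\varepsilon\bar\beta/2$ for the ratio propagation, rather than via a single $\eta\asymp\tilde\varepsilon\bar\beta^{3/2}$ with an additive bound, but this is bookkeeping rather than a gap.
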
 

We note here that the sample complexity given by (\ref{eq:nRequired_Extended11}) has an \emph{exponential} dependence on $d$ via $\bar{\beta}\propto (1/\bar{c})^d$ for some $\bar{c}>1$. Therefore, $n$ scales as $\bar{c}^{3d}$.
Exponential dependence of the sample complexity on $d$ is very usual; in \cite{b15}, the sample complexity has a \emph{double-exponential} dependence on $d$, while exponential dependence on $d$ is unavoidable, e.g., in Ising models as a lower information-theoretic bound derived in \cite{sw12} shows. 

\textbf{Sample Complexity Interpretation}: Consider the static model $\g{Y}=\mathrm{Ber}(\g{A}\g{f}(\g{X})+\g{B}\g{W})$, where $\g{X}\in \{0,1\}^p$ and $\g{X}\sim P_{\sbm{X}}$. Assume that we observe $n$ i.i.d pairs $(\gss{X}{0}{},\gss{Y}{0}{}), (\gss{X}{1}{},\gss{Y}{1}{}),\ldots,$ $(\gss{X}{n-1}{},\gss{Y}{n-1}{})$. 
The required sample complexity is determined by the rate of convergence of the empirical probabilities to the true probabilities.
In this scenario, a sufficient number of samples can be shown to be $n=\Omega(\log p)$. In the dynamic scenario considered here, 
(\ref{eq:nRequired_Extended11}) shows that a sufficient sample complexity  is $\Omega(t_{mix}\log p)$. Again, the sample complexity is determined by the rate that empirical probabilities converge to the true probabilities. Since the BAR model generates almost i.i.d. samples 
every $t_{mix}$ steps, (\ref{eq:nRequired_Extended11}) is consistent with the static scenario result.

\section{Proof of Theorem \ref{thm:BARMxingMain}}
\label{sec:ProofMainThm1}

We first prove a very useful result for the subsequent derivation: 

\begin{lem}\label{lem:CoreLemma}
Let $Z_1,Z_2,\ldots,Z_p$ be random variables such that $0<z_{min}\leq Z_i\leq z_{max}$ almost surely. Assume that $z_{min}\in (0,1)$ and let 
$k\leq p$ random variables among the $Z_i$'s, specifically $Z_1,Z_2,\ldots, Z_k$ without loss of generality, be in $[z_{min},1]$ almost surely and 
$Z_{k+1},Z_{k+2},\ldots, Z_p$ be in $[1,z_{max}]$ almost surely. Then,
\begin{equation}\label{eq:coreLemma1}
E\left[\prod_{i=1}^pZ_i\right]\geq \left(\prod_{i=1}^kE[Z_i]\right)^{\frac{1}{z_{min}}}  \left(\prod_{i=k+1}^pE[Z_i]\right)^{\frac{1}{z_{max}}}.
\end{equation}
In particular: 
\begin{itemize}
\item If $0<z_{min}\leq Z_i\leq 1, \forall i\in [p]$ almost surely, then 
 \begin{equation}\label{eq:coreLemma2}
E\left[\prod_{i=1}^pZ_i\right]\geq \left(\prod_{i=1}^pE[Z_i]\right)^{\frac{1}{z_{min}}}.
\end{equation}
\item If $1\leq Z_i\leq z_{max}, \forall i\in [p]$ almost surely, then 
 \begin{equation}\label{eq:coreLemma3}
E\left[\prod_{i=1}^pZ_i\right]\geq \left(\prod_{i=1}^pE[Z_i]\right)^{\frac{1}{z_{max}}}.
\end{equation}
\end{itemize}
\end{lem}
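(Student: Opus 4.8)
The plan is to reduce the multivariate bound \eqref{eq:coreLemma1} to a family of one-variable estimates by an exponential substitution, and --- a feature worth flagging up front --- to do so without ever using independence of the $Z_i$'s. First I would write, for $i\le k$, $Z_i=e^{-V_i}$ with $V_i:=\log(1/Z_i)\in[0,c]$, $c:=\log(1/z_{min})>0$, and for $i>k$, $Z_i=e^{W_i}$ with $W_i:=\log Z_i\in[0,D]$, $D:=\log z_{max}$. Then $\prod_{i=1}^p Z_i=\exp\!\big(-\sum_{i\le k}V_i+\sum_{i>k}W_i\big)$, and Jensen's inequality applied to the convex function $x\mapsto e^{x}$ gives
\[
E\Big[\prod_{i=1}^p Z_i\Big]\ \ge\ \exp\!\Big(-\sum_{i=1}^{k}E[V_i]+\sum_{i=k+1}^{p}E[W_i]\Big)\ =\ \prod_{i=1}^{k}e^{-E[V_i]}\prod_{i=k+1}^{p}e^{E[W_i]}.
\]
It then suffices to prove the two per-coordinate inequalities $e^{-E[V_i]}\ge(E[Z_i])^{1/z_{min}}$ for $i\le k$ and $e^{E[W_i]}\ge(E[Z_i])^{1/z_{max}}$ for $i>k$; multiplying them into the display above gives \eqref{eq:coreLemma1}, and \eqref{eq:coreLemma2}, \eqref{eq:coreLemma3} are just the specializations $k=p$ and $k=0$.

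For the first per-coordinate bound I would set $m:=E[V_i]\in[0,c]$ and use the secant bound for the convex function $v\mapsto e^{-v}$ on $[0,c]$, namely $e^{-v}\le 1-\tfrac{v}{c}(1-e^{-c})$; taking expectations yields $E[Z_i]\le 1-\tfrac{m}{c}(1-e^{-c})$. Since $z_{min}=e^{-c}$ and the elementary inequality $e^{c}\ge 1+c$ rearranges precisely to $z_{min}\le(1-e^{-c})/c$, this gives $E[Z_i]\le 1-z_{min}m\le e^{-z_{min}m}$ (the last step being $1-x\le e^{-x}$ at $x=z_{min}m$), and raising to the power $1/z_{min}>0$ closes it. The second bound is the mirror image: with $m:=E[W_i]\in[0,D]$, the secant bound for $w\mapsto e^{w}$ on $[0,D]$ gives $E[Z_i]\le 1+\tfrac{m}{D}(e^{D}-1)$, and since $z_{max}=e^{D}$ while $e^{-D}\ge 1-D$ rearranges to $(e^{D}-1)/D\le z_{max}$, we get $E[Z_i]\le 1+z_{max}m\le e^{z_{max}m}$, and raising to the power $1/z_{max}$ closes it. (The case $z_{max}=1$ is trivial, since then $Z_i\equiv 1$ for $i>k$.)

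\textbf{Where I expect the difficulty.} Every step above is elementary once the reductions are chosen, so the real content is choosing them. The one thing that is easy to get wrong is which function to feed into the secant bound: applying convexity directly to $z\mapsto -\log z$ on $[z_{min},1]$ produces $E[-\log Z_i]\le\frac{(1-E[Z_i])\log(1/z_{min})}{1-z_{min}}$, which I expect to be too weak to finish the argument as $E[Z_i]\to 1$ (the coefficient $\log(1/z_{min})/(1-z_{min})$ exceeds $1$). Applying the secant bound to the exponential instead turns the problem into the tight inequalities $1\pm x\le e^{\pm x}$ and $e^{c}\ge 1+c$, after which the rest is bookkeeping; I would also make explicit in the write-up that, since only Jensen's inequality is invoked, the lemma needs no hypothesis on the joint distribution of $(Z_1,\dots,Z_p)$.
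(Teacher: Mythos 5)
Your proof is correct, and its skeleton coincides with the paper's: both start from the same Jensen step (your $E[e^{X}]\geq e^{E[X]}$ with $X=\log\prod_i Z_i$ is literally the paper's $\log E[\prod_i Z_i]\geq E[\sum_i \log Z_i]$), and both then reduce the claim to the per-coordinate inequality $E[\log Z_i]\geq \tfrac{1}{z_{min}}\log E[Z_i]$ on $[z_{min},1]$ (resp.\ $\tfrac{1}{z_{max}}\log E[Z_i]$ on $[1,z_{max}]$). Where you genuinely diverge is in how that per-coordinate inequality is established. The paper works directly with $Z_i$: it applies $\log x\geq \tfrac{x-1}{x}$ pointwise, bounds the denominator by $z_{min}$ (noting $Z_i-1\leq 0$) or $z_{max}$ (noting $Z_i-1\geq 0$), takes expectations, and closes with $x-1\geq\log x$ applied to $E[Z_i]$. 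You instead reparametrize by $V_i=-\log Z_i$ (resp.\ $W_i=\log Z_i$), use the secant bound of the exponential on $[0,c]$ (resp.\ $[0,D]$), and close with $e^{c}\geq 1+c$ and $1\pm x\leq e^{\pm x}$. The two chains of elementary inequalities land on the same intermediate bound, so neither is stronger; yours is arguably a little more transparent about where the exponents $1/z_{min}$ and $1/z_{max}$ come from (they are the reciprocals of the secant slopes), while the paper's is shorter on the page. Your observation that no independence of the $Z_i$'s is needed is accurate and applies equally to the paper's argument, which also uses only Jensen, pointwise bounds, and linearity of expectation; this matters, since the lemma is later invoked with highly dependent $Z_i$'s in the coupling argument. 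Your handling of the degenerate case $z_{max}=1$ is a small point of care the paper leaves implicit.
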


\begin{proof}
Without loss of generality, let $Z_1,Z_2,\ldots, Z_k$ be in $[z_{min},1]$ almost surely and 
$Z_{k+1},Z_{k+2},\ldots, Z_p$ be in $[1,z_{max}]$ almost surely. Then, 
\begin{align*}
&E\left[\prod_{i=1}^pZ_i\right]=\exp\left(\log\left\{E\left[\prod_{i=1}^pZ_i\right]\right\}\right)\underbrace{\geq}_{(a)}\exp\left(E\left[\sum_{i=1}^p\log Z_i\right]\right)\underbrace{\geq}_{(b)}\exp\left(E\left[\sum_{i=1}^p\frac{Z_i-1}{Z_i}\right]\right)\underbrace{\geq}_{(c)}\\
& \exp\left(E\left[\sum_{i=1}^k\frac{Z_i-1}{z_{min}}+\sum_{i=k+1}^p\frac{Z_i-1}{z_{max}}\right]\right)=\exp\left(\sum_{i=1}^k\frac{E[Z_i]-1}{z_{min}}\right)\exp\left(\sum_{i=k+1}^p\frac{E[Z_i]-1}{z_{max}}\right)\underbrace{\geq}_{(d)}\\& 
\exp\left(\frac{1}{z_{min}}\sum_{i=1}^k\log E[Z_i]\right) \exp\left(\frac{1}{z_{max}}\sum_{i=k+1}^p\log E[Z_i]\right)=\exp\left(\log\left\{\left(\prod_{i=1}^kE[Z_i]\right)^{\frac{1}{z_{min}}}\right\}\right)\cdot\\ &
\exp\left(\log\left\{\left(\prod_{i=k+1}^pE[Z_i]\right)^{\frac{1}{z_{max}}}\right\}\right)=\left(\prod_{i=1}^kE[Z_i]\right)^{\frac{1}{z_{min}}}\left(\prod_{i=k+1}^pE[Z_i]\right)^{\frac{1}{z_{max}}},  
\end{align*}
where:
\begin{itemize}
\item In (a), we have used Jensen's inequality.
\item In (b),  we have employed the inequality $\log(x)\geq \frac{x-1}{x}$ for any $x>0$.
\item In (c), we have noted that $-1<Z_i-1\leq 0$ almost surely for $i=1,2,\ldots,k$ and $0\leq Z_i-1$ almost surely for $i=k+1,k+2,\ldots, p$. 
\item In (d), for $i=1,2,\ldots, p$ we have employed the inequality $x\geq  \log(1+x)$, which holds for any $x>-1$.
\end{itemize}

Based on this result, (\ref{eq:coreLemma2}) and (\ref{eq:coreLemma3}) follow immediately by setting $k=p$ and $k=0$, respectively. 
\end{proof}

\textbf{Remark}: Note that $1$ is included to both $[z_{min},1]$ and $[1,z_{max}]$. If some of the $Z_i$'s have all their mass placed at $1$, then they can participate to the inequality either with the $1/z_{min}$ or the $1/z_{max}$ exponent, since both their realized values and their mean values are $1$. 

$\newline$

We now bound the mixing time of the BAR model (\ref{eq:BAR})  based on an appropriate coupling.

\subsection*{Choice of Coupling}
\label{subsec:Coupling}

Let $\g{\tilde{X}}=\{\gss{X}{k}{}\}$ be a copy of the BAR chain $\gss{X}{k+1}{}=\mathrm{Ber}\left(\g{A}\g{f}(\gss{X}{k}{})+\g{B}\gss{W}{k+1}{}\right)$, which is arbitrarily initialized at some point of the state space $\mathcal{X}=\{0,1\}^{p}$, say $\gss{x}{0}{}$. Let also $\g{\tilde{Y}}=\{\gss{Y}{k}{}\}$ be a different copy of the 
same BAR chain initialized at $\gss{y}{0}{}$, which is chosen according to the stationary measure $\pi$. We will upper bound the mixing time of the BAR chain
using the following coupling that respects the transitions of the BAR model:

\emph{Coupling}:

\begin{enumerate}
\item At every time instant $k$, we sample $\gss{W}{k}{}$ from $\mathrm{Ber}(\rho_w)^{\otimes p}$ and we feed this vector to both $\g{\tilde{X}}$ and $\g{\tilde{Y}}$.
\item At every time instant $k$, we draw i.i.d. random variables $U_1,U_2,\ldots, U_p\sim \mathrm{Unif}[0,1]$. We let 
\[
[\gss{X}{k}{}]_i=\mathbb{I}\left\{U_i\leq \sum_{j=1}^pa_{ij}[f_i(\gss{X}{k-1}{})]_j+b_{ii}[\gss{W}{k}{}]_i\right\}
\]
and 
\[
[\gss{Y}{k}{}]_i=\mathbb{I}\left\{U_i\leq \sum_{j=1}^pa_{ij}[f_i(\gss{Y}{k-1}{})]_j+b_{ii}[\gss{W}{k}{}]_i\right\}.
\]
\end{enumerate}

It is straightforward to see that, individually, the processes $\g{\tilde{X}}$ and $\g{\tilde{Y}}$ preserve the transitions of the BAR model (\ref{eq:BAR}). Also, it is immediate to see
that upon the event $\{\gss{X}{m}{}=\gss{Y}{m}{}\}$ for some $m$, the above coupling, by definition, leads to 
\begin{equation}\label{eq:Coupling1}
\gss{X}{k}{}=\gss{Y}{k}{},\: \forall k\geq m.
\end{equation}

\subsection*{Mixing Time Bound Strategy}

The \emph{maximal distance to stationarity} is defined as \cite{lpw08}:
\[
d(n)=\max_{\sbms{x}{0}}\|\g{P}^n(\cdot|\gss{x}{0}{})-\pi\|_{\rm TV},
\]
where $\g{P}^n(\cdot|\gss{x}{0}{})$ denote the $n-$th step transition probabilities of the BAR chain when initialized at $\gss{x}{0}{}$. $\|\cdot\|_{\rm TV}$ is the total variation distance, which, for any two measures $\mu,\nu$ on $\mathcal{X}$, is defined as $\|\mu-\nu\|_{\rm TV}=\frac{1}{2}\sum_{\sbm{x}\in \mathcal{X}}|\mu(\g{x})-\nu(\g{x})|$ \cite{lpw08}. Moreover, 
\[
\bar{d}(n)=\max_{\sbms{x}{0},\sbms{y}{0}}\|\g{P}^n(\cdot|\gss{x}{0}{})-\g{P}^n(\cdot|\gss{y}{0}{})\|_{\rm TV},
\]
denotes the standardized maximal distance, which satisfies:
\[
d(n)\leq \bar{d}(n)\leq 2d(n).
\]
It is usually easier to work with $\bar{d}(n)$ rather than $d(n)$. We also let 
\[
T_{\sbms{x}{0},\sbms{y}{0}}=\min\{n\in \mathbb{N}: \gss{X}{n}{}=\gss{Y}{n}{}|\gss{X}{0}{}=\gss{x}{0}{},\gss{Y}{0}{}=\gss{y}{0}{}\}
\]
be the stopping time until the two processes meet (also called \emph{coupling time}). 

With these definitions, standard coupling theory gives that:
\begin{equation}\label{eq:d_d_bar_coupling}
d(n)\leq \bar{d}(n)\leq \max_{\sbms{x}{0},\sbms{y}{0}}\tilde{P}_{\sbms{x}{0},\sbms{y}{0}}(\gss{X}{n}{}\neq \gss{Y}{n}{})=\max_{\sbms{x}{0},\sbms{y}{0}}\tilde{P}_{\sbms{x}{0},\sbms{y}{0}}(T_{\sbms{x}{0},\sbms{y}{0}}>n),
\end{equation}
where $\tilde{P}$ denotes the coupling measure and $\tilde{P}_{\sbms{x}{0},\sbms{y}{0}}(\cdot)=\tilde{P}(\cdot|\gss{X}{0}{}=\gss{x}{0}{},\gss{Y}{0}{}=\gss{x}{0}{})$.

\textbf{Strategy}: We will use the aforementioned coupling to bound $\bar{d}(n)$.

\subsection*{The BAR Model: Proof of the Mixing Time Bound}

Consider the individual scalar processes $\{[\gss{X}{k}{}]_i\}$ and $\{[\gss{Y}{k}{}]_i\}$ with the coupling described previously. Let $k=0$. Then:

\[
\tilde{P}_{\sbms{x}{0},\sbms{y}{0}}([\gss{X}{1}{}]_i\neq [\gss{Y}{1}{}]_i)=\sum_{j=1}^pa_{ij}([f_i(\gss{x}{0}{})]_j-[f_i(\gss{y}{0}{})]_j),
\]
given that $\sum_{j=1}^pa_{ij}[f_i(\gss{x}{0}{})]_j>\sum_{j=1}^pa_{ij}[f_i(\gss{y}{0}{})]_j$ or
\[
\tilde{P}_{\sbms{x}{0},\sbms{y}{0}}([\gss{X}{1}{}]_i\neq [\gss{Y}{1}{}]_i)=\sum_{j=1}^pa_{ij}([f_i(\gss{y}{0}{})]_j-[f_i(\gss{x}{0}{})]_j)
\]
in the opposite case. Therefore,
\begin{align}\label{eq:Proof1}
\tilde{P}_{\sbms{x}{0},\sbms{y}{0}}([\gss{X}{1}{}]_i\neq [\gss{Y}{1}{}]_i)&=\left|\sum_{j=1}^pa_{ij}([f_i(\gss{x}{0}{})]_j-[f_i(\gss{y}{0}{})]_j)\right|\leq \sum_{j=1}^pa_{ij}|[f_i(\gss{x}{0}{})]_j-[f_i(\gss{y}{0}{})]_j|\nonumber\\&= \sum_{j=1}^pa_{ij}|[\gss{x}{0}{}]_j-[\gss{y}{0}{}]_j|
\end{align}
and correspondingly
\begin{align}\label{eq:Proof2}
\tilde{P}_{\sbms{x}{0},\sbms{y}{0}}([\gss{X}{1}{}]_i=[\gss{Y}{1}{}]_i)&=1-\left|\sum_{j=1}^pa_{ij}([f_i(\gss{x}{0}{})]_j-[f_i(\gss{y}{0}{})]_j)\right|\geq 1-\sum_{j=1}^pa_{ij}|[f_i(\gss{x}{0}{})]_j-[f_i(\gss{y}{0}{})]_j|\nonumber\\&=1-\sum_{j=1}^pa_{ij}|[\gss{x}{0}{}]_j-[\gss{y}{0}{}]_j|.
\end{align}
In (\ref{eq:Proof1}) and (\ref{eq:Proof2}) we have used the observation that $|[f_i(\gss{x}{0}{})]_j-[f_i(\gss{y}{0}{})]_j|=|[\gss{x}{0}{}]_j-[\gss{y}{0}{}]_j|$ for any $f_i(\cdot)$. Extending this argument to an arbitrary time instant $k$ before the coupling time, we can see that 

\begin{align}\label{eq:Proof111}
\tilde{P}_{\sbms{x}{k-1},\sbms{y}{k-1}}([\gss{X}{k}{}]_i\neq [\gss{Y}{k}{}]_i)&=\left|\sum_{j=1}^pa_{ij}([f_i(\gss{x}{k-1}{})]_j-[f_i(\gss{y}{k-1}{})]_j)\right|\leq \sum_{j=1}^pa_{ij}|[f_i(\gss{x}{k-1}{})]_j-[f_i(\gss{y}{k-1}{})]_j|\nonumber\\&= \sum_{j=1}^pa_{ij}|[\gss{x}{k-1}{}]_j-[\gss{y}{k-1}{}]_j|
\end{align}
and correspondingly
\begin{align}\label{eq:Proof222}
\tilde{P}_{\sbms{x}{k-1},\sbms{y}{k-1}}([\gss{X}{k}{}]_i=[\gss{Y}{k}{}]_i)&=1-\left|\sum_{j=1}^pa_{ij}([f_i(\gss{x}{k-1}{})]_j-[f_i(\gss{y}{k-1}{})]_j)\right|\geq 1-\sum_{j=1}^pa_{ij}|[f_i(\gss{x}{k-1}{})]_j-[f_i(\gss{y}{k-1}{})]_j|\nonumber\\&=1-\sum_{j=1}^pa_{ij}|[\gss{x}{k-1}{}]_j-[\gss{y}{k-1}{}]_j|,
\end{align}
where $\tilde{P}_{\sbms{x}{k-1},\sbms{y}{k-1}}(\cdot)=\tilde{P}(\cdot|\gss{X}{k-1}{}=\gss{x}{k-1}{},\gss{Y}{k-1}{}=\gss{y}{k-1}{})$.

Considering now (\ref{eq:d_d_bar_coupling}) we have:
{\small
\begin{align}\label{eq:Proof5_12}
&\tilde{P}_{\sbms{x}{0},\sbms{y}{0}}(T_{\sbms{x}{0},\sbms{y}{0}}>n)\underbrace{=}_{(a)}\tilde{P}_{\sbms{x}{0},\sbms{y}{0}}(\gss{X}{n}{}\neq \gss{Y}{n}{})\underbrace{=}_{(b)}\sum_{\sbms{x}{n-1},\sbms{y}{n-1}}\tilde{P}_{\sbms{x}{0},\sbms{y}{0}}(\gss{X}{n}{}\neq \gss{Y}{n}{},\gss{X}{n-1}{}=\gss{x}{n-1}{},\gss{Y}{n-1}{}=\gss{y}{n-1}{})\underbrace{=}_{(c)}\nonumber\\&\sum_{\sbms{x}{n-1},\sbms{y}{n-1}}\tilde{P}_{\sbms{x}{n-1},\sbms{y}{n-1}}(\gss{X}{n}{}\neq \gss{Y}{n}{})\tilde{P}_{\sbms{x}{0},\sbms{y}{0}}(\gss{x}{n-1}{},\gss{y}{n-1}{})=\tilde{E}_{\sbms{x}{0},\sbms{y}{0}}\left[\tilde{P}_{\sbms{X}{n-1},\sbms{Y}{n-1}}(\gss{X}{n}{}\neq \gss{Y}{n}{})\right]=\nonumber\\
&\tilde{E}_{\sbms{x}{0},\sbms{y}{0}}\left[\tilde{P}_{\sbms{X}{n-1},\sbms{Y}{n-1}}\left([\gss{X}{n}{}]_i\neq [\gss{Y}{n}{}]_i\: \text{for at least one}\: i\in [p]\right)\right]=\tilde{E}_{\sbms{x}{0},\sbms{y}{0}}\left[1-\tilde{P}_{\sbms{X}{n-1},\sbms{Y}{n-1}}\left([\gss{X}{n}{}]_i=[\gss{Y}{n}{}]_i, \forall i\in [p]\right)\right]=\nonumber\\&\tilde{E}_{\sbms{x}{0},\sbms{y}{0}}\left[1-\prod_{i=1}^p\left(1-\left|\sum_{j=1}^pa_{ij}([f_i(\gss{X}{n-1}{})]_j-[f_i(\gss{Y}{n-1}{})]_j)\right|\right)\right]\underbrace{\leq}_{(d)} \tilde{E}_{\sbms{x}{0},\sbms{y}{0}}\left[1-\prod_{i=1}^p\left(1-\sum_{j=1}^pa_{ij}\left|[\gss{X}{n-1}{}]_j-[\gss{Y}{n-1}{}]_j\right|\right)\right]\nonumber\\&\underbrace{\leq}_{(e)} 
1-\prod_{i=1}^p\left(1-\sum_{j=1}^pa_{ij}\tilde{E}_{\sbms{x}{0},\sbms{y}{0}}\left[|[\gss{X}{n-1}{}]_j-[\gss{Y}{n-1}{}]_j|\right]\right)^{\frac{1}{1-\max_{1\leq i\leq p}\sum_{j=1}^pa_{ij}}}\leq \nonumber\\ &1-\left(1-\max_{1\leq i\leq p}\sum_{j=1}^pa_{ij}\tilde{E}_{\sbms{x}{0},\sbms{y}{0}}\left[|[\gss{X}{n-1}{}]_j-[\gss{Y}{n-1}{}]_j|\right]\right)^{\frac{p}{1-\max_{1\leq i\leq p}\sum_{j=1}^pa_{ij}}}\underbrace{\leq}_{(f)}\nonumber\\ &\frac{p}{1-\max_{1\leq i\leq p}\sum_{j=1}^pa_{ij}} \max_{1\leq i\leq p}\sum_{j=1}^pa_{ij}\tilde{E}_{\sbms{x}{0},\sbms{y}{0}}\left[|[\gss{X}{n-1}{}]_j-[\gss{Y}{n-1}{}]_j|\right].   
\end{align}  
}
\begin{itemize}
\item In (a), we explain the equality in (\ref{eq:d_d_bar_coupling}) by noting the sequence of implications $\{\gss{X}{n}{}\neq \gss{Y}{n}{}\}\Rightarrow \{\gss{X}{n-1}{}\neq \gss{Y}{n-1}{}\}\Rightarrow\cdots \Rightarrow\{\gss{X}{0}{}\neq \gss{Y}{0}{}\}$, which is a consequence of the coupling.
\item In (b), we marginalize over $\gss{X}{n-1}{}$ and $\gss{Y}{n-1}{}$. 
\item In (c), we employ the fact that $\tilde{P}(\gss{X}{n}{}\neq \gss{Y}{n}{}|\gss{X}{n-1}{}=\gss{x}{n-1}{},\gss{Y}{n-1}{}=\gss{y}{n-1}{},\gss{X}{0}{}=\gss{x}{0}{},\gss{Y}{0}{}=\gss{y}{0}{})=\tilde{P}(\gss{X}{n}{}\neq \gss{Y}{n}{}|\gss{X}{n-1}{}=\gss{x}{n-1}{},\gss{Y}{n-1}{}=\gss{y}{n-1}{})=\tilde{P}_{\sbms{x}{n-1},\sbms{y}{n-1}}(\gss{X}{n}{}\neq \gss{Y}{n}{})$. 
\item In (d), (\ref{eq:Proof222}) is plugged in.
\item In (e), we use (\ref{eq:coreLemma2}) in Lemma \ref{lem:CoreLemma} by setting $Z_i=1-\sum_{j=1}^pa_{ij}|[\gss{X}{n-1}{}]_j-[\gss{Y}{n-1}{}]_j|, \forall i\in [p]$ and by noting that $0<1-\max_{1\leq i\leq p}\sum_{j=1}^pa_{ij}\leq Z_i\leq 1, \forall i\in [p]$ almost surely.
\item In (f), Bernoulli's inequality for $r\geq 1$ is employed:
\[
\left(1-\max_{1\leq i\leq p}\sum_{j=1}^pa_{ij}\tilde{E}_{\sbms{x}{0},\sbms{y}{0}}\left[|[\gss{X}{n-1}{}]_j-[\gss{Y}{n-1}{}]_j|\right]\right)^{r}\geq 1-r\max_{1\leq i\leq p}\sum_{j=1}^pa_{ij}\tilde{E}_{\sbms{x}{0},\sbms{y}{0}}\left[|[\gss{X}{n-1}{}]_j-[\gss{Y}{n-1}{}]_j|\right],
\] 
with $r=\frac{p}{1-\max_{1\leq i\leq p}\sum_{j=1}^pa_{ij}}$.
\end{itemize}

We now note that the following recursive relation holds:
\begin{equation}\label{eq:recursion}
\tilde{E}_{\sbms{x}{0},\sbms{y}{0}}[|[\gss{X}{k-1}{}]_j-[\gss{Y}{k-1}{}]_j|]\leq \sum_{l=1}^pa_{jl}\tilde{E}_{\sbms{x}{0},\sbms{y}{0}}\left[|[\gss{X}{k-2}{}]_l-[\gss{Y}{k-2}{}]_l|\right].
\end{equation}
To prove this, we have:

\begin{align*}
&\tilde{E}_{\sbms{x}{0},\sbms{y}{0}}[|[\gss{X}{k-1}{}]_j-[\gss{Y}{k-1}{}]_j|]\underbrace{=}_{(g)}\sum_{\sbms{x}{k-2},\sbms{y}{k-2}}\tilde{E}_{\sbms{x}{k-2},\sbms{y}{k-2}}[|[\gss{X}{k-1}{}]_j-[\gss{Y}{k-1}{}]_j|] \tilde{P}_{\sbms{x}{0},\sbms{y}{0}}(\gss{x}{k-2}{},\gss{y}{k-2}{})\underbrace{=}_{(h)}\\&
\sum_{\sbms{x}{k-2},\sbms{y}{k-2}}\tilde{P}_{\sbms{x}{k-2},\sbms{y}{k-2}}([\gss{X}{k-1}{}]_j\neq [\gss{Y}{k-1}{}]_j)\tilde{P}_{\sbms{x}{0},\sbms{y}{0}}(\gss{x}{k-2}{},\gss{y}{k-2}{})
\\& \underbrace{\leq}_{(i)}\sum_{l=1}^pa_{jl}\tilde{E}_{\sbms{x}{0},\sbms{y}{0}}\left[|[f_j(\gss{X}{k-2}{})]_l-[f_j(\gss{Y}{k-2}{})]_l|\right]=\sum_{l=1}^pa_{jl}\tilde{E}_{\sbms{x}{0},\sbms{y}{0}}\left[|[\gss{X}{k-2}{}]_l-[\gss{Y}{k-2}{}]_l|\right].
\end{align*}

Note that:
\begin{itemize}
\item In $(g)$, we condition with respect to $\gss{X}{k-2}{},\gss{Y}{k-2}{}$. 
\item In $(h)$, we use the fact that $|[\gss{X}{k-1}{}]_j-[\gss{Y}{k-1}{}]_j|$ is a Bernoulli random variable. 
\item In $(i)$,  we employ (\ref{eq:Proof111}).
\end{itemize}
$\\$


We now upper bound $\max_{1\leq i\leq p}\sum_{j=1}^pa_{ij}\tilde{E}_{\sbms{x}{0},\sbms{y}{0}}\left[|[\gss{X}{n-1}{}]_j-[\gss{Y}{n-1}{}]_j|\right]$ in (\ref{eq:Proof5_12}) using the recursion (\ref{eq:recursion}). To this end, we have:

\begin{align*}
&\max_{1\leq i\leq p}\sum_{j=1}^pa_{ij}\tilde{E}_{\sbms{x}{0},\sbms{y}{0}}\left[|[\gss{X}{n-1}{}]_j-[\gss{Y}{n-1}{}]_j|\right]\leq\\& 
\max_{1\leq i\leq p}\sum_{j=1}^pa_{ij}\sum_{i_1=1}^pa_{ji_1}\sum_{i_2=1}^{p}a_{i_1i_2}\cdots \sum_{i_{n-1}=1}^pa_{i_{n-2}i_{n-1}}|[\gss{x}{0}{}]_{i_{n-1}}-[\gss{y}{0}{}]_{i_{n-1}}|\leq\\& \max_{1\leq i\leq p}\sum_{j=1}^pa_{ij}\sum_{i_1=1}^pa_{ji_1}\sum_{i_2=1}^{p}a_{i_1i_2}\cdots \sum_{i_{n-1}=1}^pa_{i_{n-2}i_{n-1}}
\end{align*}

Therefore, 
\begin{align}\label{eq:crucious3}
&\max_{1\leq i\leq p}\sum_{j=1}^pa_{ij}\tilde{E}_{\sbms{x}{0},\sbms{y}{0}}\left[|[\gss{X}{n-1}{}]_j-[\gss{Y}{n-1}{}]_j|\right]\leq \left(\max_{1\leq i\leq p}\sum_{j=1}^pa_{ij}\right)^{n}.
\end{align}
By (\ref{eq:d_d_bar_coupling}), (\ref{eq:Proof5_12}) and (\ref{eq:crucious3}), we obtain
\[
\bar{d}(n)=\max_{\sbms{x}{0},\sbms{y}{0}}\|\g{P}^n(\cdot|\gss{x}{0}{})-\g{P}^n(\cdot|\gss{y}{0}{})\|_{\rm TV}\leq \frac{p}{1-\max_{1\leq i\leq p}\sum_{j=1}^pa_{ij}} \left(\max_{1\leq i\leq p}\sum_{j=1}^pa_{ij}\right)^{n}.
\]
Hence, the number of required steps to mixing can be upper bounded by requiring:
\[
\frac{p}{1-\max_{1\leq i\leq p}\sum_{j=1}^pa_{ij}} \left(\max_{1\leq i\leq p}\sum_{j=1}^pa_{ij}\right)^{n} \leq \theta.
\]

This immediately shows that 
\[
t_{mix}(\theta)\leq \left\lceil\frac{\log\left(\frac{\theta \left(1-\max_{1\leq i\leq p}\sum_{j=1}^pa_{ij}\right)}{p}\right)}{\log\left(\max_{1\leq i\leq p}\sum_{j=1}^pa_{ij}\right)}\right\rceil.
\]

Furthermore, we can write:
\begin{align*}
\max_{1\leq i\leq p}\sum_{j=1}^{p}a_{ij}=1-\left(1-\max_{1\leq i\leq p}\sum_{j=1}^{p}a_{ij}\right),
\end{align*}
thus
\begin{align*}
\log\left(\max_{1\leq i\leq p}\sum_{j=1}^{p}a_{ij}\right)&=\log\left(1-\left(1-\max_{1\leq i\leq p}\sum_{j=1}^{p}a_{ij}\right)\right)\leq \frac{-2\left(1-\max_{1\leq i\leq p}\sum_{j=1}^{p}a_{ij}\right)}{2-\left(1-\max_{1\leq i\leq p}\sum_{j=1}^{p}a_{ij}\right)}\\&= \frac{2\left(\max_{1\leq i\leq p}\sum_{j=1}^{p}a_{ij} -1\right)}{1+\max_{1\leq i\leq p}\sum_{j=1}^{p}a_{ij}}.
\end{align*}

Here, we have used the inequality $\log(1+x)\leq 2x/(2+x)$ for $x\in (-1,0]$. We therefore obtain:
\begin{align*}
t_{mix}(\theta)&\leq \left\lceil\frac{1}{2}\frac{1+\max_{1\leq i\leq p}\sum_{j=1}^pa_{ij}}{1-\max_{1\leq i\leq p}\sum_{j=1}^pa_{ij}}\left(\log p -\log \left(\theta \left(1-\max_{1\leq i\leq p}\sum_{j=1}^pa_{ij}\right)\right)\right)\right\rceil\\
&\leq \left\lceil\frac{1}{1-\max_{1\leq i\leq p}\sum_{j=1}^pa_{ij}}\left(\log p -\log \left(\theta \left(1-\max_{1\leq i\leq p}\sum_{j=1}^pa_{ij}\right)\right)\right)\right\rceil. 
\end{align*}

\section{Proof of Theorem \ref{thm:ExtendedBARObs}}
\label{sec:ProofMainThm2}

In this section, we proceed in a modular fashion, proving first some necessary intermediate results. We then combine these results to prove    
Theorem \ref{thm:ExtendedBARObs}.

\subsection{Transition Matrix of the BAR Model}
\label{subsec:BAR-MC}

Suppose that $\gss{X}{0}{}\sim P_{\sbms{X}{0}}$, where $P_{\sbms{X}{0}}$ can be an arbitrary measure or $\pi$. Let $\tilde{\g{X}}=\{\gss{X}{n}{}\}_{ n\geq 0}$ be the sequence of samples generated by the BAR model (\ref{eq:BAR}) when initialized at $\gss{X}{0}{}$. Then, $\tilde{\g{X}}$ is a homogeneous, first order Markov chain on the finite state space $\mathcal{X}=\{0,1\}^p$ with transition matrix given by:
\begin{align}
P(\gss{x}{k+1}{}|\gss{x}{k}{})&=E_{\sbms{W}{k+1}}\left[\g{P}(\gss{x}{k+1}{}|\gss{x}{k}{},\gss{W}{k+1}{})\right]\nonumber\\&= E_{\sbms{W}{k+1}}\left[\prod_{i=1}^{p}[\g{A}\g{f}(\gss{x}{k}{})+\g{B}\gss{W}{k+1}{}]_i^{[\sbms{x}{k+1}]_i}(1-[\g{A}\g{f}(\gss{x}{k}{})+\g{B}\gss{W}{k+1}{}]_i)^{1-[\sbms{x}{k+1}]_i}\right]\nonumber\\&=\sum_{\sbms{w}{k+1}\in \mathcal{X}}\prod_{j=1}^p\rho_w^{[\sbms{w}{k+1}]_j}(1-\rho_w)^{1-[\sbms{w}{k+1}]_j}\cdot\nonumber\\ &\ \  \ \ \ \prod_{i=1}^{p}\left[\g{A}\g{f}(\gss{x}{k}{})+\g{B}\gss{w}{k+1}{}\right]_i^{[\sbms{x}{k+1}]_i}\left(1-\left[\g{A}\g{f}(\gss{x}{k}{})+\g{B}\gss{w}{k+1}{}\right]_i\right)^{1-[\sbms{x}{k+1}]_i}\nonumber\\&=\prod_{i=1}^{p}[\g{A}\g{f}(\gss{x}{k}{})+\rho_w\g{B}\gss{1}{}{}]_i^{[\sbms{x}{k+1}]_i}(1-[\g{A}\g{f}(\gss{x}{k}{})+\rho_w\g{B}\gss{1}{}{}]_i)^{1-[\sbms{x}{k+1}]_i},
\end{align}
where we have used the independence of the involved random variables as functions of $\gss{W}{k+1}{}$ for a given $\gss{x}{k}{}$ and the fact that 
for each $i$ only one of the two terms $\cdot^{[\sbms{x}{k+1}]_i}$ or $\cdot^{1-[\sbms{x}{k+1}]_i}$ appears in the product. Here, $\g{1}$ represents the 
$p\times 1$ all-ones vector.

It is easy to see that the BAR chain is irreducible and aperiodic, and further since it is finite state, the chain is geometrically ergodic \cite{r95}.  

\subsection{Uniform Bounds for Required Stationary Probabilities}
\label{subsec:BARSpecif}

In this section, we derive some uniform bounds on specific stationary probabilities that emerge in the proposed structure estimator. 

\subsubsection{Bounding  Marginal Stationary Probabilities for Supergraph Selection}
 We first observe that since the BAR chain is irreducible and aperiodic, all states are positive recurrent. Thus, $0< \pi(\g{x})< 1, \forall \g{x}\in \mathcal{X}$. If the transition matrix 
is doubly stochastic, the stationary distribution is uniform. Thus, $\pi(\g{x})=1/2^p, \forall \g{x}\in \mathcal{X}$. This special but important case reveals that $\pi(\g{x})$ can be positive for all $\g{x}$, but $\pi(\g{x})\downarrow 0$ as $p\rightarrow \infty$. In other words, as $p$ increases, the minimum fraction of time that an irreducible chain spends at any given state decreases. In this important special case, 
\begin{align*}
P(X_l=x_l)=\sum_{\sbm{x}: [\sbm{x}]_l=x_l}\frac{1}{2^p}=\frac{1}{2}
\end{align*}
for $x_l\in \{0,1\}$, where $X_l=[\gss{X}{k}{}]_l$ and the underlying measure of $\gss{X}{k}{}$ is $\pi$. 

To continue with the derivation of uniform lower bounds on the desired stationary probabilities, consider for the moment any sample path converging to stationarity.  Let $\gss{p}{k+1}{}$ be the vector
\[
\gss{p}{k+1}{}=\left[P([\gss{X}{k+1}{}]_1=1),P([\gss{X}{k+1}{}]_2=1),\ldots,P([\gss{X}{k+1}{}]_p=1)\right]^T.
\]
Using (\ref{eq:BAR}) we obtain:
\begin{align}\label{eq:ProbEvolution}
\gss{p}{k+1}{}&=E\left[\gss{X}{k+1}{}\right]=E\left[E\left[\mathrm{Ber}\left(\g{A}\g{f}(\gss{X}{k}{})+\g{B}\gss{W}{k+1}{}\right)|\gss{X}{k}{},\gss{W}{k+1}{}\right]\right]\nonumber\\&=E\left[\g{A}\g{f}(\gss{X}{k}{})+\g{B}\gss{W}{k+1}{}\right]=\g{A}\g{f}(\gss{p}{k}{})+\rho_w\g{B1}.
\end{align}
Let $\g{\bar{f}}$ be the vector with $1$'s at the locations where $\g{f}(\cdot)$ inverts polarities and zeros elsewhere. Also, let $\g{\tilde{A}}$ be $\g{A}$ with negated the entries where a polarity inversion happens. At stationarity,
\begin{align*}
\g{p}=\g{A}\g{f}(\g{p})+\rho_w\g{B1}=\g{\tilde{A}}(\g{1}\otimes\g{p})+\g{A}\g{\bar{f}}+\rho_w\g{B1},
\end{align*} 
where $\otimes$ denotes the Kronecker product. Letting $\g{\hat{A}}$ denote the $p\times p$ matrix whose $i$th row $\gss{\hat{a}}{i}{T}$ corresponds to 
\[
\gss{\hat{a}}{i}{T}=\gss{\tilde{A}}{i,(i-1)p+1:ip}{},
\]
the previous equation becomes $\g{p}=\g{\hat{A}}\g{p}+\g{A}\g{\bar{f}}+\rho_w\g{B1}$, leading to
\begin{equation}\label{eq:marg1}
\g{p}=\left(\g{I}-\g{\hat{A}}\right)^{-1}\left(\g{A}\g{\bar{f}}+\rho_w\g{B1}\right).
\end{equation}
The invertibility of $\g{I}-\g{\hat{A}}$ is ensured by the following lemma:
\begin{lem}\label{lem:Marg1}
Let $\rho(\cdot)$ denote the spectral radius of a matrix. Then,
\[
\rho(\g{\hat{A}})\leq \rho(\g{\bar{A}})<1,
\]
where $\g{\bar{A}}$ is given by (\ref{eq:Abar}).
\end{lem}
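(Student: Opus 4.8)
The plan is to observe that $\g{\bar{A}}$ is exactly the entrywise absolute value of $\g{\hat{A}}$, then to bound $\rho(\g{\bar{A}})$ by the maximum row sum of $\g{\bar{A}}$, and finally to invoke the classical comparison principle for the spectral radius of a matrix dominated entrywise by a nonnegative matrix.

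First I would unpack the definitions. Since $\g{A}=\mathrm{diag}(\gss{a}{1}{T},\ldots,\gss{a}{p}{T})$, the $i$th row of $\g{A}$ vanishes outside columns $(i-1)p+1,\ldots,ip$, where it equals $\gss{a}{i}{T}$. The matrix $\g{\tilde{A}}$ is $\g{A}$ with negated precisely those entries at which a polarity inversion occurs, i.e. the entries indexed by $\mathcal{S}^{-}(i)$ in row $i$ (recall the convention $f_i(X_j)=X_j$ for $j\in\mathcal{V}\setminus\mathcal{S}(i)$, and $a_{ij}=0$ there anyway), and $\gss{\hat{a}}{i}{T}=\gss{\tilde{A}}{i,(i-1)p+1:ip}{}$ is the $i$th diagonal block of $\g{\tilde{A}}$. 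Hence $[\g{\hat{A}}]_{ij}=a_{ij}$ for $j\notin\mathcal{S}^{-}(i)$ and $[\g{\hat{A}}]_{ij}=-a_{ij}$ for $j\in\mathcal{S}^{-}(i)$. Since $a_{ij}\geq 0$ for all $i,j$, this gives $\bigl|[\g{\hat{A}}]_{ij}\bigr|=a_{ij}=[\g{\bar{A}}]_{ij}$, that is, $\g{\bar{A}}=|\g{\hat{A}}|$ entrywise.

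Next I would bound $\rho(\g{\bar{A}})$. The matrix $\g{\bar{A}}$ is entrywise nonnegative, and by \pref{eq:BARcoeffs} its $i$th row sums to $\sum_{j=1}^{p}a_{ij}=1-b_{ii}\leq 1-b_{min}<1$. Thus its induced $\infty$-norm (maximum absolute row sum) is strictly less than $1$, and since the spectral radius is dominated by every induced matrix norm, $\rho(\g{\bar{A}})\leq\|\g{\bar{A}}\|_{\infty}=\max_{1\leq i\leq p}\sum_{j=1}^{p}a_{ij}<1$.

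Finally I would use the entrywise domination $|\g{\hat{A}}|=\g{\bar{A}}$ to conclude $\rho(\g{\hat{A}})\leq\rho(\g{\bar{A}})$. Concretely, the triangle inequality applied to matrix products gives $|\g{\hat{A}}^{n}|\leq|\g{\hat{A}}|^{n}=\g{\bar{A}}^{n}$ entrywise for every $n$; taking the monotone, submultiplicative entrywise norm $\|\g{M}\|=\sum_{i,j}\bigl|[\g{M}]_{ij}\bigr|$ then yields $\|\g{\hat{A}}^{n}\|\leq\|\g{\bar{A}}^{n}\|$, and Gelfand's formula $\rho(\g{M})=\lim_{n\to\infty}\|\g{M}^{n}\|^{1/n}$ gives $\rho(\g{\hat{A}})\leq\rho(\g{\bar{A}})$ (equivalently, this is the standard fact $\rho(\g{\hat{A}})\leq\rho(|\g{\hat{A}}|)$). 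Chaining the three steps gives $\rho(\g{\hat{A}})\leq\rho(\g{\bar{A}})<1$, which in particular shows $1$ is not an eigenvalue of $\g{\hat{A}}$, so $\g{I}-\g{\hat{A}}$ is invertible with $(\g{I}-\g{\hat{A}})^{-1}=\sum_{n\geq 0}\g{\hat{A}}^{n}$. The only mildly delicate point is this last step: one must pick a matrix norm that is simultaneously monotone under entrywise absolute values and submultiplicative, which the entrywise $\ell_1$ norm (or the Frobenius norm) is; the rest is bookkeeping with the definitions.
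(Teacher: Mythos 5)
Your proof is correct. It shares the paper's high-level template --- compare norms of powers of $\g{\hat{A}}$ and $\g{\bar{A}}$, pass to spectral radii via Gelfand's formula, and finish with a row-sum bound --- but the mechanism of the comparison step is genuinely different and, frankly, tighter. The paper works with the spectral norm and argues $\|\g{\hat{A}}^r\|_2\leq\|\g{\bar{A}}^r\|_2$ by reasoning informally about the sign pattern of the maximizing unit vector and about how the negative entries of $\g{\hat{A}}$ ``contract'' the entries of $\g{\hat{A}}^r$ as $r$ grows; this is an appeal to the same comparison principle you use, but it is not spelled out rigorously. You instead invoke the textbook fact $\rho(\g{\hat{A}})\leq\rho(|\g{\hat{A}}|)$, established by the entrywise triangle-inequality domination $|\g{\hat{A}}^{n}|\leq|\g{\hat{A}}|^{n}=\g{\bar{A}}^{n}$ together with a monotone, submultiplicative norm and Gelfand's formula; every step is airtight, and you correctly identify the one point that needs care (the chosen norm must be both monotone under entrywise absolute values and submultiplicative, which the entrywise $\ell_1$ norm is, since $\sum_{i,j}\bigl|\sum_k a_{ik}b_{kj}\bigr|\leq\bigl(\sum_{i,k}|a_{ik}|\bigr)\bigl(\sum_{k,j}|b_{kj}|\bigr)$). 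For the final inequality the paper cites Gershgorin's theorem whereas you bound $\rho(\g{\bar{A}})$ by the induced $\infty$-norm; for a nonnegative matrix these yield the identical bound $\max_{i}\sum_{j}a_{ij}=1-\min_{i}b_{ii}\leq 1-b_{min}<1$ from \pref{eq:BARcoeffs}. Your closing observation that $\rho(\g{\hat{A}})<1$ gives invertibility of $\g{I}-\g{\hat{A}}$ via the Neumann series is exactly the use the paper makes of the lemma in \pref{eq:marg1}.
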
  
\begin{proof}
Appendix \ref{app:1}.
\end{proof}

 Using (\ref{eq:marg1}), we can bound $P(X_l=x_l)$ as follows:
\begin{align}\label{eq:beta_Init}
P(X_l=x_l)\geq \min_{1\leq i\leq p}\left[\left(\g{I}-\g{\hat{A}}\right)^{-1}\left(\g{A}\g{\bar{f}}+\rho_w\g{B1}\right)\right]_i\wedge 1-\left\|\left(\g{I}-\g{\hat{A}}\right)^{-1}\left(\g{A}\g{\bar{f}}+\rho_w\g{B1}\right)\right\|_{\infty}.
\end{align} 
Nevertheless, it is not clear if this lower bound is independent of $p$. To ensure that $P(X_l=x_l)$ is lower bounded by a quantity independent of $p$,
we note that $\g{p}=\g{A}\g{f}(\g{p})+\rho_w\g{B1}$ implies that $[\g{A1}+\rho_w\g{B1}]_l\geq P(X_l=1)\geq \rho_w[\g{B1}]_l$, which leads to $P(X_l=x_l)\geq \beta$ with 
\begin{align}\label{eq:beta1}
\beta=\rho_w\left(1-\max_{1\leq i\leq p}\sum_{j=1}^pa_{ij}\right)\wedge 1-\|\g{A1}+\rho_w\g{B1}\|_{\infty}\geq \rho_wb_{min}\wedge 1-\|\g{A1}+\rho_w\g{B1}\|_{\infty}.
\end{align} 
This lower bound is clearly independent of $p$. For any given $p$, using either (\ref{eq:beta_Init}) or (\ref{eq:beta1}) in the subsequent sample complexities is valid, with a better (lower) sample complexity when the tighter (maximum) bound between (\ref{eq:beta_Init}) and (\ref{eq:beta1}) is employed.
A special case of interest is summarized by the following Lemma:
\begin{lem}\label{lem:Marg2}
Let $f_1(\gss{X}{k}{})=f_2(\gss{X}{k}{})=\cdots =f_p(\gss{X}{k}{})=\gss{X}{k}{}$, i.e., consider (\ref{eq:BAR2}). Then, $\g{p}=\rho_w\g{1}$.
\end{lem}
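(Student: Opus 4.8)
The plan is to recognize that, under the hypothesis $f_1(\gss{X}{k}{})=\cdots=f_p(\gss{X}{k}{})=\gss{X}{k}{}$, the general stationarity relation (\ref{eq:ProbEvolution}) collapses to the purely linear fixed point $\g{p}=\g{\bar{A}}\g{p}+\rho_w\g{B}\g{1}$ (this is exactly (\ref{eq:marg1}) in the degenerate case $\g{\hat{A}}=\g{\bar{A}}$, $\g{\bar{f}}=\g{0}$, since no polarity inversions occur). It then suffices to exhibit $\g{p}=\rho_w\g{1}$ as a solution of this equation and to invoke uniqueness.

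First I would put the coefficient constraint (\ref{eq:BARcoeffs}), i.e.\ $\sum_{j\in\mathcal{S}(i)}a_{ij}+b_{ii}=1$ for every $i\in[p]$, into matrix form as $\g{\bar{A}}\g{1}+\g{B}\g{1}=\g{1}$, equivalently $\g{B}\g{1}=(\g{I}-\g{\bar{A}})\g{1}$. Substituting this into the fixed-point equation and collecting the $\g{\bar{A}}\g{p}$ term on the left-hand side yields $(\g{I}-\g{\bar{A}})\g{p}=\rho_w(\g{I}-\g{\bar{A}})\g{1}$. Since $\rho(\g{\bar{A}})\leq\rho(\g{\bar{A}})<1$ by Lemma \ref{lem:Marg1} (alternatively, every row sum of $\g{\bar{A}}$ equals $1-b_{ii}\leq 1-b_{min}<1$, so $\g{\bar{A}}$ is strictly substochastic and $\g{I}-\g{\bar{A}}$ is nonsingular), I may multiply both sides by $(\g{I}-\g{\bar{A}})^{-1}$ to conclude $\g{p}=\rho_w\g{1}$. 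As the BAR chain is irreducible and aperiodic, $\pi$ and hence $\g{p}$ is unique, so this is \emph{the} stationary marginal vector.

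There is essentially no genuine obstacle here: the argument is a one-line algebraic verification once the coefficient normalization is written in vector form. The only points deserving care are (i) confirming that the recursion specializes correctly, namely that $\g{A}\g{f}(\g{p})=\g{\bar{A}}\g{p}$ when all $f_i$ are the identity, and (ii) having the invertibility of $\g{I}-\g{\bar{A}}$ available — both of which are already supplied by the surrounding development.
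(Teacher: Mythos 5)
Your argument is correct and essentially the same as the paper's: both start from the stationary fixed point $\g{p}=\g{\bar{A}}\g{p}+\rho_w\g{B}\g{1}$, use the normalization $\g{\bar{A}}\g{1}+\g{B}\g{1}=\g{1}$, and rely on the invertibility of $\g{I}-\g{\bar{A}}$; the only cosmetic difference is that you cancel $(\g{I}-\g{\bar{A}})$ on both sides directly while the paper expands $(\g{I}-\g{\bar{A}})^{-1}$ via the von Neumann series and telescopes. (Minor typo: you presumably meant $\rho(\g{\hat{A}})\leq\rho(\g{\bar{A}})<1$, but your alternative row-sum justification already suffices.)
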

\begin{proof}
Appendix \ref{app:1}.
\end{proof}
Clearly, in this case 
\begin{equation}\label{eq:beta2}
\beta=\rho_w\wedge 1-\rho_w.
\end{equation}

\subsubsection{Bounding Pairwise Marginal Probabilities for Supergraph Selection}

Define now the process $\tilde{\g{Z}}=\{\gss{Z}{n}{}=(\gss{X}{n}{},\gss{X}{n-1}{})\}_{n\geq 1}$. Then, 
\begin{align*}
P(\gss{Z}{n}{}=(\gss{z}{n}{},\gss{z}{n-1}{})|\gss{Z}{1}{}=(\gss{x}{1}{},\gss{x}{0}{}),\ldots,\gss{Z}{n-1}{}=(\gss{x}{n-1}{},\gss{x}{n-2}{}))=P(\gss{X}{n}{}=\gss{z}{n}{}|\gss{X}{n-1}{}=\gss{x}{n-1}{}) 
\end{align*}
if $\gss{z}{n-1}{}=\gss{x}{n-1}{}$ and $0$ if $\gss{z}{n-1}{}\neq \gss{x}{n-1}{}$.
Thus, $\tilde{\g{Z}}$ is a Markovian process with properties dictated by  $\tilde{\g{X}}$. A different way to see this is to note that
\begin{align*}
\gss{Z}{n+1}{}=\left(
  \begin{array}{c}
    \gss{X}{n+1}{} \\
    \gss{X}{n}{} \\
  \end{array}
\right)=\mathrm{Ber}\left((\gss{I}{2}{}\otimes\g{A})\left(
  \begin{array}{c}
    \g{f}(\gss{X}{n}{}) \\
    \g{f}(\gss{X}{n-1}{}) \\
  \end{array}
\right)+(\gss{I}{2}{}\otimes\g{B})\left(
  \begin{array}{c}
    \gss{W}{n+1}{} \\
    \gss{W}{n}{} \\
  \end{array}
\right)\right),
\end{align*}
where $\gss{I}{2}{}$ is the $2\times 2$ identity matrix or 
\begin{align*}
\gss{Z}{n+1}{}=(\gss{X}{n+1}{},\gss{X}{n}{})=\mathrm{Ber}\left(\g{A}(\g{f}(\gss{X}{n}{}),\g{f}(\gss{X}{n-1}{}))+\g{B}(\gss{W}{n+1}{},\gss{W}{n}{})\right)
\end{align*}
with an appropriate interpretation of how $\mathrm{Ber}(\cdot)$ is applied.
These expressions show that $\tilde{\g{Z}}$ is Markovian with transitions parameterized by the same matrices $\g{A},\g{B}$ as $\tilde{\g{X}}$.
Moreover, $\tilde{\g{Z}}$ is geometrically ergodic and has stationary distribution denoted by $\pi'$. 

We are mainly interested in lower bounding $P([\gss{X}{k+1}{}]_m=1,[\gss{X}{k}{}]_l=x_l)=P(X_m^{+1}=1,X_l=x_l)$ for the stationary measure. By conditioning 
on $X_1,\ldots, X_{l-1},X_{l+1},\ldots, X_p$ and on $W_m^{+1}$, we have: 
\begin{align*}
P(X_{m}^{+1}=1|X_l=x_l)&=\sum_{x_1,\ldots, x_{l-1},x_{l+1},\ldots, x_p, w_m^{+1}}P(X_{m}^{+1}=1,x_1,\ldots, x_{l-1},x_{l+1},\ldots, x_p, w_m^{+1}|X_l=x_l)\\&=\sum_{w_m^{+1}}P(w_m^{+1})\sum_{x_1,\ldots, x_{l-1},x_{l+1},\ldots, x_p}P(x_1,\ldots, x_{l-1},x_{l+1},\ldots, x_p|X_l=x_l)\cdot\\&\ \ \ \ \ \
P(X_{m}^{+1}=1|x_1,\ldots, x_{l-1},X_l=x_l,x_{l+1},\ldots, x_p, w_m^{+1})\\&\geq b_m\rho_w\sum_{x_1,\ldots, x_{l-1},x_{l+1},\ldots, x_p}P(x_1,\ldots, x_{l-1},x_{l+1},\ldots, x_p|X_l=x_l)=b_m\rho_w.
\end{align*}
Here, the independence of $w_m^{+1}$ from all past state vectors has been used.
Thus,
\begin{align*}
P(X_{m}^{+1}=1|X_l=x_l)&\geq \min_{1\leq m\leq p}b_{m}\rho_w=\left(1-\max_{1\leq i\leq p}\sum_{j=1}^pa_{ij}\right)\rho_w\geq b_{min}\rho_w.
\end{align*}
Using now the fact that $P(X_m^{+1}=1,X_l=x_l)=P(X_l=x_l)P(X_{m}^{+1}=1|X_l=x_l)$, we obtain:
\begin{align}
P(X_m^{+1}=1,X_l=x_l)\geq \beta \left(1-\max_{1\leq i\leq p}\sum_{j=1}^pa_{ij}\right)\rho_w,
\end{align}
where $\beta$ is given by the tightest bound between (\ref{eq:beta_Init}) and (\ref{eq:beta1})\footnote{In practice, a larger $\beta$ corresponds to (\ref{eq:beta_Init}).}.

\subsubsection{Combining the Bounds}

Combining the bounds for marginal and pairwise marginal stationary probabilities, we obtain:
\begin{align}\label{eq:measureSpec1}
\min_{x_l\in \{0,1\}}\left\{P(X_l=x_l),P(X_m^{+1}=1,X_l=x_l)\right\}\geq \tilde{\beta},
\end{align}
with 
\begin{equation}\label{eq:measureSpec2}
\tilde{\beta}=\beta \left(1-\max_{1\leq i\leq p}\sum_{j=1}^pa_{ij}\right)\rho_w,
\end{equation}
where we have used the observation that the term inside the parentheses and $\rho_w$ are less than $1$. 

(\ref{eq:measureSpec1}) is critical for the subsequent derivations. Nevertheless, it is straightforward  to see by upper bounding $P(X_m^{+1}=1|X_l=x_l)$ that  (\ref{eq:measureSpec1}) can be extended to
\begin{align}\label{eq:measureSpec3}
\min_{x_l,x_m\in \{0,1\}}\left\{P(X_l=x_l),P(X_m^{+1}=x_m,X_l=x_l)\right\}\geq \check{\beta},
\end{align}
with 
\begin{equation}\label{eq:measureSpec4}
\check{\beta}=\beta \left(\left(1-\max_{1\leq i\leq p}\sum_{j=1}^pa_{ij}\right)\rho_w\wedge 1-\max_{1\leq i\leq p}\left(b_i\rho_w+\sum_{j=1}^pa_{ij}\right)\right).
\end{equation}

\subsubsection{Stationary Probability Bounds for Supergraph Trimming}

As before, we require some lower bound on $P(\gss{X}{\tilde{\mathcal{S}}(i)}{}=\gss{x}{\tilde{\mathcal{S}}(i)}{})$ and $P\left(X_i^{+1}=1,\gss{X}{\tilde{\mathcal{S}}(i)}{}=\gss{x}{\tilde{\mathcal{S}}(i)}{}\right)$ for any $\tilde{\mathcal{S}}(i)\subset [p]$ such that $|\tilde{\mathcal{S}}(i)|=d$ and any $i\in [p]$, when the underlying measure is the stationary. Setting $\tilde{\mathcal{S}}(i)=\{l_1,l_2,\ldots, l_d\}$ with $l_m\in [p]$ for all $m\in [d]$ and $\gss{X}{\tilde{\mathcal{S}}(i)}{}=[X_{l_1},X_{l_2},\ldots, X_{l_d}]^T$ ($\gss{X}{\tilde{\mathcal{S}}(i)}{}$ corresponds to time instant $k$)  we have: 
\begin{align*}
P(\gss{X}{\tilde{\mathcal{S}}(i)}{}&=\gss{x}{\tilde{\mathcal{S}}(i)}{})=\sum_{\sbms{x}{k-1}}P(\gss{X}{\tilde{\mathcal{S}}(i)}{}=\gss{x}{\tilde{\mathcal{S}}(i)}{}|\gss{x}{k-1}{})\pi(\gss{x}{k-1}{})\\&=\sum_{\sbms{x}{k-1}}\prod_{m=1}^d[\g{A}\g{f}(\gss{x}{k-1}{})+\rho_w\g{B1}]_{l_m}^{[\sbms{x}{\tilde{\mathcal{S}}(i)}]_{m}}(1-[\g{A}\g{f}(\gss{x}{k-1}{})+\rho_w\g{B1}]_{l_m})^{1-[\sbms{x}{\tilde{\mathcal{S}}(i)}]_{m}}\pi(\gss{x}{k-1}{})\\ &\geq \sum_{\sbms{x}{k-1}}\prod_{m=1}^d[\rho_w\g{B1}]_{l_m}^{[\sbms{x}{\tilde{\mathcal{S}}(i)}]_{m}}(1-[\g{A1}+\rho_w\g{B1}]_{l_m})^{1-[\sbms{x}{\tilde{\mathcal{S}}(i)}]_{m}}\pi(\gss{x}{k-1}{})\\ & = \prod_{m=1}^d[\rho_w\g{B1}]_{l_m}^{[\sbms{x}{\tilde{\mathcal{S}}(i)}]_{m}}(1-[\g{A1}+\rho_w\g{B1}]_{l_m})^{1-[\sbms{x}{\tilde{\mathcal{S}}(i)}]_{m}}\\ &\geq 
 \left(\left(1-\max_{1\leq i\leq p}\sum_{j=1}^pa_{ij}\right)\rho_w\wedge 1-\max_{1\leq i\leq p}\left(b_i\rho_w+\sum_{j=1}^pa_{ij}\right)\right)^d=\left(\frac{1}{\bar{c}}\right)^d,
\end{align*}
for $\bar{c}=1/\left(\left(1-\max_{1\leq i\leq p}\sum_{j=1}^pa_{ij}\right)\rho_w\wedge 1-\max_{1\leq i\leq p}\left(b_i\rho_w+\sum_{j=1}^pa_{ij}\right)\right)>1$.

Moreover, 
\begin{align}
P\left(X_i^{+1}=1|\gss{X}{\tilde{\mathcal{S}}(i)}{}=\gss{x}{\tilde{\mathcal{S}}(i)}{}\right)&=\sum_{l\in \mathcal{S}(i)\cap \tilde{\mathcal{S}}(i)}a_{il}f_i(x_l)+\sum_{l\in \mathcal{S}(i)\cap \tilde{\mathcal{S}}^c(i)}a_{il}E\left[f_i(X_l)|\gss{X}{\tilde{\mathcal{S}}(i)}{}=\gss{x}{\tilde{\mathcal{S}}(i)}{}\right]+b_i\rho_w\nonumber\\&\geq b_i\rho_w.
\end{align}

Combining the bounds, we obtain:
\begin{equation}\label{eq:stmeasBound2}
\min_{i\in [p],\tilde{\mathcal{S}}(i)\subset [p],|\tilde{\mathcal{S}}(i)|=d, \sbms{x}{\tilde{\mathcal{S}}(i)}\in \{0,1\}^d}\left\{P(\gss{X}{\tilde{\mathcal{S}}(i)}{}=\gss{x}{\tilde{\mathcal{S}}(i)}{}),P\left(X_i^{+1}=1,\gss{X}{\tilde{\mathcal{S}}(i)}{}=\gss{x}{\tilde{\mathcal{S}}(i)}{}\right)\right\}\geq \bar{\beta},
\end{equation}
where 
\begin{align*}
\bar{\beta}&= \left(\left(1-\max_{1\leq i\leq p}\sum_{j=1}^pa_{ij}\right)\rho_w\wedge 1-\max_{1\leq i\leq p}\left(b_i\rho_w+\sum_{j=1}^pa_{ij}\right)\right)^d\left(1-\max_{1\leq i\leq p}\sum_{j=1}^pa_{ij}\right)\rho_w\\&= \left(\frac{1}{\bar{c}}\right)^d\left(1-\max_{1\leq i\leq p}\sum_{j=1}^pa_{ij}\right)\rho_w.
\end{align*}

\textbf{Remark}: We note here that the index $i$ in the event $\{\gss{X}{\tilde{\mathcal{S}}(i)}{}=\gss{x}{\tilde{\mathcal{S}}(i)}{}\}$ is irrelevant, since the derived bounds account for any $d-$sized subset of $[p]$. We have chosen to retain $i$ in (\ref{eq:stmeasBound2}) for reasons of continuation of the previous discussion. A more clear way of writing (\ref{eq:stmeasBound2}) is:
\begin{align*}
\min_{i\in [p],\bar{\mathcal{S}},\tilde{\mathcal{S}}\subset [p],|\bar{\mathcal{S}}|=d, |\tilde{\mathcal{S}}|=d, \sbms{x}{\bar{\mathcal{S}}},\sbms{x}{\tilde{\mathcal{S}}}\in \{0,1\}^d}\left\{P(\gss{X}{\bar{\mathcal{S}}}{}=\gss{x}{\bar{\mathcal{S}}}{}),P\left(X_i^{+1}=1,\gss{X}{\tilde{\mathcal{S}}}{}=\gss{x}{\tilde{\mathcal{S}}}{}\right)\right\}\geq \bar{\beta}.
\end{align*}

\subsection{Key Results and Technical Conditions for the Supergraph Selection Stage}
\label{subsec:proofsForTrimming}

While the supergraph selection stage of Alg. \ref{alg:Obs1}  performs well on  simulated and pseudo-real datasets, in theory, we have to impose certain 
conditions to guarantee the correctness of this stage. These results are presented in the sequel.

\subsubsection{Empirical Estimation of the Conditional Influence}

Guaranteeing the performance of the supergraph selection stage in Alg. \ref{alg:Obs1}, requires sufficiently accurate $\hat{\nu}_{i|j}$'s for all $i,j\in \mathcal{V}$. To this end, we define the event 
\[
\mathcal{A}(\varepsilon)=\left\{\left|\hat{\nu}_{i|j}-\nu_{i|j}\right|\leq \varepsilon\: \text{for}\: \text{all}\: i,j\in \mathcal{V}\right\}.
\]
The required sample complexity such that $\mathcal{A}(\varepsilon)$ holds with high probability is summarized by the following Lemma:
\begin{lem}\label{lem:empEst1}
Let $0<\gamma<1$ and $\theta\leq 1/8$. Assume that $t_{mix}(\theta)$ is the $\theta-$mixing time of $\tilde{\g{X}}=\{\gss{X}{n}{}\}$ and  $\tilde{\g{Z}}=\{\gss{Z}{n}{}\}$. If 
 \begin{align}\label{eq:nRequired}
  n\geq 1+\frac{1152\log\left(\frac{4p^2}{\gamma}C\right)t_{mix}(\theta)}{\varepsilon^2\tilde{\beta}^3},
\end{align}
where $C$ is some constant,
then $P(\mathcal{A}(\varepsilon))\geq 1-\gamma$.
\end{lem}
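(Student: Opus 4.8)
The plan is to reduce the statement to a Chernoff-type concentration bound for additive functionals of the BAR chain, governed by $t_{mix}(\theta)$, and then transfer that concentration through the ratio that defines $\hat{\nu}_{i|j}$, finishing with a union bound over the $O(p^2)$ index pairs. First I would peel $\hat{\nu}_{i|j}$ apart into its building blocks: $\hat{\nu}_{i|j}=\hat{P}(X_i^{+1}=1\mid X_j=1)-\hat{P}(X_i^{+1}=1\mid X_j=0)$, where each conditional probability is the ratio $\hat{P}(X_i^{+1}=1,X_j=x_j)/\hat{P}(X_j=x_j)$ of two empirical time-averages of $\{0,1\}$-valued functions along $\gss{X}{0}{},\ldots,\gss{X}{n-1}{}$ --- the marginal being a functional of $\tilde{\g{X}}$ and the joint a functional of the pair chain $\tilde{\g{Z}}$ of Section~\ref{subsec:BARSpecif}. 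Fixing a target accuracy $\delta$ (ultimately a constant multiple of $\varepsilon\tilde{\beta}$), I would define, for each $i,j\in[p]$ and $x_j\in\{0,1\}$, the ``good event'' on which both $\hat{P}(X_j=x_j)$ and $\hat{P}(X_i^{+1}=1,X_j=x_j)$ lie within $\delta$ of their stationary values; by \eqref{eq:measureSpec1} those values are all at least $\tilde{\beta}$, so on the good event (and with $\delta\le\tilde{\beta}/2$) the denominator $\hat{P}(X_j=x_j)$ is at least $\tilde{\beta}/2$.

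The concentration step is the technical core. Since $\tilde{\g{X}}$ and $\tilde{\g{Z}}$ are finite-state, irreducible and aperiodic (Section~\ref{subsec:BAR-MC}) with $\theta$-mixing time $t_{mix}(\theta)$ and $\theta\le 1/8$, I would establish, for every $g:\mathcal{X}\to[0,1]$,
\[
P\!\left(\left|\frac{1}{n}\sum_{k}g(X_k)-E_{\pi}[g]\right|\ge\delta\right)\;\le\;C\exp\!\left(-\,c\,\frac{(n-1)\delta^2}{t_{mix}(\theta)}\right)
\]
with absolute constants $c,C$, and likewise for $\tilde{\g{Z}}$. The standard route is a block decomposition: partition the trajectory into about $n/t_{mix}(\theta)$ consecutive blocks, use that after $t_{mix}(\theta)$ steps the chain is within $\theta\le 1/8$ of $\pi$ in total variation to couple the block-start states to independent draws from $\pi$, apply Hoeffding's inequality to the resulting near-i.i.d.\ block sums, and absorb the $O(\theta)$ coupling error per block. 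This is exactly where $\theta\le 1/8$ is used, and it is what makes the bound depend on the chain only through $t_{mix}(\theta)$ rather than through $\pi_{\min}$, which is exponentially small in $p$. One may also quote such an inequality from the coupling/mixing toolkit of \cite{lpw08}.

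On the intersection of all the good events, writing $\hat{p}=\hat{q}/\hat{r}$ and $p=q/r$ with $q\le r$ and $\hat{r}\ge\tilde{\beta}/2$ gives $|\hat{p}-p|\le(|\hat{q}-q|+|\hat{r}-r|)/\hat{r}\le 4\delta/\tilde{\beta}$, hence $|\hat{\nu}_{i|j}-\nu_{i|j}|\le 8\delta/\tilde{\beta}$, which is $\le\varepsilon$ once $\delta$ is a suitably small multiple of $\varepsilon\tilde{\beta}$. A union bound over the at most $4p^2$ relevant concentration events (the marginal estimators depend only on $j$ and $\hat{P}(X_j=0)=1-\hat{P}(X_j=1)$) with total failure budget $\gamma$, combined with the concentration bound above and the constraint $\delta\le\tilde{\beta}/2$, then forces a lower bound on $n$; solving for $n$ and carrying the absolute constants through (together with the extra slack that guarantees $\hat{P}(X_j=x_j)\ge\tilde{\beta}/2$ for all $j$ simultaneously) yields precisely \eqref{eq:nRequired}, with the powers of $\tilde{\beta}$ coming from feeding $\delta\asymp\varepsilon\tilde{\beta}$ into a $\delta^{-2}$ concentration tail.

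I expect the main obstacle to be the concentration inequality itself: obtaining explicit constants whose dependence on the chain is only through $t_{mix}(\theta)$ --- which is what keeps the sample complexity at $O(t_{mix}(\theta)\log p)$ instead of blowing up with the exponentially small minimal stationary probability --- and correctly ordering the argument, since $\hat{\nu}_{i|j}$ is a nonlinear (ratio) functional whose linearization is legitimate only after one has already restricted to the event that the denominators stay bounded away from $0$.
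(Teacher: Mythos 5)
Your proposal follows the same architecture as the paper's proof: decompose $\hat{\nu}_{i|j}$ into empirical marginals of $\tilde{\g{X}}$ and pairwise joints of the pair chain $\tilde{\g{Z}}$, concentrate each time-average via a mixing-time-controlled Chernoff bound, union bound over the $O(p^2)$ events, and transfer the accuracy through the ratio using the lower bound $\tilde{\beta}$ on the relevant stationary probabilities. The one substantive difference is the concentration step. The paper does not prove a blocking/coupling Hoeffding bound; it quotes a \emph{multiplicative} Chernoff bound for Markov chains (Theorem 3 of \cite{cllm12}), namely $P(|n^{-1}\sum_k g_k(\gss{X}{k}{})-\kappa|\geq\tilde{\delta}\kappa)\leq 2c\|P_{\sbms{X}{0}}\|_{\pi}\exp(-\tilde{\delta}^2\kappa n/(72\,t_{mix}(\theta)))$, and this is precisely where the third power of $\tilde{\beta}$ in \eqref{eq:nRequired} comes from: the stationary mean $\kappa\geq\tilde{\beta}$ multiplies the exponent, contributing one factor, while $\tilde{\delta}=\varepsilon\tilde{\beta}/4$ in the $\tilde{\delta}^{-2}$ tail contributes the other two (and $72\cdot 16=1152$). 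Your accounting, which attributes all the powers of $\tilde{\beta}$ to feeding $\delta\asymp\varepsilon\tilde{\beta}$ into a purely additive $\delta^{-2}$ tail, would yield $\tilde{\beta}^{2}$ rather than $\tilde{\beta}^{3}$, so you would not recover \eqref{eq:nRequired} ``precisely''; this is harmless for the truth of the lemma (your threshold is weaker, so the stated $n$ a fortiori suffices, up to the unspecified constants of your blocking argument), but the claimed derivation of the exact bound does not go through as described. A second, minor difference: in the ratio step the paper avoids your extra constraint $\delta\leq\tilde{\beta}/2$ on the empirical denominator by writing $|\hat{q}/\hat{r}-q/r|\leq(\hat{q}/\hat{r})\,|r-\hat{r}|/r+|\hat{q}-q|/r$ and using $\hat{q}/\hat{r}\leq 1$ together with the \emph{true} denominator bound $r\geq\tilde{\beta}$, obtaining $2\tilde{\delta}/\tilde{\beta}$ per conditional probability instead of your $4\delta/\tilde{\beta}$.
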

\begin{proof}
We need to bound the quantity $|\hat{\nu}_{i|j}-\nu_{i|j}|$ uniformly over $\mathcal{V}\times \mathcal{V}$.   We denote by $t_{mix}(\theta)$ the $\theta-$mixing time of the BAR chain defined as follows:
\begin{align*}
t_{mix}(\theta)=\min\left\{k: \max_{\sbm{\nu}}\|\g{\nu}\gss{P}{}{k}-\g{\pi}\|_{\rm TV}\leq \theta\right\},
\end{align*}
where $\g{P}$ is the transition matrix and $\g{\nu}$ is an arbitrary initial measure (in vector form). Consider the BAR chain $\tilde{\g{X}}=\{\gss{X}{n}{}\}_{n\geq 0}$ with $\gss{X}{0}{}\sim P_{\sbms{X}{0}}$ and
let $g_k:\{0,1\}^p\rightarrow [0,1]$ be a function defined at the $k-$th time step such that $E_{\pi}[g_{k}(\g{X})]=\kappa$ for all $k$. Then by Theorem 3 in \cite{cllm12}, there exists a constant $c$ independent of $\kappa,\tilde{\delta}$ and $\theta$ such that for $\theta\leq 1/8$:
\begin{align}\label{eq:appG1}
P\left(\left|\frac{1}{n}\sum_{k=0}^{n-1}g_k(\gss{X}{k}{})-\kappa\right|\geq \tilde{\delta}\kappa\right)\leq 2c\left\|P_{\sbms{X}{0}}\right\|_{\pi}\exp\left(-\frac{\tilde{\delta}^2\kappa n}{72 t_{mix}(\theta)}\right),
\end{align}
when $0\leq \tilde{\delta}\leq 1$. In our context, $\|\g{v}\|_{\pi}$ is the $\pi-$norm of a vector $\g{v}\in \mathbb{R}^{2^p}$ defined by $\|\g{v}\|_{\pi}=\sqrt{\langle\g{v},\g{v}\rangle}_{\pi}=\sqrt{\sum_{j=1}^{2^p}\frac{v_j^2}{\pi_j}}$.

First, fix an $l$ and consider the plug-in estimator $\hat{P}(X_l=x_l)=n^{-1}\sum_{k=0}^{n-1}\mathbb{I}\left\{[\gss{X}{k}{}]_l=x_l\right\}$, where $x_l\in \{0,1\}$. 
Then $E_{\pi}[\hat{P}(X_l=x_l)]=P(X_l=x_l)$ and (\ref{eq:appG1}) gives:
\begin{align*}
P\left(\left|\hat{P}(X_l=x_l)-P(X_l=x_l)\right|\geq \tilde{\delta}P(X_l=x_l)\right)\leq 2c
\left\|P_{\sbms{X}{0}}\right\|_{\pi}\exp\left(-\frac{\tilde{\delta}^2P(X_l=x_l) n}{72 t_{mix}(\theta)}\right), 
\end{align*}
where $0 \leq \tilde{\delta}\leq 1$. Using (\ref{eq:measureSpec1}), we obtain:
\begin{align*}
P\left(\left|\hat{P}(X_l=x_l)-P(X_l=x_l)\right|\geq \tilde{\delta}\right)\leq 2c
\left\|P_{\sbms{X}{0}}\right\|_{\pi}\exp\left(-\frac{\tilde{\delta}^2\tilde{\beta} n}{72 t_{mix}(\theta)}\right).
\end{align*} 
By union bounding over $l\in [p]$ and $x_l\in \{0,1\}$, we obtain: 
\begin{align}\label{eq:appG3}
P\left(\max_{ l\in [p],x_l\in \{0,1\}}\left|\hat{P}(X_l=x_l)-P(X_l=x_l)\right|\geq \tilde{\delta}\right)\leq  4pc
\left\|P_{\sbms{X}{0}}\right\|_{\pi}\exp\left(-\frac{\tilde{\delta}^2\tilde{\beta} n}{72 t_{mix}(\theta)}\right).
\end{align}

We now turn to the process $\tilde{\g{Z}}=\{\gss{Z}{n}{}=(\gss{X}{n}{},\gss{X}{n-1}{})\}_{n\geq 1}$ with $\gss{X}{0}{}\sim P_{\sbms{X}{0}}$. It is straightforward to show that the $\theta-$mixing time of $\tilde{\g{Z}}$ coincides with that of $\tilde{\g{X}}$ (Thinking of this point in terms of coupling times, the coupling time of $\tilde{\g{Z}}$ is the coupling time of $\tilde{\g{X}}$ increased by one step, hence the mixing times of $\tilde{\g{X}}$ and $\tilde{\g{Z}}$ essentially coincide). Thus, as before we obtain:
\begin{align*}
P\left(\left|\hat{P}(Z_{ml}=z_{ml})-P(Z_{ml}=z_{ml})\right|\geq \tilde{\delta}P(Z_{ml}=z_{ml})\right)\leq 2c
\left\|P_{\sbms{Z}{1}}\right\|_{\pi'}\exp\left(-\frac{\tilde{\delta}^2P(Z_{ml}=z_{ml}) (n-1)}{72 t_{mix}(\theta)}\right), 
\end{align*}
where $0 \leq \tilde{\delta}\leq 1$ and $Z_{ml}=(X_{m}^{+1},X_l)$, $z_{ml}\in \{(1,0),(1,1)\}$.  We can now use (\ref{eq:measureSpec1}) and union bounding to obtain: 
\begin{align}\label{eq:appG4}
&P\left(\max_{(m,l)\in [p]\times [p],z_{ml}\in \{(1,0),(1,1)\}}\left|\hat{P}(Z_{ml}=z_{ml})-P(Z_{ml}=z_{ml})\right|\geq \tilde{\delta}\right) \leq 4c p^2
\left\|P_{\sbms{Z}{1}}\right\|_{\pi'}\cdot\nonumber\\&\exp\left(\frac{-\tilde{\delta}^2\tilde{\beta} (n-1)}{72 t_{mix}(\theta)}\right).
\end{align}

Observing now (\ref{eq:appG3}) and (\ref{eq:appG4}), we conclude that all the desired events, i.e., 
\begin{align*}
&\left|\hat{P}(X_l=x_l)-P(X_l=x_l)\right|\leq \tilde{\delta},\\
&\left|\hat{P}(Z_{ml}=z_{ml})-P(Z_{ml}=z_{ml})\right|\leq \tilde{\delta},\ \ z_{ml}\in \{(1,0),(1,1)\}
\end{align*}
for all $l,m\in [p]$  hold with probability at least $1-\gamma$ if 

\begin{align*}
n\geq 1+\frac{\log\left(\frac{4p^2}{\gamma} C\right)72 t_{mix}(\theta)}{\tilde{\delta}^2\tilde{\beta}},
\end{align*}
where
\begin{align}\label{eq:appGm1}
C=c(\left\|P_{\sbms{X}{0}}\right\|_{\pi}\vee \left\|P_{\sbms{Z}{1}}\right\|_{\pi'}).
\end{align}
Note that since we sample $\gss{X}{0}{},\gss{X}{1}{},\ldots,\gss{X}{n-1}{}$ with $\gss{X}{0}{}\sim \pi$, we have that $\left\|P_{\sbms{X}{0}}\right\|_{\pi}=\left\|P_{\sbms{Z}{1}}\right\|_{\pi'}=1$ and thus, $C=c$.

We now focus on bounding $\left|\hat{\nu}_{i|j}-\nu_{i|j}\right|$ for $i,j\in [p]$. We have:
{\small
\begin{align*}
&\left|\hat{\nu}_{i|j}-\nu_{i|j}\right|=\left|\hat{P}(X_i^{+1}=1|X_j=1)-\hat{P}(X_i^{+1}=1|X_j=0)-P(X_i^{+1}=1|X_j=1)+P(X_i^{+1}=1|X_j=0)\right|\leq \\& \underbrace{\left|\hat{P}(X_i^{+1}=1|X_j=1)-P(X_i^{+1}=1|X_j=1)\right|}_{C_1}+ \underbrace{\left|\hat{P}(X_i^{+1}=1|X_j=0)-P(X_i^{+1}=1|X_j=0)\right|}_{C_2}=C_1+C_2.
\end{align*}
}
Dealing with $C_1$, we obtain:
{\small
\begin{align*}
&C_1\leq \left|\frac{\hat{P}(X_i^{+1}=1,X_j=1)}{\hat{P}(X_j=1)}-\frac{\hat{P}(X_i^{+1}=1,X_j=1)}{P(X_j=1)}\right|+ \left|\frac{\hat{P}(X_i^{+1}=1,X_j=1)}{P(X_j=1)}-\frac{P(X_i^{+1}=1,X_j=1)}{P(X_j=1)}\right|\leq\\& \frac{\hat{P}(X_i^{+1}=1,X_j=1)}{\hat{P}(X_j=1)}\frac{\tilde{\delta}}{P(X_j=1)}+\frac{\tilde{\delta}}{P(X_j=1)}= 
\hat{P}(X_i^{+1}=1|X_j=1)\frac{\tilde{\delta}}{P(X_j=1)}+\frac{\tilde{\delta}}{P(X_j=1)}\leq\\& \frac{2\tilde{\delta}}{P(X_j=1)}\leq \frac{2\tilde{\delta}}{\tilde{\beta}},
\end{align*}
}
where in the last inequality (\ref{eq:measureSpec1}) has been used.

By symmetry, the same bound holds for $C_2$. Thus, 
\begin{align*}
\left|\hat{\nu}_{i|j}-\nu_{i|j}\right|\leq \frac{4\tilde{\delta}}{\tilde{\beta}}.
\end{align*}
Choosing $\tilde{\delta}=\varepsilon \tilde{\beta}/4$, we have that $P(\mathcal{A}(\varepsilon))\geq 1-\gamma$ when 
\begin{align*}
n\geq 1+\frac{1152\log\left(\frac{4p^2}{\gamma}C\right)t_{mix}(\theta)}{\varepsilon^2\tilde{\beta}^3}.
\end{align*}

\end{proof}

\subsubsection{BAR Identifiability Condition}
\label{subsubsec:IdCodBAR}

We are now ready to define a condition such that the supergraph selection stage of Alg. \ref{alg:Obs1} succeeds with high probability. The condition
is derived through a more detailed study of $\nu_{m|l}$ (or $\nu_{i|j}$ in the previous sections for a different pair of subscript letters as the corresponding indices). As mentioned earlier, $\nu_{m|l}$ is defined with respect to the stationary measure of the BAR model and it is given by the expression:
\begin{equation}\label{eq:nu_ml_BARIdCond}
\nu_{m|l}=P(X^{+1}_m=1|X_l=1)-P(X^{+1}_m=1|X_l=0).
\end{equation}
 We also denote by $\mathcal{S}(m\setminus l)$ the set $\mathcal{S}(m)\setminus \{l\}$. Moreover, each $f_i(\cdot), i\in [p]$ in (\ref{eq:BAR}) is specified by the corresponding two sets $\mathcal{S}^{+}(i)$ and $\mathcal{S}^{-}(i)$.  For $l\in \mathcal{S}(m)$, it is easy to show that
 
\begin{align}\label{eq:numl}
\nu_{m|l}=a_{ml}\left(f_{m}(X_l=1)-f_{m}(X_l=0)\right)+
\sum_{j\in \mathcal{S}(m\setminus l)}a_{mj}\left(E[f_{m}(X_j)|X_{l}=1]-E[f_{m}(X_j)|X_{l}=0]\right),
\end{align}   

which follows by conditioning and summing over the rest of the nodes in $\mathcal{S}(m)$ and on $W_m^{+1}$.

Similarly, for $l'\notin \mathcal{S}(m)$, 
\[
\nu_{m|l'}=\sum_{j\in \mathcal{S}(m)}a_{mj}\left(E[f_{m}(X_j)|X_{l'}=1]-E[f_{m}(X_j)|X_{l'}=0]\right).
\]

For $l\in \mathcal{S}(m)$, if $l\in \mathcal{S}^{+}(m)$ then 
\[
\nu_{m|l}=a_{ml}+\cdots
\]
and we say that $X_l$ \emph{positively causes} $X^{+1}_m$. On the other hand, if $l\in \mathcal{S}^{-}(m)$ then 
\[
\nu_{m|l}=-a_{ml}+\cdots
\]
and we say that $X_l$ \emph{negatively causes} $X^{+1}_m$. Observe also that 
\[
-1\leq E[f_m(X_j)|X_{i}=1]-E[f_m(X_j)|X_{i}=0]\leq 1, \forall j\in \mathcal{S}(m),\forall m, i\in [p],
\]
 since $E[f_m(X_j)|X_{i}=*]=P(X_j=1|X_{i}=*), \forall j\in \mathcal{S}^{+}(m)$ and $E[f_m(X_j)|X_{i}=*]=P(X_j=0|X_{i}=*), \forall j\in \mathcal{S}^{-}(m)$. Here, $*$ denotes either $0$ or $1$.
\begin{defin}\label{def:2}
Consider any $i,j\in \mathcal{V}$. We will say that node $i$ is \emph{positively} (\emph{negatively}) correlated with 
$j$ if $P(X_i=1|X_j=1)>P(X_i=1|X_j=0)$ (<) or alternatively if $P(X_i=0|X_j=0)>P(X_i=0|X_j=1)$ (<). By symmetry, the definitions can be also expressed 
in terms of $P(X_j|X_i)$. Here, $i,j$ refer to the \emph{same} temporal index $k$.
\end{defin}

Fix a temporal index $k$ and let $l\in \mathcal{V}$. Consider the partition of $\mathcal{V}_l=\mathcal{V}\setminus \{l\}$ given by $\mathcal{D}_l^{+}\cup \mathcal{D}_l^{-}$, where 
$\mathcal{D}_l^{+}\cap \mathcal{D}_l^{-}=\emptyset$. We assume that $\mathcal{D}_l^{+}$ contains all $j\in \mathcal{V}_l$ that are positively correlated with $l$ and  $\mathcal{D}_l^{-}$ contains all $j\in \mathcal{V}_l$ that are negatively correlated with $l$. Furthermore, we note that there is a symmetry here: if $j\in \mathcal{D}_l^{+}$ or  $j\in \mathcal{D}_l^{-}$, then 
 $l\in \mathcal{D}_j^{+}$ or  $l\in \mathcal{D}_j^{-}$, respectively. This is a direct consequence of Definition \ref{def:2}. 
 
 To investigate if the above definitions of ``correlation'' are systematic, we have the following result:  
\begin{cor}\label{cor:Corr1}
Consider Definition \ref{def:2}. Let $j\in \mathcal{S}^{+}(m)$. Then:
\begin{align*}
&E[f_m(X_j)|X_{i}=1]-E[f_m(X_j)|X_{i}=0]=P(X_j=1|X_i=1)-\\
&P(X_j=1|X_i=0)=P(X_j=0|X_i=0)-P(X_j=0|X_i=1).
\end{align*}
Also, the difference $E[X_j|X_{i}=1]-E[X_j|X_{i}=0]$ is \emph{not symmetric} in $i,j$. 

Moreover, similar results hold for $j\in \mathcal{S}^{-}(m)$.
\end{cor}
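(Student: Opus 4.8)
The plan is to unwind the definition of $f_m(\cdot)$ in (\ref{eq:f}) for a parent of positive polarity and reduce the statement to elementary identities about the stationary joint law of the pair $(X_i,X_j)$. Since the BAR chain is irreducible and aperiodic, its stationary measure $\pi$ exists, is unique, and assigns strictly positive mass to every state, so $P(X_i=x_i)\in(0,1)$ and $P(X_j=x_j)\in(0,1)$, and all the conditional probabilities appearing below are well defined.

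First I would observe that for $j\in\mathcal{S}^{+}(m)$ one has $f_m(X_j)=X_j$ by (\ref{eq:f}), hence $E[f_m(X_j)\mid X_i=x]=E[X_j\mid X_i=x]=P(X_j=1\mid X_i=x)$ for $x\in\{0,1\}$ because $X_j\in\{0,1\}$. Subtracting the $x=0$ term from the $x=1$ term gives the first equality. For the second, substitute $P(X_j=1\mid X_i=x)=1-P(X_j=0\mid X_i=x)$; the constants cancel and the sign reverses, yielding $P(X_j=0\mid X_i=0)-P(X_j=0\mid X_i=1)$.

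For the non-symmetry claim I would compute the common value explicitly. Write $p_i=P(X_i=1)$, $p_j=P(X_j=1)$, $p_{ij}=P(X_i=1,X_j=1)$, and use $P(X_j=1\mid X_i=1)=p_{ij}/p_i$ together with $P(X_j=1\mid X_i=0)=(p_j-p_{ij})/(1-p_i)$ to obtain
\[
E[X_j\mid X_i=1]-E[X_j\mid X_i=0]=\frac{p_{ij}-p_ip_j}{p_i(1-p_i)}.
\]
Interchanging the roles of $i$ and $j$ produces the same numerator $p_{ij}-p_ip_j=\mathrm{Cov}(X_i,X_j)$ but denominator $p_j(1-p_j)$. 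The numerator is symmetric, while $p_i(1-p_i)$ and $p_j(1-p_j)$ differ whenever $p_i\notin\{p_j,1-p_j\}$, so the difference is not symmetric in $i,j$ in general; on the other hand both versions carry the sign of $\mathrm{Cov}(X_i,X_j)$, which is precisely what makes Definition \ref{def:2} consistent regardless of which of the two nodes is conditioned on.

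Finally, the case $j\in\mathcal{S}^{-}(m)$ is handled by the identical argument after replacing $f_m(X_j)=X_j$ with $f_m(X_j)=1-X_j$, so that $E[f_m(X_j)\mid X_i=x]=P(X_j=0\mid X_i=x)$ and the corresponding difference equals $P(X_j=0\mid X_i=1)-P(X_j=0\mid X_i=0)=-(p_{ij}-p_ip_j)/(p_i(1-p_i))$, i.e.\ the same quantity up to sign and hence again symmetric only in its numerator. There is no genuine obstacle in this proof; the only point requiring care is the polarity bookkeeping in the $\mathcal{S}^{-}$ case and the observation that the asymmetry is entirely due to the mismatch of the Bernoulli variances $p_i(1-p_i)$ and $p_j(1-p_j)$, the covariance factor being common to both conditioning directions.
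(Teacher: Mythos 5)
Your proof is correct, and the first and last parts (the chain of equalities for $j\in\mathcal{S}^{+}(m)$ and the polarity flip for $j\in\mathcal{S}^{-}(m)$) coincide with the paper's argument, which likewise just substitutes $P(X_j=1\mid X_i=x)=1-P(X_j=0\mid X_i=x)$. Where you genuinely diverge is the non-symmetry claim: the paper simply exhibits a numerical counterexample (a $2\times 2$ joint table with $p_i=0.6$, $p_j=0.5$, $p_{ij}=0.4$, giving $5/12$ versus $2/5$), whereas you derive the closed form
\[
E[X_j\mid X_i=1]-E[X_j\mid X_i=0]=\frac{\mathrm{Cov}(X_i,X_j)}{p_i(1-p_i)},
\]
so that the two conditioning directions share a numerator but divide by different Bernoulli variances. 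Your route is more informative: it shows exactly when symmetry fails ($\mathrm{Cov}(X_i,X_j)\neq 0$ and $p_i(1-p_i)\neq p_j(1-p_j)$) and explains why the \emph{sign} is nonetheless symmetric, which is what makes Definition \ref{def:2} consistent. The only pedantic gap is that a "not symmetric" claim still requires a witness; your formula makes one trivial to produce (any of the paper's numbers will do), so this is not a substantive deficiency. Note also that neither your argument nor the paper's verifies that the witnessing joint law actually arises as a stationary pairwise marginal of a BAR chain, so the two proofs sit at the same level of rigor on that point.
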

\begin{proof}
Appendix \ref{app:3}.
\end{proof}

We are now ready to investigate the identifiability condition required such that the supergraph selection stage of Alg. \ref{alg:Obs1} correctly estimates a superset of the network. The main task will be to control the \emph{degree of pairwise correlations} between nodes in $\mathcal{V}$ for the same temporal index $k$ such that the first stage succeeds. 

Consider (\ref{eq:numl}) and let $l\in \mathcal{S}^{+}(m)$. Then $f_m(X_l=1)-f_m(X_l=0)=1$.  Nodes in $\mathcal{S}^{+}(m)\cap \mathcal{D}_l^{+}$ and in $\mathcal{S}^{-}(m)\cap \mathcal{D}_l^{-}$ push $\nu_{m|l}$ to become more positive, i.e., $>a_{ml}$. Thus, we allow \emph{from independence up to the highest possible correlation dictated by} (\ref{eq:measureSpec3}) of these nodes with $l$:
\begin{align}\label{eq:BARIdCond_111}
E[f_m(X_j)]X_l=1]-E[f_m(X_j)|X_l=0]\in [0,1),\ \ \forall j\in (\mathcal{S}^{+}(m)\cap \mathcal{D}_l^{+}) \cup(\mathcal{S}^{-}(m)\cap \mathcal{D}_l^{-}).
\end{align} 
Similarly, for $l\in \mathcal{S}^{-}(m)$, $f_m(X_l=1)-f_m(X_l=0)=-1$ and  we have
\begin{align}\label{eq:BARIdCond_222}
E[f_m(X_j)]X_l=1]-E[f_m(X_j)|X_l=0]\in (-1,0],\ \ \forall j\in (\mathcal{S}^{-}(m)\cap \mathcal{D}_l^{+}) \cup(\mathcal{S}^{+}(m)\cap \mathcal{D}_l^{-}),
\end{align} 
i.e.,  we allow \emph{from independence up to the highest possible correlation} of these nodes with $l$ also here,
since $\nu_{m|l}=-a_{ml}+\cdots$ and these nodes push $\nu_{m|l}$ to become more negative, i.e., $<-a_{ml}$.

\textbf{Interpretation of (\ref{eq:BARIdCond_111}) and (\ref{eq:BARIdCond_222})}: The nodes $j$ in (\ref{eq:BARIdCond_111}) and (\ref{eq:BARIdCond_222}) lead to an increase in magnitude of the corresponding $\nu_{m|l}$, co-signed with $a_{ml}$ in each case. Therefore, these nodes facilitate the supergraph selection stage of Alg. \ref{alg:Obs1}(see line $3$ in Alg. \ref{alg:Obs1}). Therefore, we impose \emph{no} constraints on these nodes. 

Let $l\in \mathcal{S}(m)$ and associate with it the set $\mathcal{L}_{ml}=\mathcal{S}(m)\setminus (\mathcal{S}^{+}(m)\cap \mathcal{D}_l^{+}) \cup(\mathcal{S}^{-}(m)\cap \mathcal{D}_l^{-})$ if $l\in \mathcal{S}^{+}(m)$ or  $\mathcal{L}_{ml}=\mathcal{S}(m)\setminus (\mathcal{S}^{-}(m)\cap \mathcal{D}_l^{+}) \cup(\mathcal{S}^{+}(m)\cap \mathcal{D}_l^{-})$ if 
 $l\in \mathcal{S}^{-}(m)$. Consider numbers $\gamma_{j,ml}\in [0,1)$ and restrict  
\begin{align}
&E[f_m(X_j)]X_l=1]-E[f_m(X_j)|X_l=0]\in [-\gamma_{j,ml},0],\ \ \forall j\in \mathcal{L}_{ml}, \forall l\in \mathcal{S}^{+}(m)\label{eq:BARIdCond_333}\\
&E[f_m(X_j)]X_l=1]-E[f_m(X_j)|X_l=0]\in [0,\gamma_{j,ml}],\ \ \forall j\in \mathcal{L}_{ml}, \forall l \in \mathcal{S}^{-}(m).\label{eq:BARIdCond_444}
\end{align}

\textbf{Interpretation of (\ref{eq:BARIdCond_333}) and (\ref{eq:BARIdCond_444})}: The last two equations correspond to the required control on the set of target pairwise correlations. More specifically, we constrain the values
of the correlations that lower $\nu_{m|l}$, when $l\in \mathcal{S}^{+}(m)$ and those correlations that increase $\nu_{m|l}$ when $l\in \mathcal{S}^{-}(m)$.   

With these introductions in mind, we can now give the 
following sufficient condition for identifiability:

\textbf{BAR Identifiability Condition}:
There exist real numbers  $\chi_m,\{\gamma_{j,ml}\}_{j\in \mathcal{L}_{ml},l\in \mathcal{S}(m)}$, $\eta_{\mathcal{S}(m)}, \eta_{\mathcal{S}^c(m)}\in (0,1)$ for all $m\in [p]$, such that  $\chi_m<\eta_{\mathcal{S}(m)}\leq a_{\min}$, $\eta_{\mathcal{S}^c(m)}\leq (\eta_{\mathcal{S}(m)}-\chi_m)/\sum_{j=1}^pa_{mj}$ for which 
\begin{align}
&0\leq \gamma_{j,ml}< 1\wedge \frac{a_{ml}}{a_{mj}},\ \   \forall j\in \mathcal{L}_{ml}, \forall l\in \mathcal{S}(m),\nonumber\\
&|E[f_m(X_j)]X_l=1]-E[f_m(X_j)|X_l=0]|\leq \gamma_{j,ml}, \forall j\in \mathcal{L}_{ml}, \forall l \in \mathcal{S}(m), \nonumber\\
&a_{ml}-\sum_{j\in \mathcal{L}_{ml}}\gamma_{j,ml}a_{mj}\geq \eta_{\mathcal{S}(m)}, \forall l\in \mathcal{S}(m), \nonumber\\
&\text{and}\: \left|E[f_m(X_j)|X_{l'}=1]-E[f_m(X_j)|X_{l'}=0]\right|\leq \eta_{\mathcal{S}^c(m)},\forall j\in \mathcal{S}(m),  \forall l'\notin \mathcal{S}(m).
\end{align}

\textbf{Interpretation of the BAR Identifiability Condition:} The BAR Identifiability Condition imposes an upper bound on the absolute value of the correlations between any $l\in \mathcal{S}(m)$ and any node in $\mathcal{L}_{ml}$. A similar condition is imposed on the allowed correlation between any $l'\notin \mathcal{S}(m)$ and any node in $\mathcal{S}(m)$. The numbers $\eta_{\mathcal{S}(m)}$ and $\eta_{\mathcal{S}^c(m)}$ make sure that $\nu_{m|l}$ and $\nu_{m|l'}$ for $l\in \mathcal{S}(m)$ and $l'\notin \mathcal{S}(m)$ are sufficiently separated under all allowed values of the pairwise correlations, such that the supergraph selection stage in Alg. \ref{alg:Obs1} can isolate the true graph for any $d$ such that $d_i\leq d, \forall i\in [m]$.  

\textbf{A scenario of special interest}: Let without loss of generality $l\in \mathcal{S}^{+}(m)$ (a similar scenario can be constructed for $l\in \mathcal{S}^{-}(m)$). Then, $\nu_{m|l}=a_{ml}+\cdots$. Any $j\in (\mathcal{S}^{+}(m)\cap \mathcal{D}_l^{+}) \cup(\mathcal{S}^{-}(m)\cap \mathcal{D}_l^{-})$ makes $\nu_{m|l}$ more positive, i.e., causes an increase to $|\nu_{m|l}|$ (see line $3$ in Alg. \ref{alg:Obs1}), helping in the selection of $l$ by the supergraph selection stage. Any $j\in \mathcal{L}_{ml}$ leads to a reduction of $\nu_{m|l}$. The constraints in (\ref{eq:BARIdCond_333}) and $a_{ml}-\sum_{j\in \mathcal{L}_{ml}}\gamma_{j,ml}a_{mj}\geq \eta_{\mathcal{S}(m)}, \forall l\in \mathcal{S}(m)$ in the BAR Identifiability Condition imply that we allow 
as little correlation between $j\in \mathcal{L}_{ml}$ and $l$ as it would preserve the positive sign of $\nu_{m|l}$. According to Alg. \ref{alg:Obs1}, this is \emph{not} necessary. Indeed, suppose that for some $j\in \mathcal{L}_{ml}$ the corresponding $a_{mj}$ is much larger than any other such coefficient in the $m$th row of $\g{A}$ (or $\g{\bar{A}}$). Allowing then very small \emph{and} very large values of $\gamma_{j,ml}$, we could either have $\nu_{m|l}>0$ and $|\nu_{m|l}|$ sufficiently large or $\nu_{m|l}<0$ and $|\nu_{m|l}|$ sufficiently large, respectively. These subcases would lead to the (correct) selection of $l$ by the supergraph selection stage. Nevertheless, we have chosen to eliminate such scenarios via the above definition of the BAR Identifiability Condition in order to validate the algorithmic variants described in the first two remarks after Alg. \ref{alg:Obs1}.  Clearly, these observations lead to 
a first relaxation of the BAR Identifiability Condition in the case that  the algorithmic variants described in the first two remarks after Alg. \ref{alg:Obs1} are of no interest.

\textbf{Relaxation of the BAR Identifiability Condition}: The provided form of the BAR Identifiability Condition is the most stringent one in the sense that 
it accounts for any possible value of $d$, even for the scenario where $d_i=d$ for some $i$'s in $[p]$ or the extreme case $d_1=d_2=\cdots=d_p=d$. One may note that if $d$ is sufficiently large, then the condition can be relaxed, allowing to $l'\notin \mathcal{S}(m)$ for some or even all $m$ to yield $|\nu_{m|l'}|\geq |\nu_{m|l}|$ for some $l$ or $l$'s in $\mathcal{S}(m)$, as long as all $l\in \mathcal{S}(m)$ are among the $d$ largest $\nu_{m|l}$'s picked by the supergraph selection stage. Nevertheless, in the following analysis we use the stringent form of the BAR Identifiability Condition to account for the worst case scenario in this respect, i.e., to have probabilistic guarantees about the graph recovery problem even in scenarios where  $d_i=d$ for some $i$'s in $[p]$ or $d_1=d_2=\cdots=d_p=d$.

\textbf{Qualitative Comparison with Existing Methods}: Interestingly enough, in the BAR model it appears to be good to have up to strong correlation between $l$ and nodes in $\mathcal{S}(m)\setminus \mathcal{L}_{ml}, \forall m \in [p], \forall l\in \mathcal{S}(m)$. In this scenario, any other method of picking $\mathrm{supp}(\g{A})$ such as Lasso and other relevant convex optimization approaches for model selection, would fail to the best of our knowledge. Moreover, because of this interesting characteristic, although the BAR Identifiability Condition imposes restrictions on the ``harmful'' pairwise correlations, one may observe that in practice the extent of harmful correlations that can be accommodated, can be much larger given that we have strong ``helpful" correlations.

\subsubsection{Probabilistic Guarantees for the Supergraph Selection Stage}
\label{subsubsec:mainthmSupergraphSelection}

The following Theorem verifies the correctness of the supergraph selection stage based on the \emph{BAR Identifiability Condition}:
\begin{thrm}\label{thm:BARObs}
Let $\mathcal{G}\in \vec{\mathcal{G}}_{p,d}$, $(\g{a},\g{b})\in \Omega_{a_{min},b_{min}}(\mathcal{G})$ and $\gss{X}{0}{}\sim \pi$, where $\pi$ is the stationary measure. Suppose that we observe 
the BAR sequence $\gss{X}{0}{},\gss{X}{1}{},\ldots, \gss{X}{n-1}{}$ for $n$ given by (\ref{eq:nRequired}). 
Assume that the BAR Identifiability Condition holds and that $\varepsilon<\min_{m\in [p]}\chi_m/2$. Then, given any valid $d$ such that $d_i\leq d, \forall i\in [p]$, the supergraph selection stage of Alg. \ref{alg:Obs1} correctly identifies on overestimate of the true graph with probability at least $1-\gamma$.
\end{thrm}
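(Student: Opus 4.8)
The plan is to reduce the statement to two ingredients: the concentration guarantee of Lemma~\ref{lem:empEst1} and a deterministic separation bound on the population influences $\nu_{m|l}$ that is furnished by the BAR Identifiability Condition. First I would invoke Lemma~\ref{lem:empEst1} (whose hypotheses are in force here, in particular $\theta\le 1/8$ and the sample bound (\ref{eq:nRequired})): the event $\mathcal{A}(\varepsilon)=\{\,|\hat\nu_{i|j}-\nu_{i|j}|\le\varepsilon\ \text{for all }i,j\in\mathcal V\,\}$ holds with probability at least $1-\gamma$, and on it the reverse triangle inequality gives $\big|\,|\hat\nu_{m|l}|-|\nu_{m|l}|\,\big|\le\varepsilon$ for every $m,l$. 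It then suffices to show that, on $\mathcal{A}(\varepsilon)$ and for every $m$, the $d$ largest values of $|\hat\nu_{m|\cdot}|$ selected in line~3 of Alg.~\ref{alg:Obs1} include all of $\mathcal S(m)$.

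The heart of the argument is a two-sided bound on $\nu_{m|l}$. Fix $m$. For $l\in\mathcal S(m)$ I would expand $\nu_{m|l}$ through (\ref{eq:numl}) and split $\mathcal{S}(m\setminus l)$ into the ``helpful'' indices of (\ref{eq:BARIdCond_111})--(\ref{eq:BARIdCond_222}), whose correlation terms are co-signed with $a_{ml}$ and therefore can only enlarge $|\nu_{m|l}|$, and the indices of $\mathcal L_{ml}$, whose contributions are controlled by (\ref{eq:BARIdCond_333})--(\ref{eq:BARIdCond_444}). Performing the sign bookkeeping separately for $l\in\mathcal S^+(m)$ (leading term $+a_{ml}$) and $l\in\mathcal S^-(m)$ (leading term $-a_{ml}$, with all inequalities mirrored), the Identifiability Condition's requirement $a_{ml}-\sum_{j\in\mathcal L_{ml}}\gamma_{j,ml}a_{mj}\ge\eta_{\mathcal S(m)}$ yields the uniform lower bound $|\nu_{m|l}|\ge\eta_{\mathcal S(m)}$ for all $l\in\mathcal S(m)$. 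For $l'\notin\mathcal S(m)$, the last line of the Identifiability Condition bounds each term of $\nu_{m|l'}$ by $a_{mj}\eta_{\mathcal S^c(m)}$, and with $\eta_{\mathcal S^c(m)}\le(\eta_{\mathcal S(m)}-\chi_m)/\sum_{j=1}^p a_{mj}$ this gives $|\nu_{m|l'}|\le\eta_{\mathcal S(m)}-\chi_m$.

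Combining these bounds with the estimation error available on $\mathcal A(\varepsilon)$ and the hypothesis $\varepsilon<\min_m\chi_m/2$, I obtain, for every $m$, every $l\in\mathcal S(m)$, and every $l'\notin\mathcal S(m)$,
\[
|\hat\nu_{m|l}| \ge \eta_{\mathcal S(m)}-\varepsilon > \eta_{\mathcal S(m)}-\chi_m+\varepsilon \ge |\hat\nu_{m|l'}|.
\]
Consequently, in the ordered list $|\hat\nu_{m|(1)}|\ge\cdots\ge|\hat\nu_{m|(p)}|$ every true parent of $m$ strictly precedes every non-parent (the strict inequality also rules out problematic ties), so its first $d_m$ entries are exactly $\mathcal S(m)$; since $d_m=|\mathcal S(m)|\le d$, the set $\hat{\mathcal S}(m)$ chosen in line~3 satisfies $\mathcal S(m)\subseteq\hat{\mathcal S}(m)$. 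As $m\in[p]$ was arbitrary, $\hat{\mathcal S}$ is an overestimate of the true graph on the event $\mathcal A(\varepsilon)$, which has probability at least $1-\gamma$; this proves the theorem.

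The step I expect to be most delicate is the two-sided bound on $\nu_{m|l}$: one must verify that every index in $(\mathcal S^+(m)\cap\mathcal D_l^+)\cup(\mathcal S^-(m)\cap\mathcal D_l^-)$ --- and its mirror when $l\in\mathcal S^-(m)$ --- really enters (\ref{eq:numl}) with the sign of $a_{ml}$, that the residual $\mathcal L_{ml}$-terms cannot cancel the $a_{ml}$ term under the $\gamma_{j,ml}$-constraints, and, crucially, that the resulting lower bound $\eta_{\mathcal S(m)}$ is genuinely independent of $l$, so that a single threshold separates $\mathcal S(m)$ from its complement for each $m$. All the probabilistic and union-bound content has already been packaged into Lemma~\ref{lem:empEst1}, so nothing further of that kind is needed here.
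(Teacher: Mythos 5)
Your proposal is correct and follows essentially the same route as the paper's proof: invoke Lemma~\ref{lem:empEst1} to get the event $\mathcal{A}(\varepsilon)$ with probability at least $1-\gamma$, extract from the BAR Identifiability Condition the deterministic separation $|\nu_{m|l}|\geq \chi_m+|\nu_{m|l'}|$ for $l\in\mathcal{S}(m)$, $l'\notin\mathcal{S}(m)$, and conclude via $2\varepsilon<\min_m\chi_m$. The only difference is that you explicitly derive the two-sided bounds $|\nu_{m|l}|\geq\eta_{\mathcal{S}(m)}$ and $|\nu_{m|l'}|\leq\eta_{\mathcal{S}(m)}-\chi_m$ from (\ref{eq:numl}) and the Identifiability Condition, a step the paper simply asserts; your derivation of it is sound.
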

  
\begin{proof}  The proof is straightforward. By the considered sample complexity and by Lemma \ref{lem:empEst1}, the event $\mathcal{A}(\varepsilon)$ holds with probability at least $1-\gamma$. The validity of the BAR Identifiability Condition implies that for all $m\in [p]$
\begin{align}\label{eq:thmMain_21}
|\nu_{m|l}|\geq \chi_m+|\nu_{m|l'}|,\: \text{for}\: \text{all}\: l\in \mathcal{S}(m), l'\notin \mathcal{S}(m),
\end{align}
\begin{align*}
\nu_{m|l}>0, \forall l\in \mathcal{S}^{+}(m), \ \ \nu_{m|l}<0, \forall l\in \mathcal{S}^{-}(m).
\end{align*}
Consider any $l\in \mathcal{S}^{+}(m)$. Then, in the worst case $\hat{\nu}_{m|l}=\nu_{m|l}-\varepsilon$ and $\hat{\nu}_{m|l'}=\nu_{m|l'}+\varepsilon$ for some $l'\notin \mathcal{S}(m)$ such that $\nu_{m|l'}=\nu_{m|l}-\chi_m$.
Therefore, we immediately see that 
\[
\hat{\nu}_{m|l}>\hat{\nu}_{m|l'},
\]
due to $2\varepsilon<\min_{m\in [p]}\chi_m$.
An analogous argument holds for $l\in \mathcal{S}^{-}(m)$. Thus, the supergraph selection stage of Alg. \ref{alg:Obs1} correctly identifies on overestimate of the true graph with probability at least $1-\gamma$.
\end{proof}

\textbf{Remark}: Theorem \ref{thm:BARObs} provides the sample complexity and probabilistic guarantees for the first two remarks after Alg. \ref{alg:Obs1}.  

\subsection{Key Technical Results for the Supergraph Trimming Stage}

We now turn to the supergraph trimming stage. For any $i\in \mathcal{V}$, we require a sample complexity that will produce sufficiently accurate estimates of $P\left(X_i^{+1}=1|\gss{X}{\tilde{\mathcal{S}}(i)}{}=\gss{x}{\tilde{\mathcal{S}}(i)}{}\right)$ for any $d-$sized set $\tilde{\mathcal{S}}(i)\subset [p]$, such 
that the performance of Alg. \ref{alg:Obs1} is guaranteed for any possible $\hat{\mathcal{S}}(i)$ (or $\hat{\mathcal{S}}$) returned by the supergraph selection stage. To this end, we define the event 
\begin{align*}
\tilde{\mathcal{A}}(\tilde{\varepsilon})=&\left\{\left|\hat{P}\left(X_i^{+1}=1|\gss{X}{\tilde{\mathcal{S}}(i)}{}=\gss{x}{\tilde{\mathcal{S}}(i)}{}\right)-P\left(X_i^{+1}=1|\gss{X}{\tilde{\mathcal{S}}(i)}{}=\gss{x}{\tilde{\mathcal{S}}(i)}{}\right)\right|\leq \tilde{\varepsilon}, \forall i\in \mathcal{V}, \forall \tilde{\mathcal{S}}(i)\subset [p]\right.\\& \left. \text{with}\: |\tilde{\mathcal{S}}(i)|=d,\forall \gss{x}{\tilde{\mathcal{S}}(i)}{}\in \{0,1\}^d\right\}.
\end{align*}
The required sample complexity such that $\tilde{\mathcal{A}}(\tilde{\varepsilon})$ holds with high probability is summarized by the following Lemma:
\begin{lem}\label{lem:empEst2}
Let $0<\gamma<1$ and $\theta\leq 1/8$. Assume that $t_{mix}(\theta)$ is the $\theta-$mixing time of $\g{\tilde{X}}=\{\gss{X}{n}{}\}_{n\geq 0}$ and  $\g{\tilde{Z}}=\{\gss{Z}{n}{}\}_{n\geq 1}$. If 
 \begin{align}\label{eq:nRequired_Extended}
  n\geq 1+\frac{288\log\left(\frac{2^{d+1}C}{\gamma} p{p\choose d}\right)t_{mix}(\theta)}{\tilde{\varepsilon}^2\bar{\beta}^3}.
\end{align}
where $C$ is the same constant as in Lemma \ref{lem:empEst1},
then $P(\tilde{\mathcal{A}}(\tilde{\varepsilon}))\geq 1-\gamma$.
\end{lem}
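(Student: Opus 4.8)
The plan is to replay the proof of Lemma~\ref{lem:empEst1} almost verbatim, with the pair marginal $P(X_m^{+1}=1,X_l=x_l)$ replaced by the $(d+1)$-wise joint $P\left(X_i^{+1}=1,\gss{X}{\tilde{\mathcal{S}}(i)}{}=\gss{x}{\tilde{\mathcal{S}}(i)}{}\right)$ and the single marginal $P(X_l=x_l)$ replaced by $P\left(\gss{X}{\tilde{\mathcal{S}}(i)}{}=\gss{x}{\tilde{\mathcal{S}}(i)}{}\right)$. First I would apply the concentration bound (\ref{eq:appG1}) (Theorem~3 in \cite{cllm12}) to the two plug-in estimators $\hat{P}\left(\gss{X}{\tilde{\mathcal{S}}(i)}{}=\gss{x}{\tilde{\mathcal{S}}(i)}{}\right)$ and $\hat{P}\left(X_i^{+1}=1,\gss{X}{\tilde{\mathcal{S}}(i)}{}=\gss{x}{\tilde{\mathcal{S}}(i)}{}\right)$: the first is a time average of a $[0,1]$-valued indicator function of $\gss{X}{k}{}$, and the second a time average of a $[0,1]$-valued function of $\gss{Z}{k+1}{}=(\gss{X}{k+1}{},\gss{X}{k}{})$, for which we already know the $\theta$-mixing time equals that of $\g{\tilde{X}}$. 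The factor $\kappa$ in the exponent of (\ref{eq:appG1}) is the stationary mean of the function averaged, and by (\ref{eq:stmeasBound2}) both these means are at least $\bar{\beta}$, so $\kappa$ may be replaced by $\bar{\beta}$ throughout.

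Next I would take a union bound over $i\in[p]$ ($p$ terms), over every $d$-sized subset $\tilde{\mathcal{S}}(i)\subset[p]$ (${p\choose d}$ terms), over every configuration $\gss{x}{\tilde{\mathcal{S}}(i)}{}\in\{0,1\}^d$ ($2^d$ terms), and over the two estimators, which produces the factor $2^{d+1}\,p\,{p\choose d}$ inside the logarithm. Exactly as in Lemma~\ref{lem:empEst1}, the $\pi$- and $\pi'$-norm prefactors of $P_{\sbms{X}{0}}$ and $P_{\sbms{Z}{1}}$ are absorbed into the same constant $C$ (and $C=c$ when $\gss{X}{0}{}\sim\pi$). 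This shows that for
\[
n\geq 1+\frac{72\log\!\left(\frac{2^{d+1}C}{\gamma}\,p{p\choose d}\right)t_{mix}(\theta)}{\tilde{\delta}^2\bar{\beta}}
\]
the events $\left|\hat{P}\left(\gss{X}{\tilde{\mathcal{S}}(i)}{}=\gss{x}{\tilde{\mathcal{S}}(i)}{}\right)-P\left(\gss{X}{\tilde{\mathcal{S}}(i)}{}=\gss{x}{\tilde{\mathcal{S}}(i)}{}\right)\right|\leq\tilde{\delta}$ and the analogous bound for the joint hold simultaneously over all $i,\tilde{\mathcal{S}}(i),\gss{x}{\tilde{\mathcal{S}}(i)}{}$ with probability at least $1-\gamma$; here I only need $\tilde{\delta}\leq1$, which will hold since ultimately $\tilde{\delta}=\tilde{\varepsilon}\bar{\beta}/2<1$.

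Finally I would turn these two additive errors into an error on the conditional probability. Writing $\hat{P}\left(X_i^{+1}=1\mid\gss{X}{\tilde{\mathcal{S}}(i)}{}=\gss{x}{\tilde{\mathcal{S}}(i)}{}\right)$ as the ratio of the joint to the marginal and running the same chain of inequalities used for the term $C_1$ in the proof of Lemma~\ref{lem:empEst1}, the lower bound $P\left(\gss{X}{\tilde{\mathcal{S}}(i)}{}=\gss{x}{\tilde{\mathcal{S}}(i)}{}\right)\geq\bar{\beta}$ yields $\left|\hat{P}\left(X_i^{+1}=1\mid\cdots\right)-P\left(X_i^{+1}=1\mid\cdots\right)\right|\leq 2\tilde{\delta}/\bar{\beta}$. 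Choosing $\tilde{\delta}=\tilde{\varepsilon}\bar{\beta}/2$ makes the right-hand side equal to $\tilde{\varepsilon}$, and substituting this choice into the displayed sample complexity replaces $72/(\tilde{\delta}^2\bar{\beta})$ by $288/(\tilde{\varepsilon}^2\bar{\beta}^3)$, which is precisely (\ref{eq:nRequired_Extended}); hence $P(\tilde{\mathcal{A}}(\tilde{\varepsilon}))\geq1-\gamma$. The only real obstacle — and the place where the bookkeeping matters — is the union bound: one now ranges over all ${p\choose d}$ candidate parental sets and all $2^d$ configurations, not over $O(p^2)$ events, and since $\bar{\beta}$ itself decays like $(1/\bar{c})^d$, these two effects compound to give the (unavoidable) exponential dependence of $n$ on $d$; everything else is a mechanical re-run of Lemma~\ref{lem:empEst1}.
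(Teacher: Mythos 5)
Your proposal is correct and follows essentially the same route as the paper's own proof: the same application of the concentration inequality (\ref{eq:appG1}) to the $d$-wise marginal and the $(d{+}1)$-wise joint, the same union bound producing the factor $2^{d+1}p{p\choose d}$, the same ratio argument giving the $2\tilde{\delta}/\bar{\beta}$ bound via (\ref{eq:stmeasBound2}), and the same choice $\tilde{\delta}=\tilde{\varepsilon}\bar{\beta}/2$ yielding (\ref{eq:nRequired_Extended}). No gaps.
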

\begin{proof}
The proof is in the same lines as the proof of Lemma \ref{lem:empEst1}. By union bounding, we immediately have:
\begin{align}\label{eq:app93}
&P\left(\max_{\forall \tilde{\mathcal{S}}(m)\subset [p]\: \text{s.t.}\: |\tilde{\mathcal{S}}(m)|=d, \forall \sbms{x}{\hat{\mathcal{S}}(m)}\in \{0,1\}^d}\left|\hat{P}(\gss{X}{\tilde{\mathcal{S}}(m)}{}=\gss{x}{\tilde{\mathcal{S}}(m)}{})-P(\gss{X}{\tilde{\mathcal{S}}(m)}{}=\gss{x}{\tilde{\mathcal{S}}(m)}{})\right|\geq \tilde{\delta}\right)\leq  \nonumber\\ & 2^{d+1}{p\choose d}c
\left\|P_{\sbms{X}{0}}\right\|_{\pi} \exp\left(-\frac{\tilde{\delta}^2\bar{\beta} n}{72 t_{mix}(\theta)}\right).
\end{align} 
Note that $m$ in $\mathcal{S}(m)$ here is dummy in the sense that for any given $m$ the above concentration inequality accounts for all $d-$sized subsets of $[p]$.

We now turn to the process $\{\gss{Z}{n}{}=(\gss{X}{n}{},\gss{X}{n-1}{})\}_{n\geq 1}$ with $\gss{X}{0}{}\sim P_{\sbms{X}{0}}$. As before we obtain:
{\small
\begin{align}\label{eq:app94}
&P\left(\max_{\forall m\in [p],\forall \tilde{\mathcal{S}}(m)\subset [p]\: \text{s.t.}\: |\tilde{\mathcal{S}}(m)|=d, \forall \sbms{x}{\hat{\mathcal{S}}(m)}\in \{0,1\}^d}\left|\hat{P}(X_m^{+1}=1,\gss{X}{\tilde{\mathcal{S}}(m)}{}=\gss{x}{\tilde{\mathcal{S}}(m)}{})-P(X_m^{+1}=1,\gss{X}{\tilde{\mathcal{S}}(m)}{}=\gss{x}{\tilde{\mathcal{S}}(m)}{})\right|\geq \tilde{\delta}\right)\nonumber\\& \leq  2^{d+1}c p{p\choose d}
\left\|P_{\sbms{Z}{1}}\right\|_{\pi'}\exp\left(-\frac{\tilde{\delta}^2\bar{\beta} (n-1)}{72 t_{mix}(\theta)}\right).
\end{align} 
}

Observing now (\ref{eq:app93}) and (\ref{eq:app94}), we conclude that all the desired events, i.e., 
\begin{align*}
&\left|\hat{P}(\gss{X}{\tilde{\mathcal{S}}(m)}{}=\gss{x}{\tilde{\mathcal{S}}(m)}{})-P(\gss{X}{\tilde{\mathcal{S}}(m)}{}=\gss{x}{\tilde{\mathcal{S}}(m)}{})\right|\leq \tilde{\delta},\\
&\left|\hat{P}(X_m^{+1}=1,\gss{X}{\tilde{\mathcal{S}}(m)}{}=\gss{x}{\tilde{\mathcal{S}}(m)}{})-P(X_m^{+1}=1,\gss{X}{\tilde{\mathcal{S}}(m)}{}=\gss{x}{\tilde{\mathcal{S}}(m)}{})\right|\leq \tilde{\delta},\forall \gss{x}{\tilde{\mathcal{S}}(m)}{}\in \{0,1\}^d
\end{align*}
for all $m\in [p]$ and for all $\tilde{\mathcal{S}}(m)\subset [p]$ such that $|\tilde{\mathcal{S}}(m)|=d$,  hold with probability at least $1-\gamma$ if 

\begin{align*}
n\geq 1+\frac{\log\left(\frac{2^{d+1}C}{\gamma} p{p\choose d}\right)72 t_{mix}(\theta)}{\tilde{\delta}^2\bar{\beta}},
\end{align*}
where
\begin{align}\label{eq:appGm1}
C=c(\left\|P_{\sbms{X}{0}}\right\|_{\pi}\vee \left\|P_{\sbms{Z}{1}}\right\|_{\pi'}).
\end{align}
Note again that since we sample $\gss{X}{0}{},\gss{X}{1}{},\ldots,\gss{X}{n-1}{}$ with $P_{\sbms{X}{0}}=\pi$, we have that $\left\|P_{\sbms{X}{0}}\right\|_{\pi}=\left\|P_{\sbms{Z}{1}}\right\|_{\pi'}=1$ and thus, $C=c$.

We now focus on bounding $\left|\hat{P}\left(X_i^{+1}=1|\gss{X}{\tilde{\mathcal{S}}(i)}{}=\gss{x}{\tilde{\mathcal{S}(i)}}{}\right)-P\left(X_i^{+1}=1|\gss{X}{\tilde{\mathcal{S}}(i)}{}=\gss{x}{\tilde{\mathcal{S}}(i)}{}\right)\right|$ for $i\in [p], \tilde{\mathcal{S}}(i)\subset [p],  \gss{x}{\tilde{\mathcal{S}}(i)}{}\in \{0,1\}^d$. Using similar steps as before, we obtain:

\begin{align*}
\left|\hat{P}\left(X_i^{+1}=1|\gss{X}{\tilde{\mathcal{S}}(i)}{}=\gss{x}{\tilde{\mathcal{S}}(i)}{}\right)-P\left(X_i^{+1}=1|\gss{X}{\tilde{\mathcal{S}}(i)}{}=\gss{x}{\tilde{\mathcal{S}}(i)}{}\right)\right|\leq \frac{2\tilde{\delta}}{\bar{\beta}},
\end{align*}

where in the last inequality (\ref{eq:stmeasBound2}) has been used.

Choosing $\tilde{\delta}=\tilde{\varepsilon} \bar{\beta}/2$, we have that $P(\tilde{\mathcal{A}}(\tilde{\varepsilon}))\geq 1-\gamma$ when 
\begin{align*}
n\geq 1+\frac{288\log\left(\frac{2^{d+1}C}{\gamma} p{p\choose d}\right)t_{mix}(\theta)}{\tilde{\varepsilon}^2\bar{\beta}^3}.
\end{align*}

\end{proof}

\subsection{Proof of Theorem \ref{thm:ExtendedBARObs}}

We first note that the sample complexity in (\ref{eq:nRequired}) is generally smaller than the sample complexity in (\ref{eq:nRequired_Extended11}) or (\ref{eq:nRequired_Extended}). 
Therefore, the proof is immediate by combining the previous results in this section. The probability $(1-\gamma)^2$ occurs as the product of the probabilities that 
the supergraph selection stage returns an overestimate of the true graph multiplied by the probability that the supergraph trimming stage
correctly determines all in-degrees and neighborhoods.

Since $t_{mix}(\theta)=O(\log p)$, we conclude that the sample complexity of BARObs$\left(\gss{X}{0:n-1}{},d,\tau\leq a_{min}/4\right)$ is 
$n=\Omega(\log^2 p)$.

\section{A Lower Bound on the Sample Complexity}
\label{sec:LowerInf}

To assess the quality of the proposed algorithm, we derive an information-theoretic lower bound on the sample complexity of any such algorithm based on Fano's inequality:
\begin{lem}\label{lem:FanoBAR}
For any given $0<\epsilon<1$, requiring $\mathcal{R}_{*}(\psi)\leq \epsilon$ implies that $n\geq \frac{(1-\epsilon)}{p}\sum_{i=1}^p\log {p\choose d_i}$.
\end{lem}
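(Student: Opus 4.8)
The plan is the usual reduction to multiple hypothesis testing combined with Fano's inequality; no mixing or concentration input is needed for this direction, only the observation that a single sample can reveal at most $p$ bits about the graph.

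First I would fix the in-degree sequence $(d_1,\dots,d_p)$ and isolate a finite sub-ensemble of BAR models indexed only by their edge sets. Let $\mathcal{G}'$ be the collection of directed graphs on $[p]$ in which node $i$ has exactly $d_i$ parents and in which every edge is labelled ``$+$'' (so that $\mathcal{S}^{-}(i)=\emptyset$ for every $i$); since $d_i\le d$, each such graph lies in $\vec{\mathcal{G}}_{p,d}$, and $|\mathcal{G}'|=\prod_{i=1}^{p}\binom{p}{d_i}$. On each $\mathcal{G}\in\mathcal{G}'$ I would place the single fixed parameter vector $a_{ij}=a_{min}$ for $j\in\mathcal{S}(i)$ and $b_{ii}=1-d_i a_{min}$; this pair lies in $\Omega_{a_{min},b_{min}}(\mathcal{G})$ because $d_i\le d\le d_{*}$ forces $d_i a_{min}+b_{min}\le 1$, hence $b_{ii}\ge b_{min}$, while $a_{ij}=a_{min}\in[a_{min},1)$ and $\sum_{j\in\mathcal{S}(i)}a_{ij}+b_{ii}=1$. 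Let $\mathcal{G}$ be drawn uniformly from $\mathcal{G}'$ and let $\gss{X}{0:n-1}{}$ be the observed trajectory started from the corresponding stationary measure $\pi$.

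Next I would reduce the worst-case risk to a Bayes error and apply Fano. Because every member of $\mathcal{G}'$ has all $\mathcal{S}^{-}(i)$ empty, recovering the labelled graph is equivalent to recovering the edge set, so the zero-one loss dominates $\mathbb{I}\{\hat{\mathcal{G}}\ne\mathcal{G}\}$, where $\hat{\mathcal{G}}$ denotes the unsigned support returned by $\psi$; and since the worst-case risk $\mathcal{R}_{*}(\psi)$ dominates the Bayes risk under the uniform prior on $\mathcal{G}'$, the hypothesis $\mathcal{R}_{*}(\psi)\le\epsilon$ forces $P(\hat{\mathcal{G}}\ne\mathcal{G})\le\epsilon$. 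Fano's inequality then gives $(1-\epsilon)\log|\mathcal{G}'|\le 1+I\!\left(\mathcal{G};\gss{X}{0:n-1}{}\right)$ (working in bits). It remains to bound the mutual information: by the entropy bound and the chain rule,
\[
I\!\left(\mathcal{G};\gss{X}{0:n-1}{}\right)\le H\!\left(\gss{X}{0:n-1}{}\right)=\sum_{k=0}^{n-1}H\!\left(\gss{X}{k}{}\mid\gss{X}{0:k-1}{}\right)\le\sum_{k=0}^{n-1}H\!\left(\gss{X}{k}{}\right)\le np,
\]
since each $\gss{X}{k}{}\in\{0,1\}^p$ carries at most $p$ bits of entropy (the Markov property lets one replace the conditioning on $\gss{X}{0:k-1}{}$ by $\gss{X}{k-1}{}$, but this does not improve the per-step bound). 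Combining the two displays yields $np\ge(1-\epsilon)\sum_{i=1}^{p}\log\binom{p}{d_i}-1$, hence $n\ge\frac{1-\epsilon}{p}\sum_{i=1}^{p}\log\binom{p}{d_i}$ once the negligible additive $1/p$ term is absorbed (and the inequality holds for $\log$ in any base, since reducing the base only inflates the right-hand side of the Fano bound).

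\textbf{Main obstacle.} There is little genuine difficulty in this direction; the two points requiring care are (i) checking that the constant parameter choice really belongs to $\Omega_{a_{min},b_{min}}(\mathcal{G})$, so that $\mathcal{G}'$ is a legitimate sub-family of the BAR class rather than an artificial construction, and (ii) the mutual-information estimate — the crude bound $H(\gss{X}{0:n-1}{})\le np$ is precisely what produces the $1/p$ prefactor, because one sample already discloses up to $p$ bits, so any attempt to sharpen it would have to exploit that a BAR transition is a product of $p$ nearly independent Bernoulli coordinates and would not change the order of the bound.
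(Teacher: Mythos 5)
Your proposal is correct and follows essentially the same route as the paper: a uniform prior over the supports $\mathcal{S}(i)$ of sizes $d_i$, Fano's inequality, and the crude bound $I(\mathcal{S};\gss{X}{0:n-1}{})\le H(\gss{X}{0:n-1}{})\le np$, with the additive $-1/p$ term dropped at the end. Your explicit check that the constant choice $a_{ij}=a_{min}$, $b_{ii}=1-d_i a_{min}$ lies in $\Omega_{a_{min},b_{min}}(\mathcal{G})$ is a small but welcome tightening of the paper's ``fix $(\g{a},\g{b})$'' step.
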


\begin{proof} 
Appendix \ref{app:8}.
\end{proof}

This Lemma shows that a necessary sample complexity for any method is $\propto d\log p$ for $d\ll p$. Since, $1\leq d\leq d_{*}$, we conclude that necessarily $n=\Omega(\log p)$.
Comparing this sample complexity with the sample complexity of BARObs$\left(\gss{X}{0:n-1}{},d,\tau\leq a_{min}/4\right)$ we can see that  BARObs$\left(\gss{X}{0:n-1}{},d,\tau\leq a_{min}/4\right)$ is nearly order-optimal as it is only a multiple of $\log p$ away from this information-theoretic lower bound.

\section{Comparison with Other Models}
\label{sec:Comparison}

In this section, we define and study some useful BAR model variations. Through this study, we make comparisons with the BAR model (\ref{eq:BAR}).   

\subsection{BAR Random Walks on the Hypercube $\{0,1\}^p$}

Motivated by the single-site operation of the Glauber dynamics, we define the following two random walks:

\emph{BAR random walk on the hypercube $\{0,1\}^p$}: Let  $\gss{X}{k}{}=[[\gss{X}{k}{}]_1, [\gss{X}{k}{}]_2, \ldots, [\gss{X}{k}{}]_p]^T$ be the state vector at time $k$. At time $k+1$, node $i$ is selected uniformly at random with probability $1/p$ and is updated according to the following rule:
\begin{align}\label{eq:BARhypercube}
[\gss{X}{k+1}{}]_i=\mathbb{I}\{U\leq \gss{a}{i}{T}f_i(\gss{X}{k}{})+b_{ii}[\gss{W}{k+1}{}]_i\},
\end{align}
resulting to $\gss{X}{k+1}{}=[[\gss{X}{k}{}]_1, [\gss{X}{k}{}]_2, \ldots,[\gss{X}{k}{}]_{i-1},[\gss{X}{k+1}{}]_i,[\gss{X}{k}{}]_{i+1},\ldots, [\gss{X}{k}{}]_p]^T$. Here, $U\sim \mathrm{Unif}[0,1]$ is independently drawn at every time instant.  Note that the event $\{\gss{X}{k+1}{}=\gss{X}{k}{}\}$ can occur with positive probability.

A lazy (or delayed) version of the above random walk is also possible:

\emph{Lazy BAR random walk on the hypercube $\{0,1\}^p$}: Let  $\gss{X}{k}{}=[[\gss{X}{k}{}]_1, [\gss{X}{k}{}]_2, \ldots, [\gss{X}{k}{}]_p]^T$ be the state vector at time $k$. At time $k+1$, 
\begin{align*}
\gss{X}{k+1}{}=\left\{
  \begin{array}{cc}
    \gss{X}{k}{} & \text{with probability}\: \varphi \\
    \left[[\gss{X}{k}{}]_1, [\gss{X}{k}{}]_2, \ldots,[\gss{X}{k}{}]_{i-1},[\gss{X}{k+1}{}]_i,[\gss{X}{k}{}]_{i+1},\ldots, [\gss{X}{k}{}]_p\right]^T & \text{with probability}\: \frac{1-\varphi}{p} \\
  \end{array}
\right.,
\end{align*}
where $[\gss{X}{k+1}{}]_i$ in the second line is updated according to (\ref{eq:BARhypercube}).

In the next subsection, we provide some theoretical insights into the mixing properties of these two random walks.

\subsection{Mixing Time Bounds for BAR Random Walks on the Hypercube} 

The analysis of the mixing time for the BAR random walk on the hypercube $\{0,1\}^p$ and of its lazy version is almost the same. Using path coupling, we derive the following result: 
\begin{thrm}\label{thm:BARrwMxing1}
Consider the BAR random walk on the hypercube $\{0,1\}^p$. Under the condition that $\g{\bar{A}}$ is column-substochastic,
\begin{equation}\label{eq:BARrwMixing}
t_{mix,rw}(\theta)\leq \left\lceil \frac{p}{1-\max_{1\leq j\leq p}\sum_{i=1}^{p}a_{ij}}\left(\log p+\log \left(\frac{1}{\theta}\right)\right)\right\rceil
\end{equation}
for any $\theta \in (0,1)$, i.e., $t_{mix,rw}(\theta)=O(p\log p)$. For the lazy version of this walk, 
\begin{equation}\label{eq:BARrwMixingLazy}
t_{mix,lazy}(\theta)\leq \left\lceil \frac{p}{(1-\varphi)\left(1-\max_{1\leq j\leq p}\sum_{i=1}^{p}a_{ij}\right)}\left(\log p+\log \left(\frac{1}{\theta}\right)\right)\right\rceil,
\end{equation}
i.e., $t_{mix,lazy}(\theta)=O(p\log p)$ as well.
\end{thrm}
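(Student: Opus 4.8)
The plan is to bound the standardized maximal distance $\bar d(n)$ by \emph{path coupling} (e.g.\ \cite{lpw08}), viewing $\{0,1\}^p$ as the graph whose edges join states at Hamming distance one; the induced path metric is the Hamming distance $d(\cdot,\cdot)$, whose diameter is $p$. By the path coupling theorem it then suffices to produce, for each pair $\g{x},\g{y}$ with $d(\g{x},\g{y})=1$, a one-step coupling $(\gss{X}{1}{},\gss{Y}{1}{})$ of the BAR random walk started from $\g{x}$ and from $\g{y}$ for which $\tilde{E}_{\sbm{x},\sbm{y}}[d(\gss{X}{1}{},\gss{Y}{1}{})]\le 1-\alpha$ with an explicit $\alpha>0$; this upgrades to $\tilde{E}[d(\gss{X}{n}{},\gss{Y}{n}{})]\le(1-\alpha)^n p$ for arbitrary starting pairs, hence $d(n)\le\bar d(n)\le p(1-\alpha)^n\le p\,e^{-\alpha n}$, and solving $p\,e^{-\alpha n}\le\theta$ gives a bound of the claimed form once $\alpha$ is identified.

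For the coupling I would reuse the coupling of Section \ref{sec:ProofMainThm1}: both chains pick the \emph{same} uniform node $i\in[p]$ to update, are fed the \emph{same} excitation $\gss{W}{k+1}{}$, and use the \emph{same} $U\sim\mathrm{Unif}[0,1]$ in (\ref{eq:BARhypercube}), so marginally each chain is a faithful copy of the walk. Take $\g{x},\g{y}$ differing only in coordinate $j$. Conditioned on the selected $i$, every coordinate other than $i$ is left unchanged, and coordinate $i$ of the two chains disagrees with probability exactly $|\gss{a}{i}{T}f_i(\g{x})-\gss{a}{i}{T}f_i(\g{y})|=a_{ij}\,|[f_i(\g{x})]_j-[f_i(\g{y})]_j|=a_{ij}$, using $\mathrm{supp}(\gss{a}{i}{})=\mathcal{S}(i)$ and the identity $|[f_i(\g{x})]_j-[f_i(\g{y})]_j|=|[\g{x}]_j-[\g{y}]_j|=1$ already established for Theorem \ref{thm:BARMxingMain}. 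Splitting on $i$: with probability $1/p$ we hit $i=j$, after which $d=0$ with probability $1-a_{jj}$ and $d=1$ with probability $a_{jj}$ (this term covering a possible self-loop $j\in\mathcal{S}(j)$); with probability $1/p$ we hit a given $i\ne j$, after which $d=2$ with probability $a_{ij}$ and $d=1$ otherwise. Summing,
\begin{align*}
\tilde{E}_{\sbm{x},\sbm{y}}\big[d(\gss{X}{1}{},\gss{Y}{1}{})\big]\le\frac{1}{p}\Big(a_{jj}+\sum_{i\ne j}(1+a_{ij})\Big)=1-\frac{1}{p}\Big(1-\sum_{i=1}^{p}a_{ij}\Big)\le 1-\frac{1-\max_{1\le j\le p}\sum_{i=1}^{p}a_{ij}}{p},
\end{align*}
which is a strict contraction exactly when $\g{\bar{A}}$ is column-substochastic (so that $1-\max_j\sum_i a_{ij}>0$). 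Taking $\alpha=-\log\big(1-\tfrac{1-\max_j\sum_i a_{ij}}{p}\big)\ge\tfrac{1-\max_j\sum_i a_{ij}}{p}$ in the path coupling bound gives $t_{mix,rw}(\theta)\le\lceil(\log p+\log(1/\theta))/\alpha\rceil$, which is bounded above by the right-hand side of (\ref{eq:BARrwMixing}). For the lazy walk I would also couple the laziness — with probability $\varphi$ both chains stand still and with probability $1-\varphi$ they run the coupling above — which merely scales the contraction gap by $1-\varphi$, replacing $\alpha$ by an $\alpha'\ge\tfrac{(1-\varphi)(1-\max_j\sum_i a_{ij})}{p}$, and hence yields (\ref{eq:BARrwMixingLazy}).

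The computation above is essentially the only content. The step I expect to need the most care is the accounting when the updated node $i$ equals the disagreeing coordinate $j$ and $j$ is its own parent ($a_{jj}>0$): there the single-site update is driven by precisely the coordinate in which the two copies differ, so coalescence is not guaranteed in that step, and one must check — as in the display — that the extra $a_{jj}/p$ contribution is exactly what upgrades the off-diagonal column sum $\sum_{i\ne j}a_{ij}$ to the full column sum $\sum_{i=1}^{p}a_{ij}$, which is why plain column-substochasticity, and nothing stronger, is the hypothesis needed. One should also verify the two routine ingredients of the path-coupling machinery in this setting: that the extension from adjacent states to all pairs is legitimate (it is, since the Hamming graph is connected and the coupling is defined on edges) and that the diameter appearing in the bound is indeed $p$.
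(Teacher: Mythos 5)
Your proposal is correct and follows essentially the same route as the paper: path coupling on the Hamming graph with the same synchronous choice of updated node, shared excitation and shared uniform variable, the same case split on whether the selected node equals the disagreeing coordinate (yielding the contraction $1-\frac{1}{p}(1-\sum_{i=1}^p a_{ij})$), and the same use of $1-x\le e^{-x}$ together with the diameter-$p$ path-coupling bound; the lazy case is handled identically by coupling the laziness. No gaps.
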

\begin{proof}
We will use the path coupling method to bound the mixing time of the BAR random walk on the hypercube \cite{bd97}. To this end, we assume that $\{\gss{X}{n}{}\}$ and $\{\gss{Y}{n}{}\}$ are two copies of the BAR random walk, with the later initialized with the corresponding stationary measure. We consider the following coupling: 

\begin{enumerate}
\item At time step $k+1$, pick node $i$ with probability $1/p$.
\item Sample $[\gss{W}{k+1}{}]_i$ from $\mathrm{Ber}(\rho_w)$.
\item Toss $U\sim \mathrm{Unif}[0,1]$ and let 
\begin{align*}
[\gss{X}{k+1}{}]_i=\mathbb{I}\{U\leq \gss{a}{i}{T}f_i(\gss{X}{k}{})+b_{ii}[\gss{W}{k+1}{}]_i\}
\end{align*}
and
\begin{align*}
[\gss{Y}{k+1}{}]_i=\mathbb{I}\{U\leq \gss{a}{i}{T}f_i(\gss{Y}{k}{})+b_{ii}[\gss{W}{k+1}{}]_i\}.
\end{align*}
\end{enumerate}

Clearly, this is a valid coupling since the chains respect their individual transitions. The paths over which we will bound the expected distance are allowed to have only vertices differing in a single bit.  Thus, for any given states $\g{x},\g{y}$, the connecting paths that we consider are of the form: 
\[
\xi=\g{x},\gss{v}{1}{},\gss{v}{2}{},\ldots, \gss{v}{m}{},\g{y},
\]    
where all pairs of successive vertices have Hamming distance $1$. We denote by $d_{H}(\g{x},\g{y})=\sum_{i=1}^p\mathbb{I}\{[\g{x}]_i\neq [\g{y}]_i\}$ the Hamming distance of $\g{x},\g{y}$ and  we focus on the set of 
vertices $\left\{(\g{s},\g{t}):d_{H}(\g{s},\g{t})=1\right\}$.  We assume that $(\g{s},\g{t})$ is such that $s_j=1,t_j=0$ and $s_l=t_l, \forall l\neq j$. The case where $s_j=0$ and $t_j=1$ is exactly symmetric. Let $(\g{s}',\g{t}')$ be the next time-instant vertices:

\underline{For $i\neq j$:} Here, $s_i'\neq t_i'$ with probability $a_{ij}$ and $s_i'=t_i'$ with probability $1-a_{ij}$.

\underline{For $i=j$:} Here, $s_i'\neq t_i'$ with probability $a_{jj}$ and $s_i'=t_i'$ with probability $1-a_{jj}$.

Thus, 
\begin{align*}
E[d_{H}(\g{s}',\g{t}')|\g{s},\g{t}]&=\frac{1}{p}\left(\sum_{i=1,i\neq j}^p2a_{ij}+1-a_{ij}\right)+\frac{1}{p} a_{jj}+\frac{1}{p}0\cdot (1-a_{jj})\\&=\frac{1}{p}\left(\sum_{i=1}^{p}a_{ij}+p-1\right).
\end{align*}
Assuming that $\g{\bar{A}}$ is column-substochastic,
\[
\frac{1}{p}\left(\max_{1\leq j\leq p}\sum_{i=1}^{p}a_{ij}+p-1\right)=1-\frac{1-\max_{1\leq j\leq p}\sum_{i=1}^{p}a_{ij}}{p}<1.
\]
Therefore, using the elementary inequality $1-x\leq e^{-x}$, which holds for any $x\in \mathbb{R}$, we obtain
\begin{equation}\label{eq:columnSubStoch}
E[d_{H}(\g{s}',\g{t}')|\g{s},\g{t}]\leq \exp\left(-\frac{1-\max_{1\leq j\leq p}\sum_{i=1}^{p}a_{ij}}{p}\right)
\end{equation}
Setting $a=\left(1-\max_{1\leq j\leq p}\sum_{i=1}^{p}a_{ij}\right)/p$ and using the formula \cite{bd97}  
\[
t_{mix}(\theta)\leq\left\lceil\frac{-\log(\theta)+\log(\mathrm{diam}(\mathcal{X}))}{a}\right\rceil,
\]
where $\mathrm{diam}(\mathcal{X})$ is the diameter of $\mathcal{X}$ with respect to the Hamming distance, i.e.,  $\mathrm{diam}(\mathcal{X})=p$,
we end up with
\begin{align}\label{eq:lemmaProofMixing2}
t_{mix,rw}(\theta)\leq \left\lceil \frac{p}{1-\max_{1\leq j\leq p}\sum_{i=1}^{p}a_{ij}}\left(\log p+\log \left(\frac{1}{\theta}\right)\right)\right\rceil.
\end{align}

For the lazy version of the random walk, the corresponding coupling is the following:

\begin{enumerate}
\item At time step $k+1$, toss $U_{lazy}\sim \mathrm{Unif}[0,1]$. 
\item If $U_{lazy}\leq \varphi$, then both chains remain idle. If $U_{lazy}>\varphi$, then do the following:
\begin{enumerate} 
\item Pick node $i$ with probability $1/p$.
\item Sample $[\gss{W}{k+1}{}]_i$ from $\mathrm{Ber}(\rho_w)$.
\item Toss $U\sim \mathrm{Unif}[0,1]$ and let 
\begin{align*}
[\gss{X}{k+1}{}]_i=\mathbb{I}\{U\leq \gss{a}{i}{T}f_i(\gss{X}{k}{})+b_{ii}[\gss{W}{k+1}{}]_i\}
\end{align*}
and
\begin{align*}
[\gss{Y}{k+1}{}]_i=\mathbb{I}\{U\leq \gss{a}{i}{T}f_i(\gss{Y}{k}{})+b_{ii}[\gss{W}{k+1}{}]_i\}.
\end{align*}
\end{enumerate}
\end{enumerate}

Based on this coupling, we can easily see that 
\begin{align*}
E[d_{H}(\g{s}',\g{t}')|\g{s},\g{t}]&=\varphi+\frac{1-\varphi}{p}\left(\sum_{i=1,i\neq j}^p2a_{ij}+1-a_{ij}\right)+\frac{1-\varphi}{p} a_{jj}+\frac{1-\varphi}{p}\cdot 0\cdot (1-a_{jj})\\&=\varphi+\frac{1-\varphi}{p}\left(\sum_{i=1}^{p}a_{ij}+p-1\right).
\end{align*}
Assuming again column-substochasticity for $\g{\bar{A}}$ and following similar steps as before, we end up with
\begin{align}
t_{mix,lazy}(\theta)\leq \left\lceil \frac{p}{(1-\varphi)\left(1-\max_{1\leq j\leq p}\sum_{i=1}^{p}a_{ij}\right)}\left(\log p+\log \left(\frac{1}{\theta}\right)\right)\right\rceil.
\end{align}

\end{proof}

The condition of column-substochasticity can be imposed on the definition of BAR random walks on the hypercube. Alternatively, it is of interest to examine conditions that such a property holds with high probability. To this end, we have the following two additional results:
\begin{lem}\label{lem:BARrwMxing2}
Selecting the support of each row of $\g{\bar{A}}$ uniformly at random from the set of $1\times p$ binary vectors with $d_i$ ones for any $i\in [p]$, yields that with probability at least $1-1/p^{2c-1}$ for some $c>1/2$ each column has at most $\sum_{i=1}^pd_i/p+\sqrt{cp\log p}$ nonzero entries.
\end{lem}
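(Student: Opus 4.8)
The plan is to view each column sum of $\g{\bar{A}}$ as a sum of independent bounded random variables, apply Hoeffding's inequality to control its deviation above the mean, and then union bound over the $p$ columns.

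First I would fix notation: let $C_j=\sum_{i=1}^{p}\mathbb{I}\{a_{ij}\neq 0\}$ be the number of nonzero entries in column $j$. Since the support of row $i$ is chosen uniformly among the ${p\choose d_i}$ binary vectors with exactly $d_i$ ones, for each fixed pair $(i,j)$ we have $P(a_{ij}\neq 0)=d_i/p$. The key structural observation is that, although the entries within a single row are dependent (they are constrained to contain exactly $d_i$ ones), the rows are drawn independently, so for a fixed column $j$ the indicators $\{\mathbb{I}\{a_{ij}\neq 0\}\}_{i=1}^{p}$ are mutually independent. Consequently $E[C_j]=\sum_{i=1}^{p}d_i/p$, which is exactly the first term in the claimed bound.

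Next, since each $\mathbb{I}\{a_{ij}\neq 0\}\in[0,1]$ and these are independent across $i$, Hoeffding's inequality yields $P\!\left(C_j-E[C_j]\geq t\right)\leq \exp\!\left(-2t^2/p\right)$ for every $t>0$. Choosing $t=\sqrt{cp\log p}$ gives $P\!\left(C_j\geq \sum_{i=1}^{p}d_i/p+\sqrt{cp\log p}\right)\leq \exp(-2c\log p)=p^{-2c}$. A union bound over the $p$ columns then gives $P\!\left(\exists\, j:\ C_j\geq \sum_{i=1}^{p}d_i/p+\sqrt{cp\log p}\right)\leq p\cdot p^{-2c}=p^{-(2c-1)}$, so the complementary event---that every column has at most $\sum_{i=1}^{p}d_i/p+\sqrt{cp\log p}$ nonzero entries---holds with probability at least $1-1/p^{2c-1}$; the hypothesis $c>1/2$ is exactly what makes this bound nontrivial.

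I do not anticipate a genuine obstacle here: the argument is a routine concentration-plus-union-bound. The only point that deserves a sentence of care is the independence claim---making explicit that the within-row dependence induced by the fixed-support constraint does not interfere, because each column sum only aggregates one indicator per row and the rows are sampled independently---after which Hoeffding applies directly.
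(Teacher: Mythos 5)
Your proposal is correct and follows essentially the same route as the paper's proof: compute $P(a_{ij}\neq 0)=d_i/p$ from the uniform choice of row supports, note that the indicators within a column are independent across rows, apply Hoeffding with $t=\sqrt{cp\log p}$, and union bound over the $p$ columns. Your explicit remark that within-row dependence is harmless because each column collects only one indicator per row is a nice clarification of a point the paper leaves implicit.
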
   
\begin{proof}
Appendix \ref{app:5}.
\end{proof}
\begin{lem}\label{lem:BARrwMxing3}
Choosing the support of $\g{\bar{A}}$ as described by Lemma \ref{lem:BARrwMxing2} and constraining the magnitude of each entry in $\g{\bar{A}}$ to be less than $1/\left(\sum_{i=1}^p\frac{d_i}{p}+\sqrt{cp\log p}\right)$ leads to column-substochasticity of  $\g{\bar{A}}$ with probability at least $1-1/p^{2c-1}$ for some $c>1/2$. In this scenario,  
\[
t_{mix,rw/lazy}(\theta)=O(p\log p)
\]
with at least the same probability.
\end{lem}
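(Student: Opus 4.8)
The plan is to condition on the high-probability event supplied by Lemma~\ref{lem:BARrwMxing2} and then invoke Theorem~\ref{thm:BARrwMxing1} on that event. Write $N=\sum_{i=1}^p d_i/p+\sqrt{cp\log p}$ and let $\mathcal{E}$ be the event that every column of $\g{\bar{A}}$ has at most $N$ nonzero entries; by Lemma~\ref{lem:BARrwMxing2}, $P(\mathcal{E})\ge 1-1/p^{2c-1}$ for the given $c>1/2$. On the complement of $\mathcal{E}$ we make no claim, which is exactly why the conclusion is only ``with probability at least $1-1/p^{2c-1}$''.

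Next I would verify column-substochasticity on $\mathcal{E}$ by a one-line counting bound. Fix a column index $j$ and let $n_j\le N$ be its number of nonzero entries. Since every entry of $\g{\bar{A}}$ is nonnegative and, by hypothesis, has magnitude strictly less than $1/N$, the $j$th column sum obeys $\sum_{i=1}^p a_{ij}=\sum_{i:\,a_{ij}\ne 0}a_{ij}<n_j/N\le 1$ (with sum $0<1$ when $n_j=0$). As this holds for every $j$, on $\mathcal{E}$ we get $\max_{1\le j\le p}\sum_{i=1}^p a_{ij}<1$, i.e., $\g{\bar{A}}$ is column-substochastic there; this is the first assertion. (One should note in passing that this magnitude bound is compatible with the requirement $a_{ij}\in[a_{min},1)$ only when $a_{min}<1/N$, so that the statement is non-vacuous.)

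Finally, on $\mathcal{E}$ the column-substochasticity hypothesis of Theorem~\ref{thm:BARrwMxing1} is met, so that theorem yields $t_{mix,rw}(\theta)\le\lceil \tfrac{p}{1-\max_{j}\sum_i a_{ij}}(\log p+\log(1/\theta))\rceil$ and the analogous bound with an extra factor $1/(1-\varphi)$ for the lazy walk. Since $1-\max_j\sum_i a_{ij}$ is a fixed positive number — the hidden constant is permitted to depend on the BAR parameters, exactly as recorded in Theorem~\ref{thm:BARMixing} — both right-hand sides are $O(p\log p)$, and these bounds hold on $\mathcal{E}$, hence with probability at least $1-1/p^{2c-1}$; this is the second assertion.

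There is no substantive obstacle here beyond bookkeeping; the only mild subtlety is that the mixing-time constant obtained this way depends on the realized matrix $\g{\bar{A}}$ through $1/(1-\max_j\sum_i a_{ij})$ rather than being a single universal constant. If a realization-independent constant is wanted, one would impose the slightly stronger constraint that each entry be bounded by $(1-\delta)/N$ for a fixed $\delta\in(0,1)$, which forces $\max_j\sum_i a_{ij}\le 1-\delta$ on $\mathcal{E}$ and yields $t_{mix,rw}(\theta)\le\lceil \tfrac{p}{\delta}(\log p+\log(1/\theta))\rceil$ and $t_{mix,lazy}(\theta)\le\lceil \tfrac{p}{\delta(1-\varphi)}(\log p+\log(1/\theta))\rceil$ with the same probability.
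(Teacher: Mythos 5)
Your proof is correct and follows essentially the same route as the paper: condition on the high-probability event from Lemma~\ref{lem:BARrwMxing2}, bound each column sum by (number of nonzeros)$\times$(maximum entry)$<1$, and then apply Theorem~\ref{thm:BARrwMxing1}. Your closing remark about the realization-dependence of the constant $1/(1-\max_j\sum_i a_{ij})$ is a fair observation, but it does not affect the claimed $O(p\log p)$ conclusion as stated.
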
  
\begin{proof}
Appendix \ref{app:5}.
\end{proof}

\textbf{Remarks}: 
\begin{enumerate}
\item The interpretation of Lemma \ref{lem:BARrwMxing3} is two-fold:  either for a given $p$, each nonzero entry in $\g{\bar{A}}$ is constrained to a maximum value such that the associated BAR model is valid and column-substochasticity of $\g{\bar{A}}$ holds or for a given regime of allowed entry values, an admissible regime for $p$ is determined such that column-substochasticity of $\g{\bar{A}}$ holds with a desired probability. 
\item The BAR random walks touch upon a single node every time they make a transition. By this fact and the coupon's collector problem, we can immediately see that, with high probability as $p$ increases, we need at least $p\log p$ steps to visit and fix all possible discrepancies between $\g{x}$ and $\g{y}$. This observation yields the conclusion that, in fact, $t_{mix,rw}(\theta)=\Theta(p\log p)$ and $t_{mix,lazy}(\theta)=\Theta(p\log p)$ with high probability as $p\rightarrow \infty$.
\end{enumerate}
The above results indicate that BAR random walks on the hypercube mix rapidly under column-substochasticity of $\g{\bar{A}}$. The difference in the general BAR models (\ref{eq:BAR}) and (\ref{eq:BAR2}) is that updates of ultimately all nodes can occur at every time instant. More updates could lead to instability. However, our previous analysis in the context of Theorem \ref{thm:BARMxingMain} showed that  a general BAR chain mixes rapidly \emph{without requiring column-substochasticity of} $\g{\bar{A}}$.

\section{Numerical Examples}
\label{sec:sims} 

In this section, we present numerical examples to demonstrate the performance of the proposed structure observer. We \emph{separately} examine the performance of the supergraph selection stage \emph{only} in the ways described by the first two remarks after Alg. \ref{alg:Obs1} and the performance of the complete BARObs$(\gss{X}{0:n-1}{},d,\tau\leq a_{min}/4)$.   The horizontal axis in Figs. \ref{fig:2a},\ref{fig:2b},\ref{fig:3a},\ref{fig:3b},\ref{fig:4a},\ref{fig:4b},\ref{fig:5a},\ref{fig:5b},\ref{fig:6a}, \ref{fig:6b},\ref{fig:7a}, \ref{fig:7b}, \ref{fig:8a} and \ref{fig:8b} corresponds to the number of samples. The vertical axis in Figs.  \ref{fig:2a},\ref{fig:2b},\ref{fig:3a},\ref{fig:3b},\ref{fig:4a},\ref{fig:4b},\ref{fig:5a},\ref{fig:5b}, \ref{fig:8b} corresponds to $\mathrm{P}(\hat{\mathcal{S}}=\mathcal{S})$ and  $\mathrm{P}((\hat{\mathcal{S}}^{+},\hat{\mathcal{S}}^{-})=(\mathcal{S}^{+},\mathcal{S}^{-}))$ or $\mathrm{P}(\hat{\mathcal{S}}_f=\mathcal{S})$ and $\mathrm{P}((\hat{\mathcal{S}}_f^{+},\hat{\mathcal{S}}_f^{-})=(\mathcal{S}^{+},\mathcal{S}^{-}))$. The vertical axis in Fig. \ref{fig:6b}  corresponds to the fraction of correctly identified edges and non-edges and in \ref{fig:6a}, \ref{fig:7a}, \ref{fig:7b} and \ref{fig:8a}  corresponds to the fraction of correctly identified edges in the BAR model selection.   Figs.  \ref{fig:2a},\ref{fig:2b},\ref{fig:3a},\ref{fig:3b},\ref{fig:4a},\ref{fig:4b},\ref{fig:5a},\ref{fig:5b}, and \ref{fig:6a} correspond to simulated data, while Figs. \ref{fig:6b}-\ref{fig:8b} correspond to pseudoreal data from a model of a biological signaling network. In all figures, we upper bound the $b_i$'s by $b_{max}=0.2$.

\subsection{Simulated Data}

We split this subsection into (i) the performance of the supergraph selection stage only via the schemes described in the first two remarks after Alg. \ref{alg:Obs1} and (ii)
the performance of the complete  BARObs$(\gss{X}{0:n-1}{},d,\tau\leq a_{min}/4)$. 

\subsubsection{Supergraph Selection Only and the First Two Remarks After Alg. \ref{alg:Obs1}}

Fig. \ref{fig:2a} assumes that $d_i=d$ for all $i\in [p]$ and corresponds to comparing
the described scheme in the second remark after Alg. \ref{alg:Obs1} with an exhaustive learning algorithm, which selects as support for $\gss{a}{i}{T}$ the $d-$sized neighborhood among all the ${p\choose d}$ neighborhoods  with the maximum directed information flow to the $i$th node. The proposed scheme has comparable performance with the aforementioned exhaustive observer, but with a significantly smaller computational complexity and comparable (and small) number of samples. In this plot, we compare only with respect  to the selection of $\mathcal{S}$. The described scheme in the second remark after Alg. \ref{alg:Obs1} can further discriminate between $\hat{\mathcal{S}}^{+}(i)$ and $\hat{\mathcal{S}}^{-}(i)$. This is verified in Fig. \ref{fig:2b} for a different setup. In Fig. \ref{fig:2b}, a single $\g{A}$ satisfying the BAR Identifiability Condition has been selected for all $n$ and $d_i=d$ for all $i\in [p]$ again. Moreover, $p=30,d=9, a_{min}=0.1, b_{min}=0.1$ and $\rho_w=0.5$.  We note here that in practice there might be instances where the selection of $\mathcal{S}$ is errorless, but the discrimination of $\mathcal{S}^{+}(i)$ and $\mathcal{S}^{-}(i)$ is erroneous for some of the rows.

Remaining in the described scheme in the second remark after Alg. \ref{alg:Obs1}, Fig. \ref{fig:3a} presents the scenario where all nodes have \emph{different} but a priori known in-degrees $d_i$ and all  the in-degrees are upper bounded by $d$. To implicitly quantify what fraction of BAR chains with $p=30, d=3, a_{min}=0.1$, $b_{min}=0.1$ and $\rho_w=0.5$ ($b_{max}=0.2$)  satisfy the BAR Identifiability Condition or tends to create a larger amount of helpful than harmful correlations, we vary $\g{A}$ per run. We observe that approximately $80\% $ of the generated chains fall into this category leading to complete recovery with approximately $2000$ samples. Fig. \ref{fig:3b} shows that for a chain in this scenario, recovery can be achieved even with approximately $1000$ samples.  Additionally, since $\mathrm{P}(\hat{\mathcal{S}}=\mathcal{S})$ or $\mathrm{P}(\hat{\mathcal{S}}\neq\mathcal{S})$ are hard metrics in the sense that they correspond to the correct selection of $pd$ parameters in $\mathcal{S}$, it is of interest to examine how many edges and orientations are correctly selected 
with an increasing number of samples, even when $\hat{\mathcal{S}}\neq \mathcal{S}$. To this end, Fig. \ref{fig:6a} corresponds to the scenario $p=1000, d=10, b_{min}=0.05, \rho_{w}=0.5$ and all nodes have the same in-degree $d_i=d$. Clearly,  a very large part of the true network is correctly selected fairly quickly. Note that the values of $p$ and $d$ in this scenario could be prohibitive for many previously studied model order selection methods. In this plot, ``Fraction of Correct Edges'' refers to what fraction of the true edges \emph{only} the scheme identifies. Knowledge of the in-degrees and 
the fact that $d\ll p$ imply that the non-edges are also identified with very high probability. This is demonstrated in Fig. \ref{fig:6b}, which we describe later on.     

Moving now to  the described scheme in the first remark after Alg. \ref{alg:Obs1}, Fig. \ref{fig:4a} presents the scenario where all nodes have different and unknown in-degrees $d_i$ and all the in-degrees are upper bounded by $d$. In this case, the described scheme in the first remark after Alg. \ref{alg:Obs1} will return a supergraph of the actual graph. Therefore, the $y-$axis corresponds to 
probability of correct recovery of a supergraph of the actual graph.   To implicitly quantify what fraction of chains with $p=30, a_{min}=0.1, b_{min}=0.1, \rho_w=0.5$ and in-degree overestimate $d=3$ satisfies the BAR identifiability condition or tends to create a larger amount of helpful than harmful correlations, we vary $\g{A}$ per run. We observe that this plot does not differ significantly from Fig. \ref{fig:3a} as expected, since the upper bound $d$ allows in many cases rows with exactly $d$ nonzero entries. Furthermore, Fig. \ref{fig:4b} shows that for a matrix in the same scenario, recovery can be achieved with approximately $1000$ samples. In practice, the minimum number of samples turns out to actually be $600$. Comparing these results with Fig. \ref{fig:3b}, we can see that the outcome is in agreement with the fact that the described scheme in the first remark after Alg. \ref{alg:Obs1} generates a supergraph of the actual graph, hence requiring less samples for complete recovery.

\subsubsection{Complete BARObs$(\gss{X}{0:n-1}{},d,\tau\leq a_{min}/4)$}
 
Fig. \ref{fig:5a} presents the performance of BARObs in a scenario where all nodes have different and unknown in-degrees $d_i$ and all of the in-degrees are upper bounded by $d$. In this case, the supergraph selection stage will return a supergraph of the actual graph and then the supergraph trimming stage will refine this estimate by estimating the in-degrees and the actual neighborhoods of each node. To implicitly quantify what fraction of chains with $p=30, d=3, a_{min}=0.1, b_{min}=0.1$ and $\rho_w=0.5$ satisfy the BAR Identifiability Condition or tends to create a larger amount of helpful than harmful correlations and what is the correspondence with respect to the BARObs in samples for perfect detection, we vary $\g{A}$ per run.  Furthermore, Fig. \ref{fig:5b} shows that for a matrix in the same scenario, recovery can be achieved with approximately $14000$ samples. This is the price paid for the unknown $d_i$'s that have to be estimated.

\subsection{ Data from a Biological Signaling Network}

We use a publicly available model for an abscisic acid signaling network consisting of $43$ nodes provided in \cite{js08}. Each node is linked to a boolean rule  defined by a subset of nodes in the network. We approximate the network in the spirit of BAR models. The approximation is not exactly the described BAR model but an appropriate version of the model for the boolean network at hand.  We generate stochastic pseudodynamical data by adding binary/boolean noise to the produced time series. In Fig. \ref{fig:6b}, we plot the output of the algorithm in the second remark after Alg. \ref{alg:Obs1}. We observe that the supergraph selection only stage can identify almost $98\%$ of the true underlying network using only $800$ samples. Of course, the curve here is biased since we assume that we know exactly the in-degrees in this plot, hence the non-edges are correctly identified with high probability since the in-degrees are small. Moreover, the performance of this algorithm when we are only interested in the fraction of the actual edges correctly identified is depicted in Fig. \ref{fig:7a}. We also examine the performance of the algorithm described in the first remark after Alg. \ref{alg:Obs1}. Here, the goal is to evaluate what fraction of the actual edges we pick by a known overestimate $d$. In Fig.\ref{fig:7b}, $d=5$ and in Fig. \ref{fig:8a}, $d=10$. We clearly see the improvement by using a larger $d$. Finally, based on the same publicly available model for an abscisic acid signaling network, we create a noisy version of an AND/OR only network with $p=43$ and $d_i=2$ for all $i\in [p]$. We apply 
BAROBs and as Fig. \ref{fig:8b} shows, we can identify the true network with probability $0.8$ when we have approximately $11000$ time-series samples and with probability almost $1$ when we have $19000$ samples.     

\begin{figure}[t!]
    \centering
    \begin{subfigure}[t]{0.5\textwidth}
        \centering
        \includegraphics[height=2.2in]{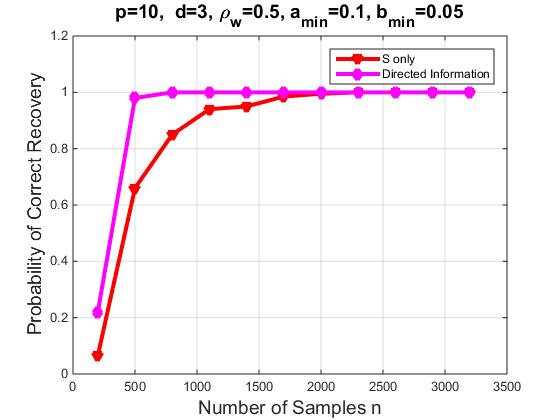}
            \captionsetup{justification=centering}
        \caption{$p=10, d=3, a_{min}=0.1, b_{min}=0.05, \rho_{w}=0.5$.}
        \label{fig:2a}
    \end{subfigure}%
    ~ 
    \begin{subfigure}[t]{0.5\textwidth}
        \centering
        \includegraphics[height=2.2in]{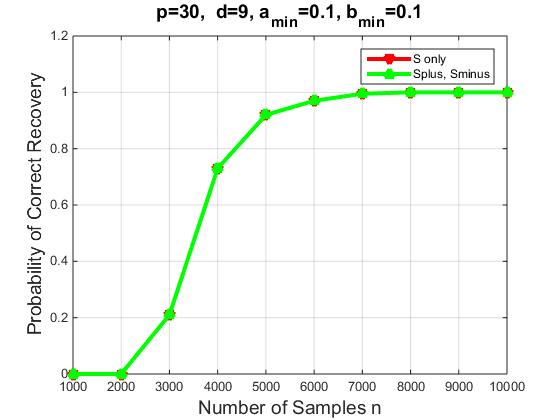}
            \captionsetup{justification=centering}
        \caption{$p=30, d=9, a_{min}=0.1, b_{min}=0.1, \rho_{w}=0.5$.}
        \label{fig:2b}
    \end{subfigure}%
     \caption{(a) Comparison of the scheme in the second remark after Alg. \ref{alg:Obs1} with an exhaustive observer based on directed information. (b)  The scheme in the second remark after Alg. \ref{alg:Obs1} for simulated data with a \emph{single} $\g{A}$.
     \emph{Same} and \emph{a priori known} $d_i=d$ for all nodes in both figures.}
\end{figure}

\begin{figure}[t!]
    \centering
    \begin{subfigure}[t]{0.5\textwidth}
        \centering
        \includegraphics[height=2.2in]{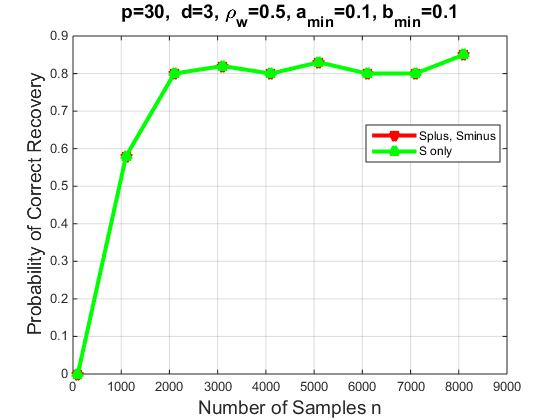}
            \captionsetup{justification=centering}
        \caption{$p=30, d=3, a_{min}=0.1, b_{min}=0.1, \rho_{w}=0.5$.}
        \label{fig:3a}
    \end{subfigure}%
    ~ 
    \begin{subfigure}[t]{0.5\textwidth}
        \centering
        \includegraphics[height=2.2in]{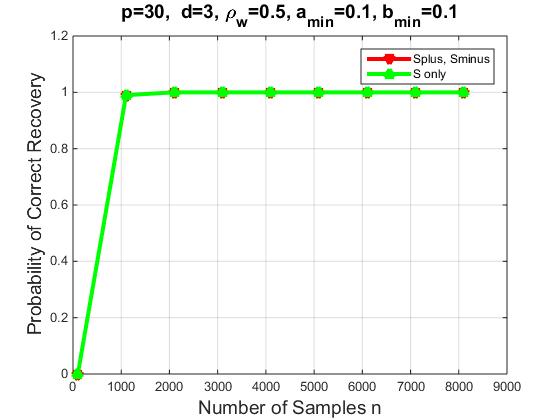}
            \captionsetup{justification=centering}
        \caption{$p=30, d=3, a_{min}=0.1, b_{min}=0.1, \rho_{w}=0.5$.}
        \label{fig:3b}
    \end{subfigure}%
     \caption{(a) The scheme in the second remark after Alg. \ref{alg:Obs1} for simulated data with \emph{multiple} $\g{A}$'s. (b) The scheme in the second remark after Alg. \ref{alg:Obs1} for simulated data with a \emph{single} $\g{A}$.
     \emph{Different} and \emph{a priori known} $d_i$'s for all nodes in both figures.}
\end{figure}

\begin{figure}[t!]
    \centering
    \begin{subfigure}[t]{0.5\textwidth}
        \centering
        \includegraphics[height=2.2in]{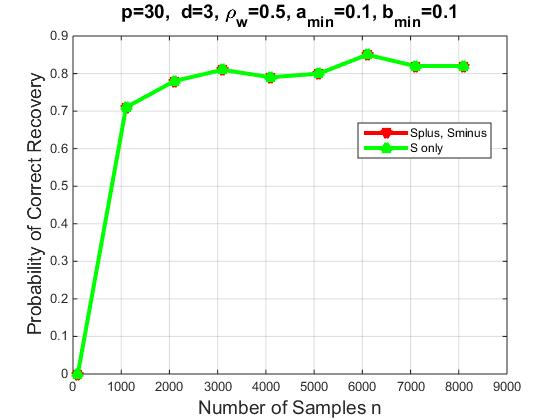}
            \captionsetup{justification=centering}
        \caption{$p=30, d=3, a_{min}=0.1, b_{min}=0.1, \rho_{w}=0.5$.}
        \label{fig:4a}
    \end{subfigure}%
    ~ 
    \begin{subfigure}[t]{0.5\textwidth}
        \centering
        \includegraphics[height=2.2in]{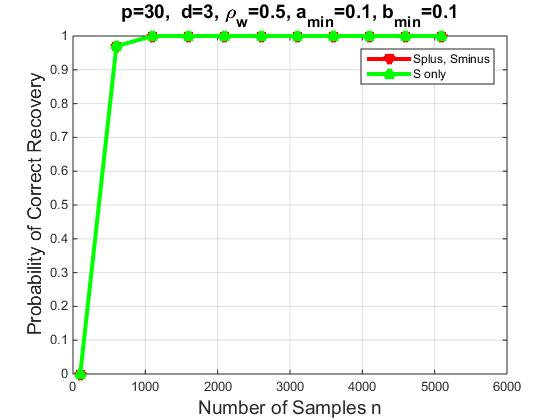}
            \captionsetup{justification=centering}
        \caption{$p=30, d=3, a_{min}=0.1, b_{min}=0.1, \rho_{w}=0.5$.}
        \label{fig:4b}
    \end{subfigure}%
     \caption{(a) The scheme in the first remark after Alg. \ref{alg:Obs1} for simulated data with \emph{multiple} $\g{A}$'s. (b) The scheme in the first remark after Alg. \ref{alg:Obs1} for simulated data with a \emph{single} $\g{A}$.
     \emph{Different} $d_i$'s for all nodes in both figures. Known overestimate $d$.}
\end{figure}

\begin{figure}[t!]
    \centering
    \begin{subfigure}[t]{0.5\textwidth}
        \centering
        \includegraphics[height=2.2in]{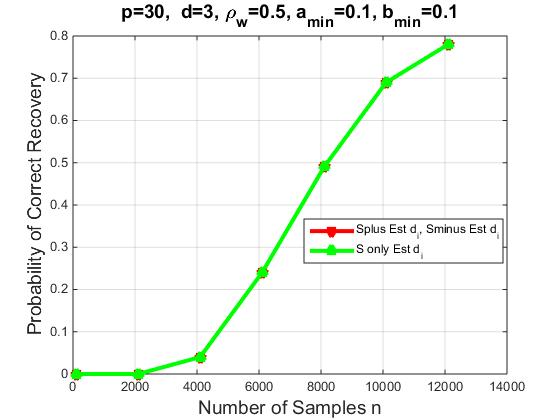}
            \captionsetup{justification=centering}
        \caption{$p=30, d=3, a_{min}=0.1, b_{min}=0.1, \rho_{w}=0.5$.}
        \label{fig:5a}
    \end{subfigure}%
    ~ 
    \begin{subfigure}[t]{0.5\textwidth}
        \centering
        \includegraphics[height=2.2in]{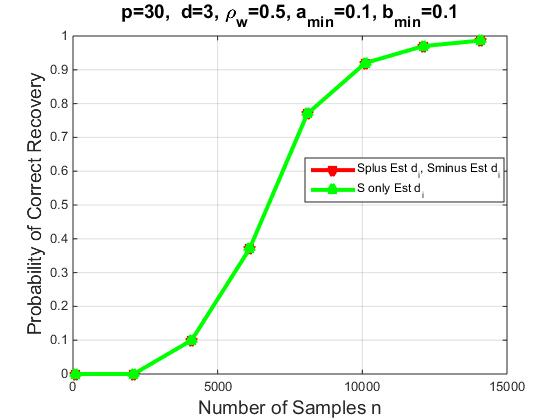}
            \captionsetup{justification=centering}
        \caption{$p=30, d=3, a_{min}=0.1, b_{min}=0.1, \rho_{w}=0.5$.}
        \label{fig:5b}
    \end{subfigure}%
     \caption{(a) BARObs for simulated data with \emph{multiple} $\g{A}$'s. (b) BARObs for simulated data with a \emph{single} $\g{A}$.
     \emph{Different} and \emph{unknown} $d_i$'s for all nodes in both figures.}
\end{figure}

\begin{figure}[t!]
    \centering
    \begin{subfigure}[t]{0.5\textwidth}
        \centering
        \includegraphics[height=2.2in]{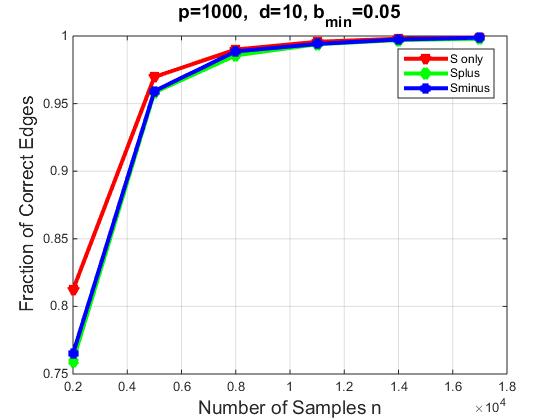}
            \captionsetup{justification=centering}
        \caption{$p=1000, d=10, b_{min}=0.05, \rho_{w}=0.5$.}
        \label{fig:6a}
    \end{subfigure}%
    ~ 
    \begin{subfigure}[t]{0.5\textwidth}
        \centering
        \includegraphics[height=2.2in]{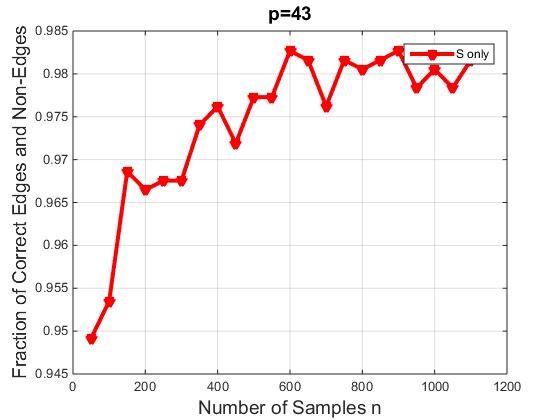}
            \captionsetup{justification=centering}
        \caption{$p=43$.}
        \label{fig:6b}
    \end{subfigure}%
    \caption{(a) The scheme in the second remark after Alg. \ref{alg:Obs1}:  large network.  \emph{Same} and \emph{a priori known} $d_i=d$ for all nodes. (b) The scheme in the second remark after Alg. \ref{alg:Obs1}: Biological signaling network. \emph{A priori known} $d_i$'s.}
   
\end{figure}

\begin{figure}[t!]
    \centering
    \begin{subfigure}[t]{0.5\textwidth}
        \centering
        \includegraphics[height=2.2in]{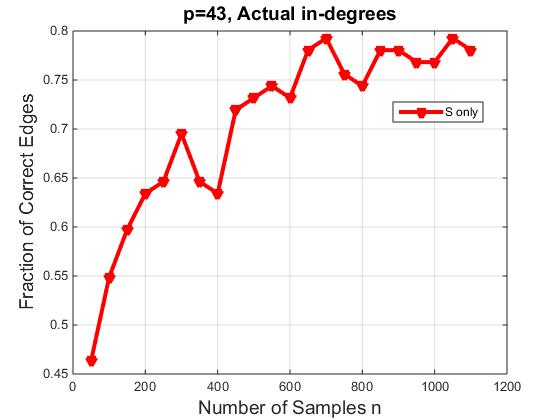}
            \captionsetup{justification=centering}
        \caption{$p=43, d_i\leq 5$.}
        \label{fig:7a}
    \end{subfigure}%
    ~ 
    \begin{subfigure}[t]{0.5\textwidth}
        \centering
        \includegraphics[height=2.2in]{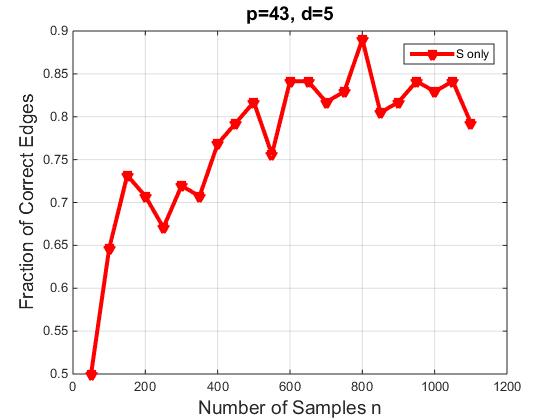}
            \captionsetup{justification=centering}
        \caption{$p=43, d_i\leq 5, d=5$.}
        \label{fig:7b}
    \end{subfigure}%
    \caption{(a)  The scheme in the second remark after Alg. \ref{alg:Obs1}: Biological signaling network. \emph{A priori known} $d_i$'s. (b)  The scheme in the first remark after Alg. \ref{alg:Obs1}: Biological signaling network. Overestimate $d_i\leq d=5$.}
   
\end{figure}

\begin{figure}[t!]
    \centering
    \begin{subfigure}[t]{0.5\textwidth}
        \centering
        \includegraphics[height=2.2in]{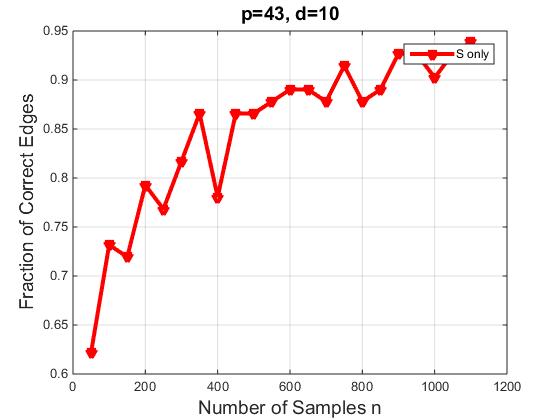}
            \captionsetup{justification=centering}
        \caption{$p=43, d_i\leq 5, d=10$.}
        \label{fig:8a}
    \end{subfigure}%
    ~ 
    \begin{subfigure}[t]{0.5\textwidth}
        \centering
        \includegraphics[height=2.2in]{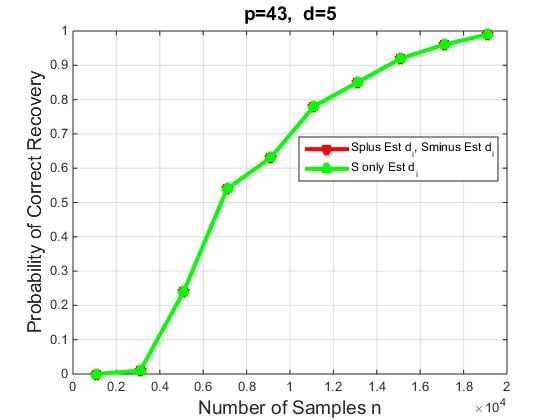}
            \captionsetup{justification=centering}
        \caption{$p=43, d_i=2, d=5$.}
        \label{fig:8b}
    \end{subfigure}%
    \caption{(a)  The scheme in the first remark after Alg. \ref{alg:Obs1}: Biological signaling network. Overestimate $d_i\leq d=10$. (b) AND/OR biological signaling network with $d_i=2$ for all $i\in [p]$. Application of BARObs with $d=5$.}
   
\end{figure}

\section{Conclusions}
\label{sec:concl}

In this paper, a novel model, called BAR, was introduced as an alternative to the description of binary vector random processes with autoregressive dynamics. 
BAR processes can be used to model opinion dynamics over social networks, voter processes, epidemics, interactions among stocks in financial markets and among genes or chemical elements in biological networks. We proved that the general BAR model mixes rapidly. We also showed that two random walk versions of this model mix 
rapidly under some conditions. Furthermore, we provided a low-complexity algorithm that can be used to identify the structure of the BAR network based on time-series data. This structure estimator was shown to be nearly order-optimal in sample complexity, adding new attractive features to the proposed BAR model.

\newpage

\appendix

\section{}
\label{app:2}
\subsection*{Proof of Lemma \ref{lem:nuij}}

$H_{ij,1}(\g{p})$ and $H_{ij,0}(\g{p}')$ depend on $\log P(X_i^{+1}=x_i|X_j=1)$ and $\log P(X_i^{+1}=x_i|X_j=0)$, respectively. To ease the notation, 
we will write $p_{x_i}$ for $P(X_i^{+1}=x_i|X_j=1)$ and $p'_{x_i}$ for $P(X_i^{+1}=x_i|X_j=0)$ in the remaining of this proof. We will further assume that 
any entropy is expressed using the natural logarithm, hence it is measured in nats.

For $|x|<1$, we have the following well-known Taylor series expansion:
\[
\log(1-x)=-\sum_{n=1}^{\infty}\frac{x^n}{n}.
\]
Therefore, 
\[
\log p_{x_i}=\log(1-(1-p_{x_i}))=-\sum_{n=1}^{\infty}\frac{(1-p_{x_i})^n}{n}=-\sum_{n=1}^{\infty}\frac{1}{n}\sum_{k=0}^{n}{n\choose k}(-1)^{n-k}p_{x_i}^{n-k}
\]
and a similar expression holds for $\log p'_{x_i}$. We can now see that 
\begin{align*}
H_{ij,1}(\g{p})-H_{ij,0}(\g{p}')&=\sum_{x_i\in \{0,1\}}\sum_{n=1}^{\infty}\frac{1}{n}\sum_{k=0}^{n}{n\choose k}(-1)^{n-k}(p_{x_i}^{n-k+1}-p_{x_i}^{'n-k+1})\\&=\sum_{x_i\in \{0,1\}}(p_{x_i}-p'_{x_i})\sum_{n=1}^{\infty}\frac{1}{n}\sum_{k=0}^{n}{n\choose k}(-1)^{n-k}\sum_{l=0}^{n-k}p_{x_i}^{n-k-l}p_{x_i}^{'l}\\&=\nu_{i|j}\sum_{x_i\in \{0,1\}}(-1)^{x_i-1}\sum_{n=1}^{\infty}\frac{1}{n}\sum_{k=0}^{n}{n\choose k}(-1)^{n-k}\sum_{l=0}^{n-k}p_{x_i}^{n-k-l}p_{x_i}^{'l},
\end{align*}
where in the last equality we have used the observation that $\nu_{i|j}=p_{x_i=1}-p'_{x_i=1}$ and $p_{x_i=0}-p'_{x_i=0}=1-p_{x_i=1}-1+p'_{x_i=1}=-\nu_{i|j}$. Therefore, $H_{ij,1}(\g{p})-H_{ij,0}(\g{p}')$ depends critically on $\nu_{i|j}$.

\section{}
\label{app:1}
\subsection*{Proof of Lemma \ref{lem:Marg1}}

Given any vector norm $\|\cdot\|$, the induced operator norm for an $m\times n$ matrix $\g{K}$ is defined as \cite{hj85}:
\[
\|\g{K}\|=\sup\left\{\|\g{Kx}\|: \g{x}\in \mathcal{K}^{n}\: \text{with}\: \|\g{x}\|=1\right\},
\]
where $\mathcal{K}$ can be either $\mathbb{R}$ or $\mathbb{C}$.
On the other hand, if $m=n=p$, the spectral radius of $\g{K}$  is defined as:
\[
\rho(\g{K})=\max\{|\lambda_1|,|\lambda_2|,\ldots,|\lambda_p|\},
\]
where $\lambda_1,\lambda_2,\ldots, \lambda_p$ are the eigenvalues of $\g{K}$ and $|\cdot|$ is the complex modulus.
For any $r\in \mathbb{Z}_{>0}$ it holds $\|\g{K}^r\|^{1/r}\geq \rho(\g{K})$, while Gelfand's formula states that for any matrix norm \cite{hj85}:
\[
\rho(\g{K})=\lim_{r\rightarrow \infty}\|\g{K}^r\|^{1/r}.
\]
When $\g{K}$ is symmetric or Hermitian, $\rho(\g{K})=\|\g{K}\|_2$. In general, $\|\g{K}\|\geq \rho(\g{K})$ for any matrix norm $\|\cdot\|$. 

For easiness, we will work with the Euclidean vector norm and the associated induced matrix norm, which is the \emph{spectral norm}:
\begin{equation}\label{eq:spectralNorm}
\|\g{K}\|_2=\sigma_{max}(\g{K})=\sqrt{\lambda_{max}(\gss{K}{}{H}\g{K})}.
\end{equation}
Here, $\sigma_{max}(\cdot)$ and $\lambda_{max}(\cdot)$ correspond to the maximum singular value and the maximum eigenvalue of a matrix, respectively. Moreover, $\cdot^H$ stands for Hermitian transposition, which is $\cdot^T$ for real $\g{K}$'s. Working in the following with $\|\g{\hat{A}}^r\|_2$ and
$\left\|\g{\bar{A}}^r\right\|_2$ for any $r\in \mathbb{Z}_{>0}$, (\ref{eq:spectralNorm}) implies that the corresponding maximizing vectors for $\|\g{\hat{A}}^r\|_2$ and $\left\|\g{\bar{A}}^r\right\|_2$ correspond to the top eigenvectors of $\gss{\hat{A}}{}{rT}\g{\hat{A}}^r$ and $\gss{\bar{A}}{}{rT}\g{\bar{A}}^r$, which are real vectors by basic facts in linear algebra. 

In order to prove our claim, it is sufficient to show that $\|\g{\hat{A}}^r\|_2\leq \|\g{\bar{A}}^r\|_2$ for any $r\in \mathbb{Z}_{>0}$. Consider first the case $r=1$. Since 
$\g{\bar{A}}$ has non-negative entries, it is clear that $\|\g{\bar{A}}\|_2$ is maximized for $\g{x}$ with $\|\g{x}\|_2=1$ such that either $x_i\geq 0$ for all $i\in [p]$ or $x_i\leq 0$ for all $i\in [p]$. Clearly, for any such $\g{x}$, we have: $\|\g{\hat{A}}\g{x}\|_2\leq \|\g{\bar{A}}\g{x}\|_2$. Consider now a $\g{\hat{x}}$ with $\|\g{\hat{x}}\|_2=1$ such that 
$\|\g{\hat{A}}\g{\hat{x}}\|_2$ is maximized. If this $\g{\hat{x}}$ has only non-negative or only non-positive entries, then $\|\g{\hat{A}}\|_2\leq \|\g{\bar{A}}\|_2$. On the other hand, if  $\g{\hat{x}}$ contains some positive and some negative entries, then by inverting the polarities to either the positive or the negative entries, we end up with a $\g{x}$ such that $\|\g{\hat{x}}\|_2=\|\g{x}\|_2$ and $\|\g{\hat{A}\hat{x}}\|_2\leq \|\g{\bar{A}}\g{x}\|_2$; thus again $\|\g{\hat{A}}\|_2\leq \|\g{\bar{A}}\|_2$. Note here, that the existence of non-positive and non-negative entries in $\g{\hat{x}}$ leads 
to possible lower entries in magnitude in $\g{\hat{A}\hat{x}}$ for rows containing only non-negative or only non-positive entries. The corresponding entries in $\g{\bar{A}}\g{x}$ will be larger in magnitude, which verifies our claim.

We can also extend the above argument in the complex field (although this extension is not required for the proof, but only for reasons of mathematical completeness). Consider $\g{\hat{A}\hat{x}}$ for any $\g{\hat{x}}\in \mathbb{C}^p$ such that $\|\g{\hat{x}}\|_2=1$. Then, we can form $\g{x}$ based on $\g{\hat{x}}$ such that all real parts are co-signed and all imaginary parts are also co-signed (the signs in the real and imaginary parts can be different, e.g., all real parts can be positive and all imaginary parts can be negative). Then, $\|\g{\hat{x}}\|_2=\|\g{x}\|_2$ and $\|\g{\hat{A}\hat{x}}\|_2\leq \|\g{\bar{A}}\g{x}\|_2$, which demonstrates the extension of the above argument in the complex field.  

We now note that $\g{\bar{A}}^r$ contains non-negative  entries for any $r\in \mathbb{Z}_{>0}$. On the other hand any entry in $\g{\hat{A}}^r$ that does not coincide with the corresponding entry in $\g{\bar{A}}^r$, will be either negative  and of the same magnitude, or negative and of smaller magnitude or positive and of smaller magnitude (i.e., the existence of negative elements in $\g{\hat{A}}$ tend to have a contracting effect on the entries of $\gss{\hat{A}}{}{r}$ as $r$ increases). Thus, by the same reasoning, $\|\g{\hat{A}}^r\|_2\leq \|\g{\bar{A}}^r\|_2$ for any $r\in \mathbb{Z}_{>0}$. Using now Gelfand's formula and taking the limit in both sides, we obtain:
\[
\rho\left(\g{\hat{A}}\right)=\lim_{r\rightarrow \infty}\|\g{\hat{A}}^r\|_2^{1/r}\leq \lim_{r\rightarrow \infty}\|\g{\bar{A}}^r\|_2^{1/r}=\rho(\g{\bar{A}}).
\]

Finally, we note that $\g{\bar{A}}$ is a substochastic matrix and a straightforward application of Gershgorin's Theorem yields that  $\rho(\g{\bar{A}})<1$.

\subsection*{Proof of Lemma \ref{lem:Marg2}}
By our assumptions:
\begin{align*}
\g{p}=\g{\bar{A}}\g{p}+\rho_w\g{B}\g{1}.
\end{align*}
Moreover,
\begin{align*}
\g{\bar{A}}\g{1}+\g{B}\g{1}=\g{1}.
\end{align*}
Using the stability of $\g{\bar{A}}$, we have:
\begin{align*}
&\g{p}=\rho_w\left(\g{I}-\g{\bar{A}}\right)^{-1}\g{B}\g{1}=\rho_w\left(\g{I}-\g{\bar{A}}\right)^{-1}\g{1}-\\&
\rho_w\left(\g{I}-\g{\bar{A}}\right)^{-1}\g{\bar{A}}\g{1}=\rho_w\sum_{l=0}^{\infty}\g{\bar{A}}^l\g{1}-\rho_w\sum_{l=1}^{\infty}\g{\bar{A}}^l\g{1}=\rho_w\g{1},
\end{align*}
where we have used von Neumann's series formula \cite{hj91}.
\section{}
\label{app:3}
\subsection*{Proof of Corollary \ref{cor:Corr1}}

Let $j\in \mathcal{S}^{+}(m)$. By an elementary argument,  
\begin{align*}
&E[f_m(X_j)|X_{i}=1]-E[f_m(X_j)|X_{i}=0]=P(X_j=1|X_i=1)-\\
&P(X_j=1|X_i=0)=(1-P(X_j=0|X_i=1))-\\
&(1-P(X_j=0|X_i=0))=P(X_j=0|X_i=0)-\\&P(X_j=0|X_i=1).
\end{align*}

To see that 
\begin{align*}
&E[X_j|X_{i}=1]-E[X_j|X_{i}=0]\neq\\
&E[X_i|X_{j}=1]-E[X_i|X_{j}=0],
\end{align*}
we give the following counterexample. Consider the joint measure: 
\begin{align*}
&P(X_i=1,X_j=1)=0.4,\\
&P(X_i=1,X_j=0)=0.2,\\
&P(X_i=0,X_j=1)=0.1,\\
&P(X_i=0,X_j=0)=0.3.
\end{align*}
A straightforward calculation shows that 
\begin{align*}
&\frac{5}{12}=E[X_j|X_{i}=1]-E[X_j|X_{i}=0]\neq\\
&E[X_i|X_{j}=1]-E[X_i|X_{j}=0]=\frac{2}{5}.
\end{align*}

Finally, for $j\in \mathcal{S}^{-}(m)$: 
\begin{align*}
&E[f_m(X_j)|X_{i}=1]-E[f_m(X_j)|X_{i}=0]=P(X_j=0|X_i=1)-\\
&P(X_j=0|X_i=0)=(1-P(X_j=1|X_i=1))-\\
&(1-P(X_j=1|X_i=0))=P(X_j=1|X_i=0)-\\&P(X_j=1|X_i=1).
\end{align*}

\section{}
\label{app:8}
\subsection*{Proof of Lemma \ref{lem:FanoBAR}}

We would like to lower bound the number of required samples such that the worst case $\mathcal{R}_{*}(\psi)\leq \epsilon$. Clearly, the number of required samples should be at least the number of required samples  for $\mathcal{R}_{\sbm{a},\sbm{b}}(\psi)\leq \epsilon$ for any valid choice of $(\g{a},\g{b})$ and $\mathcal{S}$. To this end, fixing $(\g{a},\g{b})$ and assuming that each $\mathcal{S}(i)$ with cardinality $d_i$ is drawn independently and uniformly at random, Fano's inequality implies that 
\[
\mathcal{R}_{\sbm{a},\sbm{b}}\left(\psi\right)\geq \frac{H\left(\mathcal{S}|\g{X}_{0:n-1}\right)-1}{\log {p\choose d_1}{p\choose d_2}\cdots{p\choose d_p}}.
\]
Consider the mutual information:
\begin{align*}
I\left(\mathcal{S};\g{X}_{0:n-1}\right)=H(\mathcal{S})-H\left(\mathcal{S}|\g{X}_{0:n-1}\right).
\end{align*}
Replacing the LHS with 
\begin{align*}
&H\left(\g{X}_{0:n-1}\right)-H\left(\g{X}_{0:n-1}|\mathcal{S}\right)
\end{align*}
we obtain:
\begin{align*}
H\left(\mathcal{S}|\g{X}_{0:n-1}\right)&=\log {p\choose d_1}{p\choose d_2}\cdots{p\choose d_p}-H\left(\g{X}_{0:n-1}\right)+H\left(\g{X}_{0:n-1}|\mathcal{S}\right)\\ &\geq \log {p\choose d_1}{p\choose d_2}\cdots{p\choose d_p}-pn,
\end{align*}
where for the $pn$ term we have used the bound $H\left(\g{X}_{0:n-1}\right)\leq \sum_{i=0}^{n-1}H\left(\g{X}_{i}\right)\leq n\log2^{p}=pn$.
Thus,
\[
\mathcal{R}_{\sbm{a},\sbm{b}}\left(\psi\right)\geq \frac{\log {p\choose d_1}{p\choose d_2}\cdots{p\choose d_p}-np-1}{\log {p\choose d_1}{p\choose d_2}\cdots{p\choose d_p}}.
\]
Using now the fact that $\mathcal{R}_{*}(\psi)\geq \mathcal{R}_{\sbm{a},\sbm{b}}\left(\psi\right)$ and requiring that $\mathcal{R}_{*}(\psi)\leq \epsilon$, we obtain:
\[
n\geq \frac{(1-\epsilon)}{p}\sum_{i=1}^p\log {p\choose d_i},
\]
where the $-1/p$ term has been neglected.

\section{}
\label{app:5}

\subsection*{Proof of Lemma \ref{lem:BARrwMxing2}}

Assume that we populate randomly each row of $\g{\bar{A}}$ with $d_i, i\in [p]$ entries. Let's describe the procedure: 
\begin{enumerate}
\item For each row, we pick independently one of the ${p\choose d_i}$ binary vectors with $d_i$ ones. This is the support of the respective row.
\item At the locations of the $1$'s, we place the nonzero entries of this row of $\g{\bar{A}}$. 
\end{enumerate}
Clearly, the support of the $i$th row is selected with probability 
\[
p_r(i)=\frac{1}{{p\choose d_i}}.
\]

It is now easy to see, that each location in the $i$th $1\times p$ row vector has probability 
\[
p_e(i)=\frac{{p-1\choose d_i-1}}{{p\choose d_i}}=\frac{d_i}{p}
\]
of being selected. The numerator occurs if we fix a $1$ to the location of interest and we distribute the rest of the $d_i-1$ ones to the remaining 
$p-1$ positions at random. 

Consider now the matrix $\mathrm{sgn}(\g{\bar{A}})$, where $\mathrm{sgn}(x)=x/|x|$ for $x\neq 0$ and $0$ otherwise. Thus, $\mathrm{sgn}(\g{\bar{A}})$ is the support matrix of $\g{\bar{A}}$ with $1$'s at the nonzero locations and zeros elsewhere. Let $S_1,S_2,\ldots, S_p$ be the column sums of this matrix. Each sum contains independent $\mathrm{Ber}(d_i/p)$ random variables as summands.  By Hoeffding's inequality:
\begin{align*}
P\left(S_l\geq t+E[S_l]\right)\leq \exp\left(-2t^2/p\right),
\end{align*}
or
\begin{align*}
P\left(S_l\geq t+\sum_{i=1}^p\frac{d_i}{p}\right)\leq \exp\left(-2t^2/p\right),
\end{align*}
Choosing $t=\sqrt{cp\log p}$ for some positive constant $c$, we obtain:

\begin{align*}
P\left(S_l\geq \sqrt{cp\log p}+\sum_{i=1}^p\frac{d_i}{p}\right)\leq \frac{1}{p^{2c}}.
\end{align*}
Using now the union bound, we have:
\begin{align*}
P\left(\max_{1\leq l\leq p}S_l\geq \sqrt{cp\log p}+\sum_{i=1}^p\frac{d_i}{p}\right)\leq \sum_{l=1}^{p}P\left(S_l\geq \sqrt{cp\log p}+\sum_{i=1}^p\frac{d_i}{p}\right)\leq p\frac{1}{p^{2c}}=\frac{1}{p^{2c-1}},
\end{align*}
which goes to $0$ as $p\rightarrow \infty$ for any $c>1/2$. Thus, with probability at least $1-1/p^{2c-1}$, 
\[
\max_{1\leq l\leq p}S_l< \sqrt{cp\log p}+\sum_{i=1}^p\frac{d_i}{p}.
\]

\subsection*{Proof of Lemma \ref{lem:BARrwMxing3}}

Assume that we allow each non-zero entry of $\g{\bar{A}}$ to be in the interval $[a_{min},\bar{a}]$. Then, with probability at least $1-1/p^{2c-1}$
\[
\max_{1\leq j\leq p}\sum_{i=1}^{p}a_{ij}\leq \left(\sum_{i=1}^p\frac{d_i}{p}+\sqrt{cp\log p}\right)\bar{a}.
\]
Requiring the rightmost handside to be $<1$, we obtain that 
\[
\bar{a}<\frac{1}{\left(\sum_{i=1}^p\frac{d_i}{p}+\sqrt{cp\log p}\right)}.
\]
The rest of the claims follow from Theorem \ref{thm:BARrwMxing1}.

\vskip 0.2in
\bibliography{cristian}

\end{document}